    \pgfplotsset{
        discard if not/.style 2 args={
            filter discard warning=false,
            x filter/.append code={
                \edef\tempa{\thisrow{#1}}
                \edef\tempb{#2}
                \ifx\tempa\tempb
                \else
                    
                \fi
            },
        },
    }
    \pgfplotsset{compat=1.17}
\newcommand{\R}{\mathbb{R}}
\newcommand{\Z}{\mathbb{Z}}
\newcommand{\E}{\mathbb{E}}
\newcommand{\PP}{\mathbb{P}}
\DeclareMathOperator{\argmax}{arg\, max}
\theoremstyle{plain}
\newtheorem{theorem}{Theorem}[section]
\newtheorem{proposition}[theorem]{Proposition}
\newtheorem{lemma}[theorem]{Lemma}
\newtheorem{corollary}[theorem]{Corollary}
\theoremstyle{definition}
\newtheorem{definition}[theorem]{Definition}
\theoremstyle{remark}
\newtheorem{remark}[theorem]{Remark}
\title{\textsc{LegendreTron}: Uprising Proper Multiclass Loss Learning}
\date{\vspace{-5ex}}
\author{
    Kevin H. Lam
    \thanks{School of Mathematics \& Statistics, UNSW Sydney, Australia}
    \\{\small \texttt{khflam@gmail.com}}
    \and
    Christian Walder
    \thanks{Google Research}
    \textsuperscript{,}
    \thanks{ANU College of Engineering, Computing and Cybernetics, The Australian National University, Australia}
    \\{\small \texttt{cwalder@google.com}}
    \and
    Spiridon Penev
    \footnotemark[1]
    \textsuperscript{,}
    \thanks{UNSW Data Science Hub (uDASH), UNSW Sydney, Australia}
    \\{\small \texttt{s.penev@unsw.edu.au}}
    \and
    Richard Nock
    \footnotemark[2]
    \textsuperscript{,}
    \footnotemark[3]
    \\{\small \texttt{richardnock@google.com}}
}
\begin{document}
\maketitle
\let\thefootnote\relax\footnote{This manuscript extends the ICML 2023 paper with the same name (see \citet{lam2023legendretron}).}

\begin{abstract}
Loss functions serve as the foundation of supervised learning and are often chosen prior to model development. To avoid potentially ad hoc choices of losses, statistical decision theory describes a desirable property for losses known as \emph{properness}, which asserts that Bayes' rule is optimal. Recent works have sought to \emph{learn losses} and models jointly. Existing methods do this by fitting an inverse canonical link function which monotonically maps $\R$ to $[0,1]$ to estimate probabilities for binary problems. In this paper, we extend monotonicity to maps between $\R^{C-1}$ and the projected probability simplex $\Tilde{\Delta}^{C-1}$ by using monotonicity of gradients of convex functions. We present \textsc{LegendreTron} as a novel and practical method that jointly learns \emph{proper canonical losses} and probabilities for multiclass problems. Tested on a benchmark of domains with up to 1,000 classes, our experimental results show that our method consistently outperforms the natural multiclass baseline under a $t$-test at 99\% significance on all datasets with greater than $10$ classes.
\end{abstract}

\section{Introduction}
Loss functions are a pillar of machine learning (ML). In supervised learning, a loss provides a measure of discrepancy between the underlying ground truth and a model's predictions. A learning algorithm attempts to minimise this discrepancy by adjusting the model. In other words, the loss governs how a model learns. The consequence of the bad choice of a loss is oblivious to the qualities of the learning pipeline: it means a poor model in the end. This brings forth the question: which loss is best for the problem at hand?

Statistical decision theory answers this by turning to admissible losses \citep{savage1971elicitation}; also referred to as proper losses or proper scoring rules \citep{gneiting2007proper}. Proper losses are those for which the posterior expected loss value is minimised when probability predictions coincide with the true underlying probabilities. That is, a \emph{proper} loss is one that can induce probability estimates that are \emph{admissible} or optimal. Proper losses have been extensively studied in \citet{shuford1966admissible,grunwald2004proper, reid2010binary, williamson2016multiclass}, with the latter two works extending losses to proper composite forms in binary and multiclass settings. Only a handful of proper losses, such as the square and log losses, are commonly used in ML. This is not surprising: properness is an intensional property and does not provide any candidate function. While eliciting some members is possible, extending further requires tuning or adapting the loss as part of the ML task.

There has been a recent surge of interest in doing so for supervised learning, including \citet{mei2018silvar, grabocka2019learning, streeter2019learning, liu2020unified, siahkamari2020bregman, sypherd2022tunable}. However, no connections are made to properness to formulate the losses in these works. On the other hand, several recent works have used properness to formulate losses including \citet{nock2008efficient, nock2020supervised, walder2020all,sypherd2022improper}. Notably, the works of \citet{nock2020supervised, walder2020all} have proposed algorithms to learn both the link function and linear predictor of logistic regression models by considering both functions to be unknown but learnable; thereby extending Single Index Models \citep{hardle1993sim, mei2018silvar} and algorithms to learn them \citep{kakade2011efficient}. Despite the impressive progress in these works, no references have been made to proper losses for multiclass problems.

\paragraph{Background} To approach multiclass problems in a principled manner, we generalise logistic regression as follows. For a given invertible and monotonic (see Definition \ref{definition:monotone}) link function $\psi$ that maps $[0,1]$ to $\R$ and an input-label pair $(\mathbf{x},y)$ with $\mathbf{x} \in \R^p$ and $y\in \{-1,1\}$, logistic regression learns a model of the form $\Pr(Y=1|\mathbf{x}) = \psi^{-1}(\mathbf{w}^{\top} \mathbf{x} + b)$ by fitting a coefficient vector $\mathbf{w} \in \R^p$ and an intercept $b \in \R$. A class prediction is then formed as $\hat{y} = \argmax_{y\in \{-1,1\}}\Pr(Y=y|\mathbf{x})$. The crucial element of logistic regression lies in the invertible and monotonic link function that connects probabilities to predictors. Invertibility of the link allows one to identify a unique probability estimate to associate with the predictor. Monotonicity of the link enforces an order to class predictions as elements of $\mathbf{x}$ either increase or decrease monotonically, so that the decision boundary between classes is unique. Loosely speaking, the generalisation of these ideas to multiclass problems with $C \geq 2$ classes is to form probability estimates by using a monotonic link function $\psi$ such that $\psi^{-1}(\mathbf{x}) = (p_1,p_2,\dots,p_{C-1})$ with $\sum_{k=1}^{C-1} p_k \leq 1$.

\paragraph{Motivation} In this work, our interest lies in learning proper losses for multiclass problems. Two observations highlight why this is beneficial: properness directly enforces the same ranking of classes as probabilities without building multiple models, and learned losses can provide better models for related domains. We first note that to approach a multiclass problem with $C>2$ classes, one would typically pose the problem as multiple 1-vs-rest or 1-vs-1 component problems. Each component problem consists of \emph{positive} and \emph{negative} labels where the former refers to a class of interest, and the latter refers to all other classes in 1-vs-rest or to a single other class of interest in 1-vs-1. An unfortunate consequence in the design of these reductions to binary problems is that they do not include the \textit{admissibility} constraint that probability estimates should rank classes in the same way that true probabilities do. Without loss of generality to the 1-vs-1 approach, we observe this in the following theorem.
\begin{theorem}
\label{theorem:admissibility_1_vs_rest}
Suppose we use the 1-vs-rest approach to estimate probabilities for a multiclass problem with $C>2$ classes. Then we learn models of the form 
\begin{align*}
    \Pr(\Tilde{Y}=c|\mathbf{x}) = \psi_k^{-1}(\mathbf{w}_k^{\top} \mathbf{x} + b_k)
\end{align*}
where 
$c = 
\begin{cases}
    +1 &\text{when } y = k\\
    -1 &\text{otherwise}
\end{cases}$
for $k=1,\dots,C$. Probability estimates for any class $k$ is admissible if and only if $\psi_k^{-1}(\mathbf{w}_k^{\top} \mathbf{x} + b_k) > \psi_i^{-1}(\mathbf{w}_i^{\top} \mathbf{x} + b_i)$ for all $i\neq k$.
\end{theorem}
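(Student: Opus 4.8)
The plan is to peel back the 1-vs-rest reduction to expose the decision rule it induces, and then test that rule against the Bayes-optimal rule that defines admissibility. Fixing a class $k\in\{1,\dots,C\}$, the $k$-th binary subproblem relabels each pair $(\mathbf{x},y)$ as $(\mathbf{x},c)$ with $c=+1$ exactly when $y=k$, and logistic regression with link $\psi_k$ on that subproblem returns the stated model. Because the events $\{\Tilde{Y}=+1\}$ and $\{Y=k\}$ coincide, the quantity $q_k(\mathbf{x}):=\psi_k^{-1}(\mathbf{w}_k^{\top}\mathbf{x}+b_k)$ is precisely the learner's estimate of $\Pr(Y=k\mid\mathbf{x})$, and a multiclass prediction is read off as $\hat{y}=\argmax_i q_i(\mathbf{x})$, exactly as in the $\argmax$ rule recalled for logistic regression in the introduction. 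Renormalising $q_i(\mathbf{x})\mapsto q_i(\mathbf{x})/\sum_j q_j(\mathbf{x})$ to obtain a genuine distribution does not alter this $\argmax$, so it is harmless to the argument.

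The core step is the equivalence ``the estimate for class $k$ is admissible'' $\iff$ ``$q_k(\mathbf{x})>q_i(\mathbf{x})$ for all $i\neq k$''. Admissibility of the class-$k$ estimate should be read, per the notion recalled above, as: acting on the estimates reproduces the optimal decision that selects class $k$, which under the $\argmax$ rule means $k$ is the strict maximiser of $(q_1(\mathbf{x}),\dots,q_C(\mathbf{x}))$. The reverse direction is then immediate --- strict dominance makes $k$ the unique $\argmax$ and hence the induced prediction, and substituting $q_i(\mathbf{x})=\psi_i^{-1}(\mathbf{w}_i^{\top}\mathbf{x}+b_i)$ yields the stated inequality. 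For the forward direction, if $q_k(\mathbf{x})\le q_i(\mathbf{x})$ for some $i\neq k$, the $\argmax$ rule (weakly) favours $i$, so class $k$ is not the decision induced by the estimates and the estimate cannot be admissible; the strictness of ``$>$'' is exactly what excludes ties. The monotonicity and invertibility of each $\psi_k$ (Definition \ref{definition:monotone}) enters only to guarantee that each subproblem internally produces a well-defined, order-preserving probability estimate; it says nothing about the ordering \emph{across} the $C$ separately fitted subproblems.

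The main obstacle here is framing rather than computation. Since the $C$ binary subproblems are fitted independently, with distinct links $\psi_1,\dots,\psi_C$ and distinct predictors, each is proper and Bayes-consistent on its own, yet nothing in the reduction couples them. The real work is therefore to state admissibility at the level of the aggregated multiclass decision rather than per subproblem, and to make explicit that the single requirement $q_k(\mathbf{x})>q_i(\mathbf{x})$ for all $i\neq k$ is an extra constraint that 1-vs-rest training never imposes --- which is precisely the gap the method proposed here is designed to close.
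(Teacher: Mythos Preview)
The paper does not actually supply a proof of Theorem~\ref{theorem:admissibility_1_vs_rest}; it is stated in the introduction as a motivational observation (``we observe this in the following theorem'') and no corresponding proof appears in the appendices. So there is nothing to compare your argument against on the paper's side.

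Your proposal is essentially correct and is the natural unpacking of the claim. You correctly identify that each 1-vs-rest subproblem yields $q_k(\mathbf{x})=\psi_k^{-1}(\mathbf{w}_k^\top\mathbf{x}+b_k)$ as the estimate of $\Pr(Y=k\mid\mathbf{x})$, that the induced multiclass decision is $\argmax_i q_i(\mathbf{x})$, and that ``admissibility for class $k$'' in the paper's sense---the estimates ranking class $k$ at the top when the true posteriors do---reduces to the strict dominance condition $q_k(\mathbf{x})>q_i(\mathbf{x})$ for all $i\neq k$. Both directions then follow immediately from the definition of $\argmax$, as you note. Your closing remark, that the $C$ subproblems are fitted independently so this cross-subproblem ordering constraint is never enforced by 1-vs-rest training, is exactly the point the paper is using the theorem to make. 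The only mild looseness is that the paper's notion of admissibility (``probability estimates should rank classes in the same way that true probabilities do'') is itself informal, so your reading of it as ``the induced $\argmax$ decision agrees with the Bayes decision for class $k$'' is one reasonable formalisation among a couple; but it is consistent with how the paper uses the term, and the equivalence is tautological under any such reading.
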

To avoid solving $C$ 1-vs-rest problems through constrained optimisation, we desire an approach that allows us to model multiclass probabilities \textit{simultaneously}, while learning proper multiclass losses which can induce admissible probability estimates for all $C$ classes directly. It has also been shown in the work of \citet{nock2020supervised} that loss learning can provide better models for problems in domains related to the original problem where the loss was learned; compared to using uninformed losses such as cross-entropy or log loss. In general, linear models are known to be sensitive to training noise; with the notable result of \citet{long2008random} that such noise is sufficient to deteriorate any linear binary classification model to the point that it performs no better than an unbiased coin flip  on the original noise-free domain. The presence of \emph{label noise} in a dataset can be interpreted as a domain that relates to an original noise-free domain, up to some classification noise process. The ideas of loss learning and loss transfer from \citet{nock2020supervised} can then be seen as a mechanism that allows us to both overcome training noise and learn accurate models.  We illustrate that learning proper multiclass losses can be done by modelling the \emph{canonical link function} which connects probability estimates with a proper loss (see Definition \ref{definition:canonical_link} and remarks therein). In order to model canonical links flexibly, we form them as composite functions with a fixed component and a learnable component. 

\paragraph{Contributions}  Our main contributions are as follows:
\begin{itemize}
    \item We derive necessary and sufficient conditions for a composite function in $\R^{C-1}$ to be monotonic and the gradient of a twice-differentiable convex function;

    \item We derive sufficient conditions for a composite function in $\R^{C-1}$ to be monotonic and the gradient of a twice-differentiable strictly convex function;
    
    \item We present \textsc{LegendreTron} as a novel and practical way of learning proper canonical losses and probabilities concurrently in the multiclass problem setting.
\end{itemize}

\paragraph{Organisation} In Section \ref{section:related_work}, we review existing works which similarly aim to learn losses and models concurrently. In Section \ref{section:losses_multiclass}, we first describe properness and proper canonical losses. In Section \ref{section:designing_multiclass_links}, we design multiclass canonical link functions through Legendre functions and the $(u,v)$-geometric structure, and provide conditions for composite functions to be monotonic and gradients of convex functions. We then describe our method, \textsc{LegendreTron}, in detail within Section \ref{section:legendretron}. Lastly, numerical comparisons are provided in Section \ref{section:experiments} before concluding in Section \ref{section:conclusion}.

\section{Related Work}
\label{section:related_work}
\paragraph{\textsc{Tron} family of link-learning algorithms}
The notion of searching for proper losses was first established within \citet{nock2008efficient}. The \textsc{SlIsotron} algorithm was later presented in \citet{kakade2011efficient}, as the first algorithm designed to learn a model of the form $\Pr(Y=1|\mathbf{x})=u(\mathbf{w}^{\top}\mathbf{x})$ for binary problems, which involves learning the unknown link function $u:\R \to [0,1]$ assumed to be $1$-Lipschitz and non-decreasing, and the vector $\mathbf{w}\in \R^p$ used to form the linear predictor $\mathbf{w}^{\top}\mathbf{x}$. The algorithm iterates between \emph{Lipschitz isotonic regression} to estimate $u$ and gradient updates to estimate $\mathbf{w}$. A notable and practical shortcoming of \textsc{SlIsotron} is that the isotonic regression steps to update $u$ do not guarantee $u$ to map to $[0,1]$. The \textsc{BregmanTron} algorithm was later proposed in \citet{nock2020supervised}, to refine the \textsc{SlIsotron} algorithm by addressing this and providing convergence guarantees. By utilising the connection between proper losses and their canonical link functions outlined in Section \ref{section:designing_multiclass_links}, the \textsc{BregmanTron} replaced the link function $u$ with the inverse canonical link $\Tilde{\psi}^{-1}$ which guaranteed probability estimates to lie in $[0,1]$.

\paragraph{\textsc{ISGP-Linkgistic} algorithm}
The idea of using the $(u,v)$-geometric structure in combination with Legendre functions to learn canonical link functions has recently been explored in the work of \citet{walder2020all} to propose the \textsc{ISGP-Linkgistic} algorithm to learn a model of the form $\Pr(Y=1|\mathbf{x})=(u\circ v^{-1})(\mathbf{w}^{\top}\mathbf{x})$. By the squaring and integration of a Gaussian Process (GP) to yield the Integrated Squared Gaussian Process (ISGP), monotonicity and invertibility of $v^{-1}:\R \to \R$ is guaranteed. The \textsc{ISGP-Linkgistic} algorithm exploits this property by choosing a fixed squashing function $u$ separate from the \textit{a priori} ISGP distributed $v^{-1}$. Inference is performed with a stochastic EM algorithm where the $E$-step fixes the linear predictor $\mathbf{w}^{\top}\mathbf{x}$ and applies a Laplace approximation to the latent GP to compute $\mathbb{E}_{q(v^{-1}|\mathbf{w})}[\log p(y|\mathbf{x}, v^{-1})]$, and the $M$-step maximises this expectation with respect to $\mathbf{w}$. The \textsc{ISGP-Linkgistic} algorithm takes a Bayesian approach to learning proper canonical losses jointly with a probability estimator by posterior sampling of inverse canonical links.

\section{Definitions and Properties of Losses}
\label{section:losses_multiclass}
In this section, we revisit the notions of proper losses to formulate proper canonical losses in the multiclass setting. We follow the definitions and notations of \citet{williamson2016multiclass} and describe key properties therein, for our discussion of composite multiclass losses. 

Let $C\geq 2$ be the total number of classes. Our setting is multiclass probability estimation. Denote the $(C-1)$-dimensional probability simplex as
\begin{align*}
    \Delta^{C-1} = \left\{p \in \R^{C}_+: \sum_{i=1}^{C} p_i = 1 \right\},
\end{align*}
and its relative interior as
\begin{align*}
    \mathrm{ri}(\Delta^{C-1}) = \left\{p \in \R^{C}_+: \sum_{i=1}^{C} p_i = 1, p_i \in (0,1), \forall i \right\}.
\end{align*}

Suppose we have a dataset $\mathcal{D}$ of $N$ pairs $\{(\mathbf{x}_n, y_n)\}_{n=1}^N$ where each $\mathbf{x}_n \in \mathcal{X}=\R^p$ and $y_n \in \mathcal{Y}= \{1,\dots,C\}$ denotes an input and a single label respectively. We aim to learn a function $h:\mathcal{X} \to \Delta^{C-1}$ such that $\hat{y}_n \in \argmax_{c\in \{1,\dots,C\}} \PP(y_n=c|\mathbf{x}_n)$ closely matches $y_n$. 

Consider the label as a random variable $Y \sim \text{Categorical}(p)$ with prior class probabilities $p\in \Delta^{C-1}$. We denote $q\in \Delta^{C-1}$ as the estimated probabilities in the following definitions. To assess the quality of probability estimates, a loss function can be defined generally as 
\begin{align*}
    \ell: \Delta^{C-1} \to \R^C_+,\quad \ell(q) = (\ell_1(q), \dots, \ell_C(q))^{\top}
\end{align*}
where $\ell_i$ is the \emph{partial loss} for predicting $q$ when $y=i$. For a given label $y$, we can return to \emph{scalar}-valued losses by referring to the $y$-th partial loss $\ell_y$. 

\begin{definition}[conditional Bayes Risk]
\label{definition:bayes_risk}
The conditional risk associated with $\ell$ is defined as $L(p,q) =\E_{Y \sim \text{Categorical}(p)}[\ell_{Y}(q)]$ for all $p,q \in \Delta^{C-1}$. The best achievable conditional risk associated with a loss is termed the \emph{conditional Bayes risk} and is defined as
\begin{align*}
    \underline{L}: &\Delta^{C-1} \to \R_+,\\
    \underline{L}(p) = \inf_{q \in \Delta^{C-1}} L(p,q) &= \inf_{q \in \Delta^{C-1}} \E_{Y \sim \mathrm{Categorical}(p)}[\ell_{Y}(q)].
\end{align*}
\end{definition}
It is well known that $\underline{L}$ is concave.
 
\begin{definition}[Proper Losses]
\label{definition:proper_losses}
A loss $\ell$ is \emph{proper} if and only if $L$ is minimized when $q=p$. In other words, $\underline{L}(p)=L(p,p) \leq L(p,q)$ for all $p,q\in \Delta^{C-1}$.  Losses where the inequality is strict when $p\neq q$,  are termed \emph{strictly proper}.
\end{definition}

\paragraph{Remark} Properness is an essential property of losses, as optimising a model with respect to a proper loss guides the model's probability estimates towards true posterior class probabilities. Examples of proper losses include the $0-1$, square, log, and Matsushita losses \citep{matusita1956decision}.

To draw the connection between a proper loss and its conditional Bayes risk, we require definitions of subgradients and Bregman divergences. Subgradients are a generalisation of gradients and are particularly useful when analysing convex functions that may not be differentiable.
\paragraph{Subgradients} For a convex set $S\subseteq \R^n$, the subdifferential of a convex function $f: S \rightarrow(-\infty,+\infty]$ at $\mathbf{x} \in S$ is defined as
\begin{align*}
    \partial f(\mathbf{x})=\left\{\phi \in \mathbb{R}^{n}:\langle\phi, \mathbf{y}-\mathbf{x}\rangle \leq f(\mathbf{y})-f(\mathbf{x}), \forall \mathbf{y} \in \mathbb{R}^{n}\right\}
\end{align*}
where an element $\phi \in \partial f(\mathbf{x})$ is called a \emph{subgradient} of $f$ at $\mathbf{x}$. By convention, we define $\partial f(\mathbf{x})=\emptyset$ for all $\mathbf{x} \notin S$. Moreover, $f$ is strictly convex if and only if $\partial f(\mathbf{x})=\left\{\phi \in \mathbb{R}^{n}:\langle\phi, \mathbf{y}-\mathbf{x}\rangle < f(\mathbf{y})-f(\mathbf{x}), \forall \mathbf{y} \in \mathbb{R}^{n}\right\}$.

\paragraph{Bregman divergence} For a convex set $S\subseteq \R^n$, and a continuously-differentiable and strictly convex function $f: S \rightarrow (-\infty,+\infty]$, the Bregman divergence with generator $f$ is defined for all $\mathbf{x},\mathbf{y} \in S$ as 
\begin{align*}
    D_f(\mathbf{x}, \mathbf{y}) = f(\mathbf{x}) - f(\mathbf{y}) - \langle \nabla f(\mathbf{y}),\mathbf{x}-\mathbf{y} \rangle.
\end{align*}

The following result is a rewritten characterisation of proper losses through their ``Bregman representation'', and explicates the connection between a proper loss and its conditional Bayes risk.
\begin{proposition}[{\citep[Proposition 7]{williamson2016multiclass}}]
\label{proposition:proper_loss_bregman}
Let $\ell:\Delta^{C-1} \to \R_+^C$ be a loss. $\ell$ is a (strictly) proper loss if and only if there exists a (strictly) convex function $f:\Delta^{C-1} \to \R$ such that for all $q\in \Delta^{C-1}$, there exists a subgradient $\phi \in \partial f(q)$ such that 
\begin{align*}
    L(p,q) = -(p-q)^{\top} \phi - f(q) \text{ for all $p \in \Delta^{C-1}$}.
\end{align*}
Moreover, if $\underline{L}$ is differentiable on $\text{ri}(\Delta^{C-1})$ then 
\begin{align*}
    L(p,q) = (p-q)^{\top} \ell(q) + \underline{L}(q)
\end{align*}
where $\ell$ is the unique proper loss associated with $\underline{L}$ with the property $\nabla \underline{L}(p) = \ell(p), \forall p \in \text{ri}(\Delta^{C-1})$.
\end{proposition}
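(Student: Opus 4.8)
The plan is to lean on the single structural fact that the conditional risk is \emph{affine} in its first argument: since $L(p,q)=\E_{Y\sim\mathrm{Categorical}(p)}[\ell_Y(q)]=\sum_{i=1}^C p_i\,\ell_i(q)=\langle p,\ell(q)\rangle$, the map $p\mapsto L(p,q)$ is linear for each fixed $q$. Consequently $\underline{L}(p)=\inf_q\langle p,\ell(q)\rangle$ is an infimum of linear functions, hence concave, and $f:=-\underline{L}$ is convex (a supremum of linear functions) -- this is the origin of the claimed convex generator. The whole statement then reduces to translating ``$q=p$ minimises $L(p,\cdot)$'' into ``$-\ell(q)$ is a subgradient of $f$ at $q$'', and conversely, so most of the work is a dictionary between properness and the subgradient inequality.

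For the forward direction I would assume $\ell$ is proper. Properness gives $L(q,q)=\underline{L}(q)$, so by affinity $L(p,q)=L(q,q)+\langle p-q,\ell(q)\rangle=\underline{L}(q)+\langle p-q,\ell(q)\rangle$ for all $p$; with $\phi=-\ell(q)$ this is exactly the asserted identity $L(p,q)=-(p-q)^\top\phi-f(q)$. It remains to check $\phi\in\partial f(q)$: for every $p$ we have $\underline{L}(p)=\inf_{q'}L(p,q')\le L(p,q)=\underline{L}(q)+\langle p-q,\ell(q)\rangle$, i.e. $f(p)-f(q)\ge\langle\phi,p-q\rangle$, which is the subgradient inequality. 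If $\ell$ is \emph{strictly} proper, the strict inequality $\underline{L}(p)<L(p,q)$ for $p\ne q$ upgrades this to the strict subgradient inequality, so by the characterisation of strict convexity recalled in the excerpt, $f$ is strictly convex.

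For the converse, suppose such an $f$ and a selection $q\mapsto\phi_q\in\partial f(q)$ exist with $L(p,q)=-(p-q)^\top\phi_q-f(q)$. Evaluating at $p=q$ gives $L(q,q)=-f(q)$, hence $L(p,p)=-f(p)$ for every $p$; and the subgradient inequality $f(p)-f(q)\ge\langle\phi_q,p-q\rangle$ rearranges to $-f(p)\le-(p-q)^\top\phi_q-f(q)=L(p,q)$, i.e. $L(p,p)\le L(p,q)$, so $\ell$ is proper -- and strictly proper when $f$ is strictly convex, by the same rearrangement applied to the strict inequality. A useful by-product is that $\underline{L}(p)=\inf_q L(p,q)=L(p,p)=-f(p)$, so the generator is forced to be $f=-\underline{L}$.

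Finally, for the ``moreover'' clause: differentiability of $\underline{L}$ on $\mathrm{ri}(\Delta^{C-1})$ makes $\partial f(q)$ the singleton $\{-\nabla\underline{L}(q)\}$, while the forward direction (now applicable since $\ell$ is proper) produces $-\ell(q)\in\partial f(q)$; hence $\ell(q)=\nabla\underline{L}(q)$ and $L(p,q)=(p-q)^\top\nabla\underline{L}(q)+\underline{L}(q)$, with uniqueness of $\ell$ since it is pinned down from $\underline{L}$. The one place I expect genuine care rather than routine manipulation is the bookkeeping around $\Delta^{C-1}$ not being full-dimensional in $\R^C$: the subdifferential as written is relative to $\R^n$, and since $p-q$ ranges only over the tangent space $\{v:\sum_i v_i=0\}$, both $\ell(q)$ and any valid $\phi$ are determined only up to adding a multiple of $\mathbf{1}$; one must either fix the canonical representative (the statement's choice $\ell=\nabla\underline{L}$, with the gradient taken inside the affine hull) or reparametrise the simplex by $\R^{C-1}$ before invoking the convex-analytic facts, and threading this consistently through both directions is the main obstacle.
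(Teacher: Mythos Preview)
Your proposal is correct and follows essentially the same route as the paper: set $f=-\underline{L}$ and $\phi=-\ell(q)$, use the affinity $L(p,q)=\underline{L}(q)+\langle p-q,\ell(q)\rangle$ to get the representation, and translate properness into the subgradient inequality (and back); the ``moreover'' clause is handled identically via uniqueness of the subgradient under differentiability. Your closing caveat about the simplex not being full-dimensional in $\R^C$ (so $\ell(q)$ and $\phi$ are determined only modulo $\mathbf{1}$) is a valid technical point that the paper's proof does not explicitly address, but it does not affect the argument as written.
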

\begin{remark}
We note that $L(p,q)-\underline{L}(p)$ is a Bregman divergence if and only if $\ell$ is strictly proper due to the requirement of strict convexity of $-\underline{L}$.
\end{remark}

In this work, we seek to learn strictly proper losses $\ell$ by exploiting the connection $\nabla\underline{L}(p) = \ell(p)$ described in Proposition \ref{proposition:proper_loss_bregman}. In Section \ref{section:designing_multiclass_links}, we extend this connection between probabilities and predictors in $\R^{C-1}$ through \emph{canonical link functions}, and describe in detail how strictly proper losses can be learned through this extended connection.

\section{Designing Multiclass Canonical Links}
\label{section:designing_multiclass_links}
In this section, we provide definitions of canonical link functions, Legendre functions and the $(u,v)$-geometric structure. The latter two structures are essential for the design and learning of canonical link functions. We show that designing a canonical link amounts to designing a composite function that is the gradient of a twice-differentiable and convex function. To this end, we present our key theoretical contributions: conditions for composite functions to be gradients of convex functions.

\paragraph{Composite Form} It is often desirable to link predictors with their probability estimates through an invertible link function $\psi: \Delta^{C-1} \to \R^C$. This allows one to uniquely identify probabilities while working with general predictors. It also allows one to define loss functions more generally as $\ell_{\psi} = \ell \circ \psi^{-1}$ which are referred to as \emph{proper composite losses} when $\ell$ is proper. \citet[Proposition 13]{williamson2016multiclass} shows that a proper composite loss $\ell_{\psi}$ is uniquely represented by $\ell$ and $\psi$ when $\ell_{\psi}$ is continuous and invertible.

\paragraph{Proper Canonical Form} As elements of $\Delta^{C-1}$ are uniquely determined by the first $C-1$ components, the above properties can be more naturally described by the \emph{projected} probability simplex:
\begin{align*}
    \Tilde{\Delta}^{C-1} = \left\{\Tilde{p} \in \R^{C-1}_+: \sum_{i=1}^{C-1} \Tilde{p}_i \leq 1\right\}.
\end{align*}
Define the projection map 
\begin{align*}
    \Pi&: \Delta^{C-1} \to \Tilde{\Delta}^{C-1},\\
    \Pi(p) = (p_1, &\dots, p_{C-1}) \text{ for all $p \in \Delta^{C-1}$},
\end{align*}
and its inverse 
\begin{align*}
    \Pi^{-1}&: \Tilde{\Delta}^{C-1} \to \Delta^{C-1},\\
    \Pi^{-1}(\Tilde{p}) = \Bigg(\Tilde{p}_1, &\dots, \Tilde{p}_{C-1}, 1 - \sum_{i=1}^{C-1} \Tilde{p}_i\Bigg) \text{ for all $\Tilde{p} \in \Tilde{\Delta}^{C-1}$}.
\end{align*}
\begin{definition}
\label{definition:canonical_link}
The projected conditional Bayes risk is defined as $\Tilde{\underline{L}}=\underline{L} \circ \Pi^{-1}$. Suppose  $\Tilde{\underline{L}}$ is differentiable. Then the \emph{canonical link function} is defined as
\begin{align*}
    \Tilde{\psi}: \Tilde{\Delta}^{C-1} \to \R^{C-1}, \quad \Tilde{\psi}(\Tilde{p}) = -\nabla\Tilde{\underline{L}}(\Tilde{p}).
\end{align*}
\end{definition}
Equipping a proper loss $\ell$ with its corresponding canonical link $\Tilde{\psi}$ yields the function $\ell \circ \Pi^{-1} \circ \Tilde{\psi}^{-1}$ with its components being convex with respect to the input domain (see Appendix \ref{appendix:convexity_proper_canonical_losses}). We refer to such losses as \emph{proper canonical losses} to distinguish them from proper composite losses. The connection between a differentiable conditional Bayes risk, a proper loss, and a canonical link, shown by Proposition \ref{proposition:proper_loss_bregman} and Definition \ref{definition:canonical_link}, is explicated within Appendix \ref{appendix:convexity_proper_canonical_losses}. The unique coupling of a proper loss, canonical link and conditional Bayes risk illustrates that one can learn proper canonical losses by modelling either of the latter two functions.

\paragraph{Properties of Legendre functions} Let $f:\R^{C-1} \to \R$ be continuously differentiable and strictly convex. We refer to $f$ as a \emph{Legendre function}. The \emph{Legendre-Fenchel conjugate} of $f$, denoted by $f^*$, is defined as 
\begin{align*}
    f^*&:S\to \R,\\
    f^*(\mathbf{x}^*) = \langle (\nabla f)^{-1}(\mathbf{x}^*)&, \mathbf{x}^*\rangle - f\bigl((\nabla f)^{-1}(\mathbf{x}^*)\bigr).
\end{align*}
where $S = \{\nabla f(\mathbf{x}):\mathbf{x} \in \R^{C-1}\}$, and $f$ is Legendre if and only if $f^*$ is Legendre. \citet[Theorem 26.5]{rockafellar1970convex} shows that when the latter holds, $(f^{*})^*=f$, and $\nabla f$ is continuous and invertible with $\nabla f^* = (\nabla f)^{-1}$. Moreover, if $f$ is twice-differentiable with positive definite Hessian everywhere, then the inverse function theorem yields that $f^*$ is twice-differentiable since $\nabla^2 f^*(\nabla f(\mathbf{x})) = (\nabla^2 f(\mathbf{x}))^{-1}$.

\paragraph{$(u,v)$-geometric structure} \citet{amari2016information,nock2016conformal, walder2020all} state that a general dually flat structure on $\R^{C-1}$ can be defined in terms of an arbitrary strictly convex function $\xi$. Let $u$ and $v$ be differentiable invertible functions. The pair $(u, v)$ give a dually flat structure on $\R^{C-1}$ if and only if $\nabla \xi = u \circ v^{-1}$. We consider the $(u,v)$-geometric structure of the Bregman divergence $D_{(-\Tilde{\underline{L}})^{*}}$ which gives $\Tilde{\psi}^{-1} = u \circ v^{-1}$.

\paragraph{Designing links} In this work, we focus on the case when $-\Tilde{\underline{L}}$ is twice-differentiable. Note that $-\Tilde{\underline{L}}$ is convex since $\Pi^{-1}$ is affine and $-\underline{L}$ is convex.  Properties of Legendre functions allow us to move from $-\Tilde{\underline{L}}$ to its Legendre-Fenchel conjugate $(-\Tilde{\underline{L}})^{*}$, and similarly allow us to move from the canonical link $\Tilde{\psi}$ to its inverse $\Tilde{\psi}^{-1}$. The $(u,v)$-geometric structure then allows us to flexibly learn $\Tilde{\psi}^{-1}$ by splitting it into a \emph{learnable} component $v^{-1}$ and a fixed component $u$. Fixing $u$ to be a suitable \emph{squashing} function ensures that $\Tilde{\psi}^{-1}$ maps to $\Tilde{\Delta}^{C-1}$; thereby allowing us to uniquely identify multiclass probabilities associated with predictors from $\R^{C-1}$. On the other hand, $v^{-1}$ can be parameterised by an \emph{invertible neural network} which allows $\Tilde{\psi}^{-1}$ to adapt to the multiclass problem at hand. Legendre functions and the $(u,v)$-geometric structure together yield a more natural and practical design of the canonical link through its inverse since it is often much easier to map inputs from an unbounded space such as $\R^{C-1}$, to a bounded space such as $\Tilde{\Delta}^{C-1}$. Figures \ref{figure:overall_link_design} and \ref{figure:canonical_link_uv_structure} illustrates how the inverse of the canonical link is modelled using the $(u,v)$-geometric structure. Loosely speaking, $v^{-1}$ allows one to find better logit representations before they are squashed to probabilities.

Under the $(u,v)$-geometric structure, if one can prove that $u \circ v^{-1}$ maps to $\Tilde{\Delta}^{C-1}$ and is the gradient of a Legendre function $f$, then one can set $(-\Tilde{\underline{L}})^{*}=f$ and $\nabla (-\Tilde{\underline{L}})^{*}=u \circ v^{-1}$ as its corresponding inverse canonical link function by using properties of Legendre functions. This requires showing $u \circ v^{-1}$ is the gradient of a twice-differentiable and strictly convex function. In the following two theorems, we provide conditions where this assertion holds for general composite functions. We defer the background, supporting theorems and proofs of the following results to Sections \ref{appendix:convex_analysis}, \ref{appendix:composition_iff_psd} and \ref{appendix:maps_with_pd_jacobians_closed_under_composition} within the Appendices.

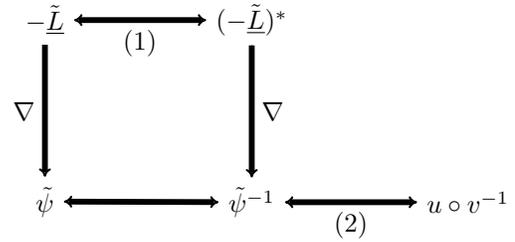
\begin{figure}
  \centering
  \begin{tikzpicture}
      \node[draw=none, thick] (negative_bayes_risk) {$-\Tilde{\underline{L}}$};
      \node[right=5em of negative_bayes_risk, draw=none, thick] (negative_bayes_risk_conjugate) {$(-\Tilde{\underline{L}})^{*}$};
      \node[below=5em of negative_bayes_risk, draw=none, thick] (canonical_link) {$\Tilde{\psi}$};
      \node[below=5em of negative_bayes_risk_conjugate, draw=none, thick] (inverse_canonical_link) {$\Tilde{\psi}^{-1}$};
      \node[right=5em of inverse_canonical_link, draw=none, thick] (composite_link) {$u \circ v^{-1}$};
      
      \draw[implies-implies,double=black, thick] (negative_bayes_risk) -- node [midway, draw=none, below] {(1)} (negative_bayes_risk_conjugate);
      \draw[implies-implies,double=black, thick] (canonical_link) -- (inverse_canonical_link);
      \draw[-implies,double=black, thick] (negative_bayes_risk) -- node [midway, draw=none, left] {$\nabla$} (canonical_link);
      \draw[-implies,double=black, thick] (negative_bayes_risk_conjugate) -- node [midway, draw=none, right] {$\nabla$}(inverse_canonical_link);
      \draw[implies-implies,double=black, thick] (inverse_canonical_link) -- node [midway, draw=none, below] {(2)} (composite_link);
  \end{tikzpicture}
\caption{Inverse canonical links as composite functions by moving from $-\Tilde{\underline{L}}$ to $(-\Tilde{\underline{L}})^{*}$ using Legendre functions in (1) and decomposing $\Tilde{\psi}^{-1}$ using the $(u,v)$-geometric structure in (2).}
\label{figure:overall_link_design}
\end{figure}

\begin{figure}
  \centering
  \includegraphics{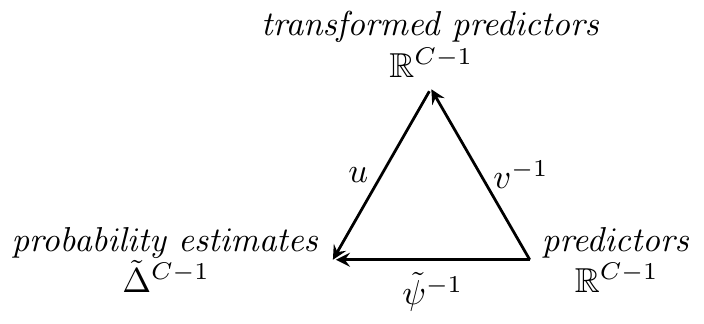}
  \caption{Relationship between predictors and probability estimates through the inverse of the canonical link function under the $(u,v)$-geometric structure.}
  \label{figure:canonical_link_uv_structure}
\end{figure}

\begin{theorem}
\label{theorem:composition_gradient_iff_psd}
Let $f:\R^{C-1} \to \R^{C-1}$ and $g:\R^{C-1} \to \R^{C-1}$ be differentiable. Then the following conditions are equivalent:
\begin{enumerate}
    \item $f \circ g = \nabla F$ where $F$ is a twice-differentiable convex function.

    \item The Jacobian $J_{f \circ g}(\mathbf{x})$ is symmetric for all $\mathbf{x}\in \R^{C-1}$.

    \item $J_{f \circ g}(\mathbf{x})$ is positive semi-definite for all $\mathbf{x}\in \R^{C-1}$.
    
    \item $f \circ g$ is monotone.
\end{enumerate}
\end{theorem}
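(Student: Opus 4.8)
Write $h:=f\circ g$, so that the chain rule gives $J_h(\mathbf x)=J_f\bigl(g(\mathbf x)\bigr)J_g(\mathbf x)$ at every $\mathbf x\in\R^{C-1}$. The plan is to establish the cycle $(1)\Rightarrow(3)\Rightarrow(2)\Rightarrow(1)$ and, alongside it, $(3)\Leftrightarrow(4)$, so that all four assertions coincide; only $(2)\Rightarrow(1)$ carries real weight.

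For the routine implications: if $(1)$ holds then $J_h=\nabla^2F$, which is symmetric (only the symmetric part of a matrix appears in the quadratic term of a second-order Taylor expansion) and positive semi-definite (restricting the convex $F$ to an arbitrary line yields a convex scalar function, whose second derivative is non-negative), so $(1)\Rightarrow(3)$, and $(3)\Rightarrow(2)$ since positive semi-definiteness carries symmetry with it. For $(3)\Leftrightarrow(4)$ I would integrate the Jacobian along segments, using convexity of the domain $\R^{C-1}$:
\begin{align*}
\bigl\langle h(\mathbf b)-h(\mathbf a),\,\mathbf b-\mathbf a\bigr\rangle=\int_0^1(\mathbf b-\mathbf a)^{\top}J_h\bigl(\mathbf a+t(\mathbf b-\mathbf a)\bigr)(\mathbf b-\mathbf a)\,\mathrm{d}t ,
\end{align*}
which is non-negative for all $\mathbf a,\mathbf b$ exactly when $J_h$ is positive semi-definite everywhere; together with the symmetry carried by $(3)$, this is monotonicity in the sense of Definition~\ref{definition:monotone}. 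To close the cycle, assuming $(2)$ and the positive semi-definiteness obtained below, I would apply the Poincar\'e lemma on the simply connected domain $\R^{C-1}$: the function $F(\mathbf x):=\int_0^1\langle h(t\mathbf x),\mathbf x\rangle\,\mathrm{d}t$ satisfies $\nabla F=h$ by symmetry of $J_h$, so $F$ is twice differentiable with $\nabla^2F=J_h$, and positive semi-definiteness of $J_h$ makes $F$ convex.

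The hard part is promoting the symmetry of $J_h$ in $(2)$ to positive semi-definiteness, and this genuinely requires the factorisation $h=f\circ g$: a general differentiable self-map of $\R^{C-1}$ can have a symmetric Jacobian that is not positive semi-definite, e.g.\ $h(\mathbf x)=-\mathbf x$. The argument I would pursue is local and linear-algebraic: fix $\mathbf x$, set $A:=J_f\bigl(g(\mathbf x)\bigr)$ and $B:=J_g(\mathbf x)$, and use that in our construction the component maps are monotone in the sense of Definition~\ref{definition:monotone}, so $A$ and $B$ are themselves symmetric and positive semi-definite; then symmetry of $J_h(\mathbf x)=AB$ forces $AB=(AB)^{\top}=BA$, i.e.\ $A$ and $B$ commute, and commuting symmetric matrices are simultaneously orthogonally diagonalisable, whence $AB=Q\Lambda_A\Lambda_BQ^{\top}$ with $\Lambda_A,\Lambda_B$ diagonal and non-negative is again symmetric positive semi-definite. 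I expect this matrix lemma, together with the care needed to certify that the component Jacobians are symmetric positive semi-definite throughout the range of $g$, to be the principal obstacle; these are the supporting facts deferred to Appendices~\ref{appendix:composition_iff_psd} and~\ref{appendix:maps_with_pd_jacobians_closed_under_composition}, and simple connectedness of $\R^{C-1}$ is the remaining essential ingredient.
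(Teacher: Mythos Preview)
Your proposal is essentially correct and shares with the paper the same crucial ingredient for the implication $(2)\Rightarrow(3)$: both you and the paper invoke the matrix fact that a product of two symmetric positive semi-definite matrices is positive semi-definite precisely when it is symmetric. The paper cites this as \citet{meenakshi1999product}; you re-derive it via commutation and simultaneous orthogonal diagonalisation, which is the standard proof. Both arguments therefore silently rely on $J_f$ and $J_g$ being individually symmetric and positive semi-definite, an assumption not present in the bare hypothesis ``$f,g$ differentiable'' but present ``in the construction'', as you explicitly flag. One minor slip: you write that monotonicity of the component maps makes $A,B$ symmetric and positive semi-definite, but monotonicity alone (Definition~\ref{definition:monotone}) only controls the symmetric part of the Jacobian; what is really being used is that $f$ and $g$ are themselves gradients of convex functions.

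Where your route genuinely departs from the paper is in closing the cycle back to $(1)$. The paper proves $(4)\Rightarrow(1)$ by first upgrading monotonicity to maximal monotonicity (continuity on all of $\R^{C-1}$, via \citet{bauschke2011convex}), then applying the Asplund decomposition \citep{borwein2007decomposition} to write $f\circ g=\nabla F+L\mathbf{x}$ with $F$ convex and $L$ skew, and finally using symmetry of $J_{f\circ g}$ to force $L=0$. You instead prove $(2)\Rightarrow(1)$ directly: once $J_h$ is symmetric, the Poincar\'e lemma on the simply connected domain $\R^{C-1}$ yields a potential $F$ with $\nabla F=h$, and positive semi-definiteness of $J_h=\nabla^2F$ then gives convexity. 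Your route is more elementary and self-contained, sidestepping the machinery of maximal monotone operators entirely; the paper's route, by contrast, makes the connection to maximal cyclical monotonicity (Theorem~\ref{lemma:maximal_cyclic_monotone_iff_gradient_convex}) more explicit, which is thematically closer to the convex-analytic framing of the surrounding appendices.
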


\paragraph{Proof sketch of Theorem \ref{theorem:composition_gradient_iff_psd}} To claim that a function $f : \R^{C-1} \to \R^{C-1}$ is the gradient of a convex function $g: \R^{C-1} \to \R$, requires $f$ to satisfy \emph{maximal cyclical monotonicity}. This is a more abstract notion of monotonicity within domains in higher dimensions, and encompasses two notions of monotonicity, namely \emph{maximal monotonicity} and \emph{cyclical monotonicity}. It turns out that it is sufficient to consider monotonicity as maximal monotonicity is automatically guaranteed as our domain is $\R^{C-1}$, and $f \circ g$ is differentiable and therefore continuous. 

Theorem \ref{theorem:composition_gradient_iff_psd} characterises when a composite function is the gradient of a convex function. It also serves as a convenient and practical criteria to aid model design through a check of positive semi-definiteness for the Jacobian $J_{f \circ g}$. The implications of Theorem \ref{theorem:composition_gradient_iff_psd} are profound as it allows us to derive the following sufficient conditions under which the composition of gradients of convex functions is the gradient of a Legendre function.
\begin{theorem}
\label{theorem:maps_with_pd_jacobians_closed_under_composition}
Let $f:\R^{C-1} \to S$ and $g:\R^{C-1} \to \R^{C-1}$ be differentiable where $S \subseteq \R^{C-1}$, and $J_f(\mathbf{x})$ and $J_g(\mathbf{x})$ are symmetric and positive definite for all $\mathbf{x} \in \R^{C-1}$. Then $f \circ g$ is the gradient of a twice-differentiable Legendre function.
\end{theorem}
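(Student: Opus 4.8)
} The plan is to reduce the claim to Theorem \ref{theorem:composition_gradient_iff_psd} applied to the single map $f\circ g$, so the substance is to verify that the composite Jacobian inherits the required structure. First I would apply Theorem \ref{theorem:composition_gradient_iff_psd} to $f$ paired with the identity to conclude $f=\nabla\Phi$ for a twice-differentiable convex $\Phi:\R^{C-1}\to\R$; since $J_f=\nabla^2\Phi$ is positive definite everywhere, $\Phi$ is strictly convex and hence a Legendre function, and likewise $g=\nabla\Psi$ with $\Psi$ twice-differentiable and Legendre. By the chain rule, $f\circ g$ is differentiable with
\begin{align*}
    J_{f\circ g}(\mathbf{x}) = J_f\!\bigl(g(\mathbf{x})\bigr)\,J_g(\mathbf{x}),
\end{align*}
a product of two symmetric positive-definite matrices at every $\mathbf{x}\in\R^{C-1}$.

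The crux is then to show that this product still satisfies the hypotheses of Theorem \ref{theorem:composition_gradient_iff_psd} viewed as a statement about the lone map $f\circ g$, i.e.\ that $J_{f\circ g}(\mathbf{x})$ is symmetric and positive definite for all $\mathbf{x}$. Positive definiteness I would get by the standard similarity argument: with $A:=J_f(g(\mathbf{x}))\succ 0$ and $B:=J_g(\mathbf{x})\succ 0$, the product $AB$ is similar to $A^{1/2}BA^{1/2}\succ 0$, so it has strictly positive spectrum, and in fact $AB$ is self-adjoint and positive for the weighted inner product $\langle\cdot,\cdot\rangle_{A^{-1}}$, which is enough to extract the quadratic-form positivity needed for condition 3 of Theorem \ref{theorem:composition_gradient_iff_psd}. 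Symmetry of $J_{f\circ g}(\mathbf{x})$ in the ordinary sense, however, amounts to $A$ and $B$ commuting, and this does \emph{not} follow from $A,B\succ 0$ alone; this is where I expect the real difficulty to sit. Closing that gap would require either exploiting additional structure of $f$ and $g$ available in the context where the theorem is used --- the $(u,v)$-geometric construction, in which $u\circ v^{-1}=\nabla(-\Tilde{\underline{L}})^{*}$ is already constrained to be a gradient --- or a direct verification of cyclical monotonicity of $f\circ g$ that sidesteps the Jacobian; I would expect the bulk of the proof to concentrate on exactly this point.

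Granting that $J_{f\circ g}$ is symmetric and positive definite everywhere, Theorem \ref{theorem:composition_gradient_iff_psd} immediately yields $f\circ g=\nabla F$ for a twice-differentiable convex $F:\R^{C-1}\to\R$, and positive definiteness of $\nabla^2 F=J_{f\circ g}$ promotes convexity of $F$ to strict convexity, so $F$ is Legendre; twice-differentiability of $F$ is already part of what was obtained, completing the argument. As a consistency check on the scope, the case $C=2$ is transparent --- $J_f$ and $J_g$ are positive scalars and their product is a positive scalar --- so the only genuinely new content, and the only hard step, is controlling the composite Jacobian when $C\ge 3$.
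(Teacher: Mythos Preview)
Your diagnosis of the crux is right, but the proposal does not close the gap, and one of your intermediate claims is off. Writing $A=J_f(g(\mathbf{x}))$ and $B=J_g(\mathbf{x})$, self-adjointness and positivity of $AB$ in $\langle\cdot,\cdot\rangle_{A^{-1}}$ unwinds to $\langle ABz,z\rangle_{A^{-1}}=z^\top Bz>0$, which is positivity of $B$, not of $AB$, in the Euclidean inner product; it does \emph{not} yield $z^\top(AB)z>0$ and so does not verify condition~3 of Theorem~\ref{theorem:composition_gradient_iff_psd}. You then correctly note that Euclidean symmetry of $AB$ would force $A$ and $B$ to commute, leave the matter open, and propose to import extra $(u,v)$-structure or argue cyclical monotonicity directly. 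Neither is the route the paper takes.

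The paper bypasses symmetry altogether and goes straight for condition~3. After the similarity step you already have --- $AB$ is similar to $B^{1/2}AB^{1/2}\succ 0$, hence has strictly positive real spectrum, and the same holds for $(AB)^\top$ --- it splits $J_{f\circ g}=S+K$ into its symmetric and skew-symmetric parts, so that $z^\top J_{f\circ g}\,z=z^\top S z$, and then bounds $\lambda_{\min}(S)$ from below via a Weyl-type inequality for sums of matrices drawn from a real vector space of matrices with real eigenvalues (Bhatia, \emph{Matrix Analysis}, Theorem~VIII.4.6), applied to $S=\tfrac12 J_{f\circ g}+\tfrac12(J_{f\circ g})^\top$, obtaining $\lambda_{\min}(S)\ge\tfrac12\lambda_{\min}(J_{f\circ g})+\tfrac12\lambda_{\min}\bigl((J_{f\circ g})^\top\bigr)>0$. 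With $S\succ 0$ one has $z^\top J_{f\circ g}\,z>0$ for all $z\ne 0$, which is condition~3 in strict form; Theorem~\ref{theorem:composition_gradient_iff_psd} then gives $f\circ g=\nabla F$ with $F$ twice differentiable, and positive definiteness of $J_{f\circ g}=\nabla^2 F$ upgrades $F$ to strictly convex, hence Legendre. Since you flagged this as the hard step, it is worth scrutinising that invocation of Bhatia's theorem: its hypothesis requires a vector space \emph{all} of whose members have real spectrum, and the span of $J_{f\circ g}$ and its transpose contains the skew-symmetric matrix $J_{f\circ g}-(J_{f\circ g})^\top$, whose eigenvalues are purely imaginary unless $J_{f\circ g}$ is already symmetric --- so your instinct that this is exactly where the weight of the argument sits is well placed.
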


\paragraph{Proof sketch of Theorem \ref{theorem:maps_with_pd_jacobians_closed_under_composition}} Theorem \ref{theorem:composition_gradient_iff_psd} tells us it is sufficient to check for positive semi-definiteness of a composite function's Jacobian. Our proof involves a check that all eigenvalues of the Jacobian are positive. This asserts that the composite function is the gradient of a twice-differentiable and strictly convex function.

To use the $(u,v)$-geometric structure from Section \ref{section:losses_multiclass} with Theorem \ref{theorem:maps_with_pd_jacobians_closed_under_composition}, we can set $f=u$ and $g=v^{-1}$ within Theorem \ref{theorem:maps_with_pd_jacobians_closed_under_composition}. This presents an additional requirement that the functions $f$ and $g$ are also invertible. In Section \ref{section:legendretron}, we show how these requirements can be met with our proposed algorithm, \textsc{LegendreTron}.

\section{Learning Proper Canonical Multiclass Losses: \textsc{LegendreTron}}
\label{section:legendretron}
In this section, we present \textsc{LegendreTron}, our main algorithmic contribution for learning proper canonical losses for multiclass probability estimation. With the theory of Legendre functions, $(u,v)$-geometric structure and Theorem \ref{theorem:maps_with_pd_jacobians_closed_under_composition} in hand to support our approach, we now present \textsc{LegendreTron} in detail, as an extension of generalised linear models and Single Index Models for multinomial logistic regression. We note that the conventional formulation of multinomial logistic regression is not a generalised linear model and defer a principled reformulation to Appendix \ref{appendix:redefining_multinomial_logistic}. 

\paragraph{Model} Given a dataset $\mathcal{D}=\{(\mathbf{x}_n, y_n)\}_{n=1}^N$, we have the classification model
\begin{align*}
    y_n|\mathbf{x}_n &\sim \text{Categorical}\bigl(\hat{p}(\mathbf{z}_n)\bigr) \text{ where $\mathbf{z}_n = \mathbf{W}\mathbf{x}_n + \mathbf{b}$}
\end{align*}
where $\mathbf{W}\in \R^{(C-1)\times p}$, $\mathbf{b}\in \R^{C-1}$ and $\hat{p}(\mathbf{z}_n) = (u \circ v^{-1})(\mathbf{z}_n)$ with $u$ chosen as a squashing function that maps to $\Tilde{\Delta}^{C-1}$ and $v^{-1} = \nabla g$ for a twice-differentiable and strictly convex function $g$. We leave the specification of a suitable squashing function $u$ as a modelling choice and provide a natural choice at the end of this section.

For any $B \in \Z_+$, let $g_1, g_2, \dots g_B$ be fully input convex neural networks (FICNN) investigated in \citet{amos2017icnn}. We set $v^{-1}=\nabla g = (\nabla g_1) \circ (\nabla g_2) \circ \dots \circ (\nabla g_B)$. For each $g_i$, we use the same architecture as \citet{huang2021cpflows} which is described as
\begin{align*}
    \mathbf{z}_{i,1} &= l_{i,1}^+(\mathbf{x})\\
    \mathbf{z}_{i,k} &= l_{i,k}(\mathbf{x}) + l_{i,k}^+(s(\mathbf{z}_{i,k-1})) \text{ for $k=2,\dots,M+1$},\\
    h_i(\mathbf{x}) &= s(\mathbf{z}_{i,M+1}),\\
    g_i(\mathbf{x}) &= s(w_{i,0}) h_i(\mathbf{x}) + s(w_{i,1})\frac{\|\mathbf{x}\|^2}{2}
\end{align*}
where we denote $l_{i,k}^+$ as a linear layer with \emph{positive} weights, $l_{i,k}$ as a linear layer with unconstrained weights, $w_{i,0},w_{i,1} \in \R$ are unconstrained parameters and $s(x) =\log (1 + e^x)$ is the softplus function with $s(\mathbf{x})$ denoting the softplus function applied elementwise on $\mathbf{x}$. In particular, $l_{i,M+1}$ and $l_{i,M+1}^+$ are linear layers that map to $\R$ while for each $k=1,\dots M$, $l_{i,k}$ and $l_{i,k}^+$ are hidden layers that map to $\R^H$ for a chosen dimension size $H \in \Z_+$. With this setup, each $g_i$ is strongly convex (and therefore strictly convex) with an invertible gradient and positive definite Hessian for all $\mathbf{x}\in \R^{C-1}$ due to the quadratic term within each $g_i$.

We now show that, when equipped with a suitable squashing function $u$, any function learned by \textsc{LegendreTron} is a valid inverse canonical link function. We turn to a modified version of the \textit{LogSumExp} function previously studied in \citet{nielsen2018monte} and describe its main properties within the following theorem.

\begin{theorem}
\label{theorem:legendretron_is_legendre}
Let $f(\mathbf{x})= \log\left(1+ \sum_{k=1}^{C-1} \exp(x_k)\right)$. The key properties of $f$ are:
\begin{itemize}
    \item $f$ is strictly convex with invertible gradient
    \begin{align*}
    &u:\R^{C-1} \to \Tilde{\Delta}^{C-1},\\
    u(\mathbf{x}) &= \left(\frac{\exp(x_i)}{1+ \sum_{k=1}^{C-1} \exp(x_k)}\right)_{1\leq i \leq C-1}.
    \end{align*}
    \item the Hessian of $f$, given by $J_{u}(\mathbf{x})$, is positive definite for all $\mathbf{x} \in \R^{C-1}$.
\end{itemize}

We refer to $f$ and $u$ as $\text{LogSumExp}^+$ and $\text{softmax}^+$ respectively. Let $v^{-1}:\R^{C-1} \to \R^{C-1}$ be defined as
\begin{align*}
    v^{-1} = (\nabla g_1) \circ (\nabla g_2) \circ \dots \circ (\nabla g_B)
\end{align*}
where $g_1, g_2, \dots g_B$ are FICNNs. Then any function $u\circ v^{-1}$ learned by \textsc{LegendreTron} is the gradient of a twice-differentiable Legendre function and is therefore, the inverse of a canonical link function.
\end{theorem}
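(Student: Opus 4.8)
The plan is to settle the two bullet points about $f$ by direct computation and then build the final claim by iterating Theorem~\ref{theorem:maps_with_pd_jacobians_closed_under_composition}. For the first part I would differentiate $f$ to get $\nabla f(\mathbf{x}) = \bigl(e^{x_i}/(1+\sum_k e^{x_k})\bigr)_{1\le i\le C-1} = u(\mathbf{x})$; each coordinate is positive and $\sum_{i=1}^{C-1} u_i(\mathbf{x}) = \bigl(\sum_k e^{x_k}\bigr)/\bigl(1+\sum_k e^{x_k}\bigr) < 1$, so $u$ maps into $\mathrm{ri}(\Tilde\Delta^{C-1})$. Invertibility follows by exhibiting the inverse explicitly: since $1-\sum_i u_i = 1/(1+\sum_k e^{x_k})$, one recovers $x_i = \log\!\bigl(\Tilde p_i/(1-\sum_k \Tilde p_k)\bigr)$ for $\Tilde p\in\mathrm{ri}(\Tilde\Delta^{C-1})$, which simultaneously shows that the range of $u$ is exactly $\mathrm{ri}(\Tilde\Delta^{C-1})$. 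A short calculation gives $J_u(\mathbf{x}) = \mathrm{diag}(u) - uu^{\top}$; writing $u_0 := 1-\sum_{i=1}^{C-1} u_i > 0$, for any $\mathbf{a}\neq 0$ one has $\mathbf{a}^{\top} J_u(\mathbf{x})\,\mathbf{a} = \sum_i u_i a_i^2 - (\sum_i u_i a_i)^2 = \mathrm{Var}(A) > 0$, where $A$ takes value $a_i$ with probability $u_i$ and value $0$ with probability $u_0$; vanishing variance would force $A$ constant, impossible since $u_0>0$ and $\mathbf{a}\neq 0$. Hence $J_u\succ 0$ everywhere, so $f$ is strictly convex, and being $C^\infty$ it is Legendre, consistent with invertibility of $\nabla f$ via \citet[Theorem 26.5]{rockafellar1970convex}.

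For the second part I would first record that each FICNN $g_i$ is $C^\infty$ (softplus is smooth), is strongly convex thanks to the $s(w_{i,1})\|\mathbf{x}\|^2/2$ term, and therefore has a symmetric positive definite Hessian everywhere and a gradient $\nabla g_i$ that is a bijection of $\R^{C-1}$ onto itself (strong convexity makes $g_i^{*}$ finite and differentiable on all of $\R^{C-1}$, and $\nabla g_i^{*} = (\nabla g_i)^{-1}$). I would then prove by downward induction on $j\in\{B,\dots,1\}$ that $(\nabla g_j)\circ\cdots\circ(\nabla g_B)$ is the gradient of a twice-differentiable Legendre function $G_j$ with $\nabla^2 G_j\succ 0$ everywhere: the base case is $\nabla g_B$ itself, and the inductive step applies Theorem~\ref{theorem:maps_with_pd_jacobians_closed_under_composition} with the role of $f$ played by $\nabla G_{j+1}$ (symmetric positive definite Jacobian by the inductive hypothesis) and that of $g$ played by $\nabla g_j$ (symmetric positive definite Jacobian as above), using that the theorem's conclusion in fact delivers a Jacobian with strictly positive eigenvalues. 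Taking $j=1$ shows $v^{-1} = \nabla G$ for a twice-differentiable Legendre $G$ with $\nabla^2 G\succ 0$ everywhere, and $v^{-1}$ is a composition of bijections of $\R^{C-1}$, hence itself a bijection of $\R^{C-1}$.

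One further application of Theorem~\ref{theorem:maps_with_pd_jacobians_closed_under_composition}, now with $f = u$ (Jacobian $J_u\succ 0$ by the first part) and $g = v^{-1}$ (Jacobian $\nabla^2 G\succ 0$), gives $u\circ v^{-1} = \nabla F$ for a twice-differentiable Legendre function $F$. Since $v^{-1}$ is onto $\R^{C-1}$ and $u$ maps $\R^{C-1}$ bijectively onto $\mathrm{ri}(\Tilde\Delta^{C-1})$, the range of $u\circ v^{-1}$ is exactly $\mathrm{ri}(\Tilde\Delta^{C-1})$. To conclude I would invoke the Legendre machinery of Section~\ref{section:designing_multiclass_links}: set $-\Tilde{\underline{L}} := F^{*}$, a Legendre function on $\mathrm{ri}(\Tilde\Delta^{C-1})$ rendering $\Tilde{\underline{L}}$ concave, so that by Definition~\ref{definition:canonical_link} the associated canonical link is $\Tilde\psi = -\nabla\Tilde{\underline{L}} = \nabla F^{*}$ and, by \citet[Theorem 26.5]{rockafellar1970convex}, $\Tilde\psi^{-1} = (\nabla F^{*})^{-1} = \nabla F = u\circ v^{-1}$; the corresponding proper loss is recovered from $\Tilde{\underline{L}}$ as in Proposition~\ref{proposition:proper_loss_bregman}. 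The step I expect to be the main obstacle is the bookkeeping around the \emph{strengthened} conclusion of Theorem~\ref{theorem:maps_with_pd_jacobians_closed_under_composition}, namely that its output has a positive definite Hessian (not merely a convex, or even strictly convex, generator with possibly degenerate Hessian), since that property is exactly what licenses both the induction over the FICNN layers and the final conjugation identifying $u\circ v^{-1}$ with an inverse canonical link defined on all of $\mathrm{ri}(\Tilde\Delta^{C-1})$.
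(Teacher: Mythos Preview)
Your proposal is correct and follows the same overall architecture as the paper: establish that $J_u$ is positive definite, then iterate Theorem~\ref{theorem:maps_with_pd_jacobians_closed_under_composition} through the FICNN layers and finally through $u$. Two local differences are worth flagging. First, for positive definiteness of $J_u(\mathbf{x}) = \mathrm{diag}(u) - uu^{\top}$ you use a variance argument, whereas the paper instead shows strict diagonal dominance: $M_{ii} - \sum_{j\neq i}|M_{ij}| = u_i\bigl(1 - \sum_k u_k\bigr) > 0$; both are short and valid, and your argument has the mild advantage of working for any $u$ in the open simplex without recomputing entries. Second, to propagate positive definiteness of the Jacobian through the recursion you read the strengthened conclusion directly out of the proof of Theorem~\ref{theorem:maps_with_pd_jacobians_closed_under_composition}; the paper instead invokes Theorem~\ref{theorem:composition_gradient_iff_psd} for positive \emph{semi}-definiteness and symmetry of $J_{v^{-1}}$, and then upgrades to definiteness via the determinant $|J_{v^{-1}}| = \prod_i |\nabla^2 g_i| > 0$ from strong convexity of each $g_i$. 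One cosmetic slip: in your inductive step the labels $f$ and $g$ in Theorem~\ref{theorem:maps_with_pd_jacobians_closed_under_composition} are swapped (you want the outer map $\nabla g_j$ as $f$ and the inner map $\nabla G_{j+1}$ as $g$ so that $f\circ g = (\nabla g_j)\circ\cdots\circ(\nabla g_B)$); since the hypotheses are symmetric in $f$ and $g$ this is harmless.
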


With this specification, any function $u \circ v^{-1}$ learned via \textsc{LegendreTron} is the gradient of a twice-differentiable Legendre function which can serve as an inverse canonical link function. Moreover, we can deduce that any inverse canonical link or gradient of a twice-differentiable Legendre function, can be approximated by the architecture of $u \circ v^{-1}$ defined in Theorem \ref{theorem:legendretron_is_legendre}. 
\begin{corollary}
\label{corollary:uv_approximates_arbitrary_legendre_gradients}
    Let $u:\R^{C-1} \to \Tilde{\Delta}^{C-1}$ and $v^{-1}:\R^{C-1} \to \R^{C-1}$ be defined as in Theorem \ref{theorem:legendretron_is_legendre} with $v^{-1}$ parameterised by $\boldsymbol{\theta}$. Define $\mathcal{C}(\Omega)$ as the set of twice-differentiable convex functions with positive definite Hessian everywhere for a compact set $\Omega \subset \R^{C-1}$, and $\mathcal{F} = \{f: f:\R^{C-1} \to \Tilde{\Delta}^{C-1}, \exists \boldsymbol{\theta} \text{ such that } u \circ v^{-1}=f\}$. Then $\mathcal{F}$ is dense in $\mathcal{C}(\Omega)$.
\end{corollary}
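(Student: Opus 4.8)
} I read the statement as asserting uniform-on-$\Omega$ density of the maps $u\circ v^{-1}\in\mathcal{F}$ inside the family of inverse canonical links, i.e.\ of gradients $\nabla F$ of twice-differentiable convex $F$ with positive definite Hessian near $\Omega$ whose values on $\Omega$ lie in the relative interior $\mathrm{ri}(\Tilde{\Delta}^{C-1})$ (the potential-function version follows verbatim, up to an additive constant). Write $\Phi:=\mathrm{LogSumExp}^+$; by Theorem~\ref{theorem:legendretron_is_legendre}, $u=\nabla\Phi$ is a $C^\infty$ diffeomorphism of $\R^{C-1}$ onto $\mathrm{ri}(\Tilde{\Delta}^{C-1})$ with $u^{-1}=\nabla\Phi^*$, where $\Phi^*$ is Legendre with positive definite Hessian $(\nabla^2\Phi)^{-1}$.

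\emph{Step 1 (reduction to ICNN-gradient approximation).} Fix a target $f=\nabla F\in\mathcal{C}(\Omega)$ and set $T:=u^{-1}\circ f=\nabla\Phi^*\circ\nabla F$. The Jacobians of $\nabla\Phi^*$ (on $\mathrm{ri}(\Tilde{\Delta}^{C-1})$) and of $\nabla F$ (on a neighbourhood of $\Omega$) are symmetric and positive definite, so Theorem~\ref{theorem:maps_with_pd_jacobians_closed_under_composition}, applied locally around $\Omega$ (its conclusion being a local property of $T$), shows $T=\nabla G$ for a twice-differentiable strictly convex $G$ with positive definite Hessian near $\Omega$. Since $\Omega$ and $f(\Omega)\subset\mathrm{ri}(\Tilde{\Delta}^{C-1})$ are compact, $T(\Omega)$ is compact; choose a compact neighbourhood $K$ of it. Then $u$ is uniformly continuous on $K$, so any sequence $v_n^{-1}\to T$ uniformly on $\Omega$ (with $v_n^{-1}(\Omega)\subseteq K$ for large $n$, automatic from uniform convergence) satisfies $u\circ v_n^{-1}\to u\circ T=f$ uniformly on $\Omega$. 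Thus it suffices to approximate $\nabla G$ uniformly on $\Omega$ by maps $v^{-1}=(\nabla g_1)\circ\cdots\circ(\nabla g_B)$, which I do with $B=1$.

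\emph{Step 2 (universal approximation of convex potentials and their gradients).} I claim the FICNN family of the stated architecture is dense, uniformly on compacta, in the cone of continuous convex functions on $\R^{C-1}$. For the softplus max-affine core $h_1$: as the depth $M$ and width $H$ grow it uniformly approximates, on any compact set, an arbitrary maximum of finitely many affine functions (softplus combinations approximate $\max$, while the unconstrained input skips $l_{1,k}(\mathbf{x})$ together with the positive-weight layers realise arbitrary affine maps), and such maxima are uniformly dense among continuous convex functions on compacta; this is the ICNN universal-approximation result of \citet{amos2017icnn} transported to softplus units. The regulariser $s(w_{1,1})\|\mathbf{x}\|^2/2$ has coefficient ranging over $(0,\infty)$ and may be made $<\delta$; approximating $G-\tfrac{\delta}{2}\|\cdot\|^2$ by the core and adding back $\tfrac{\delta}{2}\|\cdot\|^2$ yields uniform approximation of $G$ on a compact neighbourhood $\Omega'\supseteq\Omega$ by functions $g_1^{(n)}$. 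Finally, since the $g_1^{(n)}$ are convex and converge uniformly to the differentiable $G$, their gradients converge uniformly on compact subsets (\citet[Theorem~25.7]{rockafellar1970convex}): $v_n^{-1}:=\nabla g_1^{(n)}\to\nabla G=T$ uniformly on $\Omega$, with each $v_n^{-1}$ of the required form.

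\emph{Step 3 (conclusion and main obstacle).} Combining the two steps, $u\circ v_n^{-1}\to f$ uniformly on $\Omega$, and each $u\circ v_n^{-1}\in\mathcal{F}$ by Theorem~\ref{theorem:legendretron_is_legendre}; hence $\mathcal{F}$ is dense in $\mathcal{C}(\Omega)$. The crux is the first assertion of Step 2 --- that this specific parameterisation (positive-weight linear layers interleaved with unconstrained input skips and softplus nonlinearities, following \citet{huang2021cpflows}) is expressive enough to approximate every convex function uniformly on compacta; the remaining ingredients (the composition theorem and the upgrade from convergence of convex functions to convergence of their gradients) are standard. A secondary, purely bookkeeping, point is that $\nabla\Phi^*$ lives only on the open simplex, so Step 1 must be run on a neighbourhood of $\Omega$ on which $f$ stays strictly interior.
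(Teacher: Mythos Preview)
Your proof is correct and follows the same structural approach as the paper: reduce the target $f$ to $u^{-1}\circ f=\nabla G$ via Theorem~\ref{theorem:maps_with_pd_jacobians_closed_under_composition}, then approximate $\nabla G$ by FICNN gradients on the compact set $\Omega$. The only difference is that where the paper's proof invokes Proposition~3 of \citet{huang2021cpflows} for the second step, you sketch that approximation argument directly (density of max-affine functions among continuous convex functions on compacta, softplus-ICNN approximation thereof, and the upgrade from uniform convergence of convex functions to uniform convergence of their gradients via \citet[Theorem~25.7]{rockafellar1970convex}); this makes your write-up more self-contained but is not a different route.
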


Algorithm \ref{algorithm:legendretron} describes \textsc{LegendreTron} in detail. We conclude this section with remarks on our model design, and connections between \textsc{LegendreTron}, multinomial logistic regression and the log loss.

\begin{remark}
The basis of our design of \textsc{LegendreTron} comes from requiring $J_{u \circ v^{-1}}(\mathbf{x})$ to be positive semi-definite and with a determinant that satisfies
\begin{align*}
    |J_{u \circ v^{-1}}(\mathbf{x})| = |J_{u}(v^{-1}(\mathbf{x}))||J_{v^{-1}}(\mathbf{x})| >0 \text{ for all $\mathbf{x} \in \R^{C-1}$}.
\end{align*}
A direct way to guarantee this is by setting $u$ and $v^{-1}$ to be twice-differentiable functions with positive definite Hessians. To the best of our knowledge, only the CP-Flow architecture \citep{huang2021cpflows} satisfies this property among invertible networks in normalising flows literature; making it our choice for $v^{-1}$. While it is possible to set other functions as $u$, it is generally difficult to elicit \emph{invertible} functions that squash inputs to $\Tilde{\Delta}^{C-1}$ aside from variants of softmax. We have set $u=\text{softmax}^+$ as it simplifies to the well-known sigmoid when $C=2$, which was used as the analogous squashing function in ISGP \citep{walder2020all}. 
\end{remark}

\begin{remark}
As $\text{LogSumExp}^+$ is twice-differentiable and Legendre, its gradient $\text{softmax}^+$ is a valid inverse canonical link function since it maps to $\Tilde{\Delta}^{C-1}$. However, we note that setting $\Tilde{\psi}^{-1}=\text{softmax}^+$ results in learning only the parameters $\mathbf{W}$ and $\mathbf{b}$ which is equivalent to formulating multinomial logistic regression as a generalised linear model with link $\Tilde{\psi}(\Tilde{p}) = \left(\log\left(\frac{\Tilde{p}_i}{1 - \sum_{k=1}^{C-1}\Tilde{p}_k }\right)\right)_{1\leq i \leq C-1}$ with corresponding proper loss $\ell = -((\Tilde{\psi} \circ \Pi) \cdot J_{\Pi})$ which can be shown to be the log loss (or \emph{cross-entropy}). That is, \textsc{LegendreTron} with $v^{-1}$ as the identity map is equivalent to multinomial logistic regression or logistic regression when $C=2$. In this case, Algorithm \ref{algorithm:legendretron} would only optimise parameters of a linear model without loss learning; by using the log loss. Comparisons between \textsc{LegendreTron} against multinomial logistic regression or logistic regression illustrate performance differences between learning a proper loss for the dataset and optimising with respect to log loss (cross-entropy). These results are provided in Tables \ref{table:test_auc} and \ref{table:libsvm_results}, and Figure \ref{figure:classification_accuracy}.
\end{remark}

\begin{algorithm}[tb]
   \caption{\textsc{LegendreTron}}
   \label{algorithm:legendretron}
\begin{algorithmic}
   \STATE {\bfseries Input:} sample $\mathcal{S}\subset\mathcal{D}$, number of iterations $T$, number of FICNNs $B$, hidden layer dimension size $H$, number of layers $M$, squashing function $u$.
   \STATE Initialise $\mathbf{W}$ and $\mathbf{b}$.
   \STATE Initialise $g_1, g_2, \dots g_B$ each with $M$ layers of dimension size $H$, and denote their joint set of parameters $\boldsymbol{\theta}$.
   \FOR{$i=1$ {\bfseries to} $T$}
   \STATE Set $v^{-1} = (\nabla g_1) \circ (\nabla g_2) \circ \dots \circ (\nabla g_B)$.
   \FOR{each $(\mathbf{x}_n, y_n) \in \mathcal{S}$}
   \STATE Compute $\mathbf{z}_n = \mathbf{W}\mathbf{x}_n + \mathbf{b}$.
   \STATE Compute $\hat{p}(\mathbf{z}_n)=(u \circ v^{-1})(\mathbf{z}_n)$.
   \ENDFOR
   \STATE Compute $\mathbb{E}_{\mathcal{S}}[\mathcal{L}(\hat{p}(\mathbf{z}), y)]$ by Monte Carlo where $\mathcal{L}$ is the log-likelihood of the Categorical distribution.
   \STATE Update $\mathbf{W}$, $\mathbf{b}$ and $\boldsymbol{\theta}$ by backpropagation.
   \ENDFOR
   \STATE {\bfseries Output:} $\mathbf{W}$, $\mathbf{b}$ and $g_1, g_2, \dots g_B$.
\end{algorithmic}
\end{algorithm}

\section{Experiments}
\label{section:experiments}
In this section, we provide numerical comparisons between \textsc{LegendreTron}, multinomial logistic regression and other existing methods that also aim to jointly learn models and proper canonical losses. For our experiments, we set $\text{softmax}^+$ as the squashing function $u$ for both \textsc{LegendreTron} and multinomial logistic regression. For a practical and numerically stable implementation, we also map probability estimates to the log scale by deriving an alternate Log-Sum-Exp trick for $\text{softmax}^+$. We defer the full experimental details to Appendix \ref{appendix:experimental_details}. 

All experiments were performed using \texttt{PyTorch} \citep{pytorch2019} and took roughly one CPU month to complete\footnote{The total run time for our experiments is favourable relative to the reported two CPU months for the \textsc{ISGP-Linkgistic} algorithm from \citet{walder2020all}.}. CPU run times for the aloi dataset, which had the largest number of classes ($1,000$), were respectively 4 hours and 0.75 hours for \textsc{LegendreTron} and multinomial logistic regression. We note that the difference in run times for this experiment are in part due to the larger number of epochs ($360$), larger number of blocks $B$, autograd and backpropagation operations to update $v^{-1}$ for a much larger number of classes. Average GPU run times on a P100 for MNIST experiments in Table \ref{table:vgg_acc}, were 2.32 and 2.12 hours for \textsc{VGGTron} and VGG respectively. These run times demonstrate the relative efficiency and applicability of loss learning for most datasets.

\begin{table}
  \caption{Test AUC for generalised linear models with various link methods (ordering in decreasing average). See text for details.}
  \label{table:test_auc}
  \centering
  \begin{tabular}{lll}
    \toprule
         & MNIST & FMNIST \\
    \midrule
    \rowcolor[gray]{0.7}
    \textsc{LegendreTron} & 99.9\%  & 99.2\% \\
    ISGP-Linkgistic & 99.9\%  & 99.2\% \\
    GP-Linkgistic & 99.9\% & 99.1\% \\
    Logistic regression & 99.9\% & 98.5\%  \\
    GLMTron & 99.6\% & 98.1\% \\
    \textsc{BregmanTron} & 99.7\% & 97.9\%  \\
    \textsc{BregmanTron}$_{\texttt{label}}$ & 99.6\% & 97.7\%  \\
    \textsc{BregmanTron}$_{\texttt{approx}}$ & 99.3\% & 94.6\%  \\
    \textsc{SlIsotron} & 94.6\% & 90.7\%  \\
    \bottomrule
  \end{tabular}
\end{table}

\paragraph{MNIST Binary Problems} Binary problems are a special case of our setting where $C=2$, so \textsc{LegendreTron} is readily applicable. In Table \ref{table:test_auc}, we compared \textsc{LegendreTron} against \textsc{ISGP-Linkgistic} \citep{walder2020all} and \textsc{BregmanTron} \citep{nock2020supervised}, as both algorithms also aim to learn proper canonical losses for binary problems. We also compared with other baselines in these two works including the \textsc{SlIsotron} algorithm from \citet{kakade2011efficient}. Experiment details can be found in Section 6 of \citet{nock2020supervised}. Our model successfully matches the (binary specific) \textsc{ISGP-Linkgistic} baseline, which was the strongest algorithm in test AUC performance from the experiments of \citet{walder2020all}.

\paragraph{MNIST Multiclass Problems using Linear Models} For the three MNIST-like datasets \citep{lecun2010mnist,xiao2017fashionmnist,clanuwat2018kushijimnist}, we compared \textsc{LegendreTron} against multinomial logistic regression and \textsc{ISGP-Linkgistic}, since the latter is the strongest algorithm in ten-class classification test accuracy performance based on the experiments within \citet{walder2020all}. \textsc{ISGP-Linkgistic} approaches the multiclass problem by learning proper canonical losses for the 10 component 1-vs-rest problems. Our experimental results in Figure \ref{figure:classification_accuracy} show that \textsc{LegendreTron} and multinomial logistic regression outperform the \textsc{ISGP-Linkgistic} baseline on all three datasets. These results illustrate our conjecture that properness with respect to losses and models in component problems in a multiclass setting, does not imply optimality of class predictions or probability estimates. By respecting the true problem structure, proper multiclass losses allow the model to learn probability estimates that are able to better distinguish between all the classes at hand. Our results also show that \textsc{LegendreTron} either matches or outperforms multinomial logistic regression on all three datasets. This is most notable on the Kuzushiji-MNIST dataset where \textsc{LegendreTron} outperforms multinomial logistic regression by a reasonable margin.

\paragraph{MNIST Multiclass Problems using Nonlinear Models}
We note that the architecture of $u \circ v^{-1}$ does not restrict us to linear models. In Table \ref{table:vgg_acc}, we provided experimental results of how loss learning can improve non-linear models. Specifically, we replace the linear model components of \textsc{LegendreTron} and multinomial logistic regression with a VGG-5 architecture; which we refer to as \textsc{VGGTron} and VGG respectively in Table \ref{table:vgg_acc}. Our results show that learning proper losses can improve the performance of non-linear models. A more comprehensive survey of how loss learning can improve model performance for classification tasks is an avenue for future work, due to the variety of architectures and datasets.

\begin{table}
  \caption{Test classification accuracies of VGGTron and VGG for the \emph{MNIST}, \emph{Kuzushiji-MNIST} and \emph{Fashion-MNIST} datasets. See text for details.}
  \label{table:vgg_acc}
  \centering
  \begin{tabular}{llll}
    \toprule
         & MNIST & FMNIST & KMNIST\\
    \midrule
    \rowcolor[gray]{0.7}
    \textsc{VGGTron} & 99.59\%  & 92.88\% & 98.26\%\\
    VGG & 99.40\% & 92.80\%  & 98.12\%\\
    \bottomrule
  \end{tabular}
\end{table}

\paragraph{Other Multiclass Problems and Label Noise} We also compared \textsc{LegendreTron} against multinomial logistic regression on 15 datasets that are publicly available from the LIBSVM library \citep{chang2011libsvm}, the UCI machine learning repository \citep{asuncion2007uci, dua2017uci}, and the Statlog project \citep{king1995statlog}. We note that we did not compare our proposed method with other multiclass classification methods such as kernel methods explored in \citet{zien2007multiclass} and \citet{li2018multi}, as these methods are centred on the task of classification, whereas our focus is on jointly learning multiclass probabilities and proper canonical losses through the canonical link function. To assess the robustness against label noise, we also compare the classification accuracy of \textsc{LegendreTron} and multinomial logistic regression where labels in the training set are corrupted with probability $\eta$. That is, for any true label $y_n$, we instead train our models on the potentially corrupted label given by 
\begin{align*}
    \Tilde{y}_n = 
    \begin{cases}
        y_n &\text{with probability $1-\eta$},\\
        c &\text{with probability $\eta$ where $c\in \mathcal{Y} \setminus \{y_n\}$}
    \end{cases}.
\end{align*}
We applied \emph{symmetric} label noise in our experiments which is the case where the probability of $\Tilde{y}_n = c$ for each $c\in \mathcal{Y} \setminus \{y_n\}$ is $\frac{\eta}{C-1}$. We run both \textsc{LegendreTron} and multinomial logistic regression for each dataset 20 times, where each run randomly splits the dataset into 80\% training and 20\% testing sets. Our results in Table \ref{table:libsvm_results} show that \textsc{LegendreTron} outperforms multinomial logistic regression under a $t$-test at 99\% significance for most datasets and label noise settings. The performance of \textsc{LegendreTron} is on par with multinomial logistic regression on the \emph{svmguide2}, \emph{wine} and \emph{iris} datasets. Multinomial logistic regression only statistically outperforms \textsc{LegendreTron} on the \emph{dna} dataset. \textsc{LegendreTron} consistently outperforms multinomial logistic regression especially strongly on problems where the number of classes is greater than $10$. The better performance of \textsc{LegendreTron} can partially be attributed to greater model capacity afforded by $v^{-1}$ which allows logit estimates to adapt to problems with more classes by adding more nonlinearities. We note that the Lipschitz and strongly monotone properties of $\nabla g_1, \nabla g_2,\dots ,\nabla g_B$ are dependent only on inputs which remain uncorrupted so probability estimates would respect the true rankings of classes by design. We conjecture that these properties allow for more adaptive shrinking or expanding of logit variations depending on the level of label noise present; offering a form of tolerance to label noise. 

\begin{figure*}
  \centering
  \includegraphics{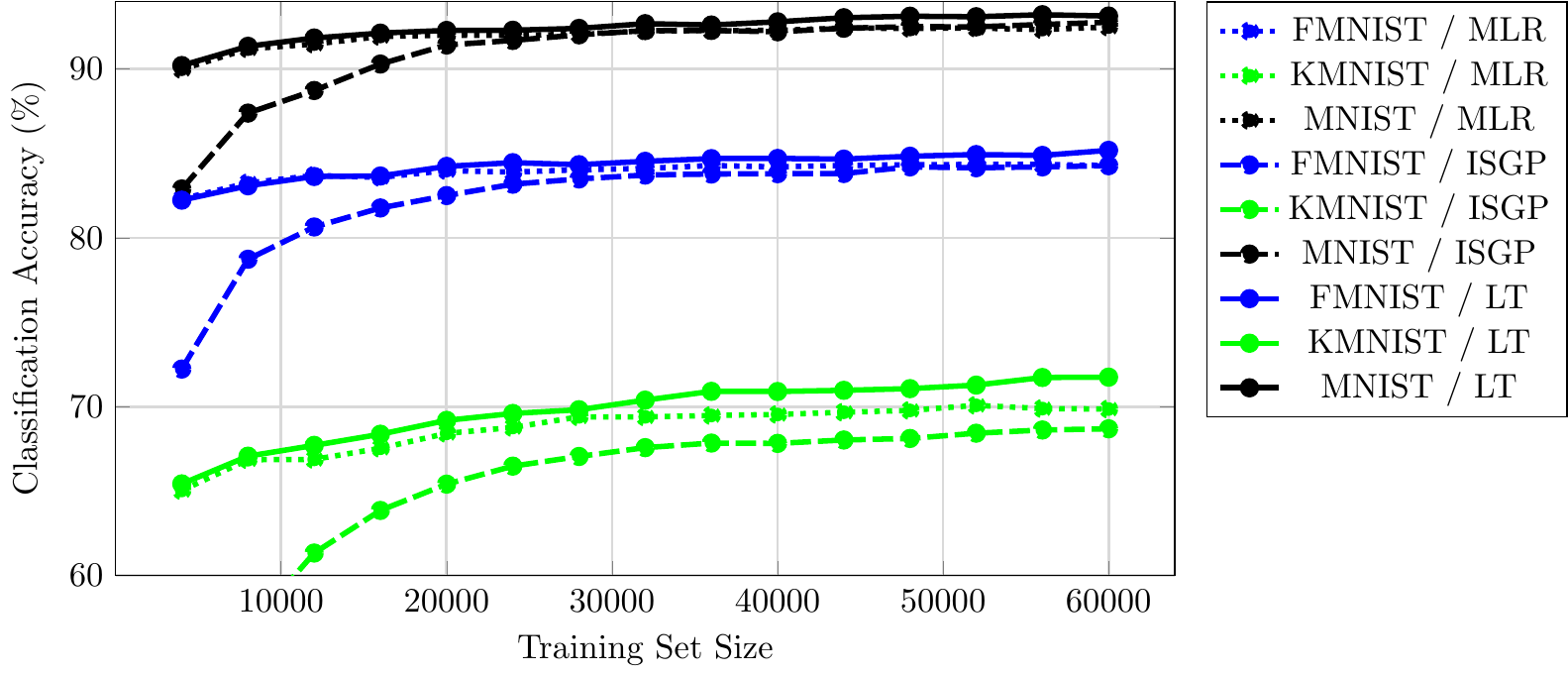}
  \caption{Test performance v.s. training set size for the \emph{MNIST}, \emph{Kuzushiji-MNIST} and \emph{Fashion-MNIST} datasets. We compare the ten-class classification accuracy of \textsc{LegendreTron} (LT), multinomial logistic regression (MLR) and ISGP-L\textsc{inkgistic} (ISGP) where the ISGP combines 10 one-vs-rest binary models while the former two algorithms model the probabilities of all 10 classes jointly.}
  \label{figure:classification_accuracy}
\end{figure*}

\begin{table*}
\caption{Average test classification accuracies (\%) for \textsc{LegendreTron} (LT) and multinomial logistic regression (MLR) on LIBSVM, UCI and Statlog datasets; at varying levels of label noise ($\eta$). Numbers of the method are bolded when it performs statistically better at a significance level of 99\% under a $t$-test. Absence of bolding indicates both methods have statistically similar performance.}
\label{table:libsvm_results}
\vskip 0.15in
\begin{center}
\begin{small}
\begin{tabular}{lcccccccc}
\toprule
    \multirow{2}{*}{Dataset} &
    \multirow{2}{*}{\# Features} &
    \multirow{2}{*}{\# Classes} &
    \multicolumn{2}{c}{$\eta=0\%$} &
    \multicolumn{2}{c}{$\eta=20\%$} &
    \multicolumn{2}{c}{$\eta=50\%$} \\
    & & & LT & MLR & LT & MLR & LT & MLR \\
\midrule
aloi & 128 & 1,000 & \textbf{88.11$\pm$0.03} & 10.34$\pm$0.42 & \textbf{83.03$\pm$0.06} & 7.07$\pm$0.45 & \textbf{75.23$\pm$0.07} & 3.53$\pm$0.29\\
sector & 55,197 & 105 & \textbf{89.71$\pm$0.18} & 8.77$\pm$0.73 & \textbf{81.00$\pm$0.28} & 4.12$\pm$0.44 & \textbf{57.38$\pm$0.31} & 3.17$\pm$0.47\\
letter & 16 & 26 & \textbf{79.82$\pm$0.30} & 53.37$\pm$0.25 & \textbf{74.17$\pm$0.21} & 51.24$\pm$0.28 & \textbf{64.28$\pm$0.26} & 46.78$\pm$0.41\\
news20 & 62,061 & 20 & \textbf{75.65$\pm$0.72} & 63.09$\pm$0.58 & \textbf{73.48$\pm$0.20} & 50.49$\pm$1.16 & \textbf{51.72$\pm$0.16} & 31.54$\pm$1.83\\
Sensorless & 48 & 11 & \textbf{88.31$\pm$0.19} & 34.42$\pm$0.46 & \textbf{82.63$\pm$0.99} & 32.70$\pm$0.50 & \textbf{52.02$\pm$0.78} & 29.35$\pm$0.84\\
vowel & 10 & 11 & \textbf{79.72$\pm$1.03} & 44.58$\pm$1.08 & \textbf{63.77$\pm$1.36} & 43.44$\pm$1.17 & \textbf{40.94$\pm$1.61} & 35.42$\pm$1.45\\
\midrule
usps & 256 & 10 & \textbf{95.23$\pm$0.16} & 93.79$\pm$0.17 & 92.88$\pm$0.15 & 92.95$\pm$0.19 & 90.23$\pm$0.26 & 90.48$\pm$0.27\\
segment & 19 & 7 & \textbf{95.95$\pm$0.24} & 87.86$\pm$0.40 & \textbf{92.21$\pm$0.40} & 87.28$\pm$0.40 & \textbf{86.56$\pm$0.47} & 82.75$\pm$0.46\\
satimage & 36 & 6 & \textbf{86.97$\pm$0.19} & 83.93$\pm$0.28 & \textbf{84.93$\pm$0.25} & 81.16$\pm$0.28 & 77.44$\pm$0.29 & 77.39$\pm$0.29\\
glass & 36 & 6 & 58.72$\pm$1.94 & 52.09$\pm$1.88 & 53.72$\pm$1.98 & 50.47$\pm$2.11 & 42.56$\pm$1.92 & 45.47$\pm$1.67\\
vehicle & 18 & 4 & \textbf{76.91$\pm$0.65} & 64.94$\pm$0.43 & \textbf{73.59$\pm$0.79} & 63.06$\pm$0.53 & \textbf{60.94$\pm$1.25} & 55.18$\pm$1.20\\
dna & 180 & 3 & 92.79$\pm$0.30 & \textbf{94.43$\pm$0.19} & 82.61$\pm$0.51 & \textbf{89.55$\pm$0.31} & 58.23$\pm$1.05 & \textbf{64.18$\pm$0.81}\\
svmguide2 & 20 & 3 & 56.01$\pm$1.40 & 56.01$\pm$1.40 & 56.01$\pm$1.40 & 56.01$\pm$1.40 & 51.65$\pm$2.81 & 52.41$\pm$3.04\\
wine & 13 & 3 & 96.94$\pm$1.14& 97.78$\pm$0.59 & 90.97$\pm$1.92 & 96.25$\pm$0.99 & 69.44$\pm$2.89 & 77.36$\pm$2.46\\
iris & 4 & 3 & 86.67$\pm$3.89 & 83.00$\pm$2.08 & 80.00$\pm$3.71 & 81.50$\pm$2.27 & 63.50$\pm$5.13 & 70.67$\pm$3.83\\
\bottomrule
\end{tabular}
\end{small}
\end{center}
\vskip -0.1in
\end{table*}

\smallskip
\section{Conclusion and Broader Impact}
\label{section:conclusion}
In this work, we proposed a general approach which jointly learns proper canonical losses and multiclass probabilities. Our contributions advance the recent work on learning losses with probabilities based on the seminal work within \citet{kakade2011efficient, nock2020supervised, walder2020all} by providing a natural extension to the multiclass setting. The practical nature and generality of our model is owed to the general parameterisation of Fully Input Convex Neural Networks, with theoretical support from Legendre functions, structures from information geometry and hallmark results from convex analysis. 

By grounding losses in properness for the multiclass setting, we have demonstrated that our model improves upon existing methods that aim to solve multiclass problems through binary reductions, and also outperforms the natural baseline of multinomial logistic regression. Separately, we have also provided conditions under which a composition of gradients of differentiable convex functions is the gradient of another differentiable convex function. 

While it is possible for advances in machine learning to bring positive and negative societal impacts, the present work remains general and not specific to any application so it is unlikely to bring about any immediate negative societal impact. We anticipate that our results will find applications in multiclass classification and probability estimation, as well as variational inference.

\bibliographystyle{plainnat}
\bibliography{main.bib}

\newpage
\appendix
\onecolumn
\section{Convex Analysis: Relevant Background and List of Theorems}
\label{appendix:convex_analysis}
\subsection{Background}
To motivate the results studied in this section, we first note that in general, the composition of two monotone functions in $\R^{C-1}$ is not necessarily another monotone function in $\R^{C-1}$. This means that methods to design monotonic functions in $\R$ cannot be applied to functions defined on $\R^{C-1}$, leaving the methods discussed in Section \ref{section:related_work} unsuitable for the general multiclass setting. Separately, we note that the composition of two gradients of differentiable convex functions is \emph{not} necessarily the gradient of another convex function. In general, to claim that a function $f:\R^{C-1} \to \R^{C-1}$ is the gradient of a convex function $g:\R^{C-1} \to \R$, requires $f$ to satisfy a notion of monotonicity generalised to higher dimensions. The connection between convex functions and their gradients is well known in convex analysis via the notion of \emph{maximal cyclically monotone} functions. This is a combination of two notions of monotonicity: \emph{maximal monotonicity} and \emph{cyclical monotonicity}. These are defined within the following list of definitions and theorems.
\subsection{List of Theorems}
\begin{lemma}
\label{lemma:definiteness_eigenvalues}
Let $A \in \R^{(C-1) \times (C-1)}$ be a square symmetric matrix. Then $A$ has $C-1$ real eigenvalues $\lambda_i$ for $i=1,\dots, C-1$. Moreover, $A$ is
\begin{itemize}
    \item positive definite if and only if $\lambda_i>0$ for $i=1,\dots, C-1$.
    \item positive semi-definite if and only if $\lambda_i\geq0$ for $i=1,\dots, C-1$.
    \item negative definite if and only if $\lambda_i<0$ for $i=1,\dots, C-1$.
    \item negative semi-definite if and only if $\lambda_i\leq0$ for $i=1,\dots, C-1$.
    \item indefinite if there exist $i,j \in \{1,\dots,C-1\}$ such that $\lambda_i >0$ and $\lambda_j<0$.
\end{itemize}
\end{lemma}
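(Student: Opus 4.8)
The plan is to rely entirely on the spectral theorem for real symmetric matrices. First I would invoke the fact that any symmetric $A \in \R^{(C-1)\times(C-1)}$ admits an orthogonal diagonalisation $A = Q\Lambda Q^{\top}$, where $Q$ is orthogonal and $\Lambda = \mathrm{diag}(\lambda_1,\dots,\lambda_{C-1})$ collects the eigenvalues of $A$; since $\Lambda$ is a real diagonal matrix, this already establishes that $A$ has $C-1$ real eigenvalues (counted with multiplicity), which proves the first assertion.

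For the definiteness statements, the key reduction is the change of variables $\mathbf{y} = Q^{\top}\mathbf{x}$, which is a linear bijection of $\R^{C-1}$ onto itself (with $\mathbf{x} = 0 \iff \mathbf{y} = 0$) because $Q$ is invertible. Under this substitution the quadratic form diagonalises: $\mathbf{x}^{\top} A \mathbf{x} = \mathbf{y}^{\top}\Lambda\mathbf{y} = \sum_{i=1}^{C-1}\lambda_i y_i^2$. I would then read off each equivalence from this identity. If every $\lambda_i > 0$ then $\sum_{i=1}^{C-1}\lambda_i y_i^2 > 0$ whenever $\mathbf{y}\neq 0$, giving positive definiteness; conversely, evaluating at $\mathbf{y} = \mathbf{e}_i$, the $i$-th standard basis vector (so that $\mathbf{x} = Q\mathbf{e}_i$ is the $i$-th column of $Q$, which is nonzero), yields $\mathbf{x}^\top A \mathbf{x} = \lambda_i$, forcing $\lambda_i > 0$. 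The positive semi-definite case is handled identically with strict inequalities replaced by weak ones.

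The negative (semi-)definite cases follow immediately by applying the positive cases to $-A$, whose eigenvalues are exactly $-\lambda_1,\dots,-\lambda_{C-1}$, together with the observation that $A$ is negative (semi-)definite if and only if $-A$ is positive (semi-)definite. Finally, for the indefinite case, which is only asserted as a sufficient condition, if $\lambda_i > 0$ and $\lambda_j < 0$ for some indices $i,j$, then choosing $\mathbf{x} = Q\mathbf{e}_i$ gives $\mathbf{x}^{\top}A\mathbf{x} = \lambda_i > 0$ while $\mathbf{x} = Q\mathbf{e}_j$ gives $\mathbf{x}^{\top}A\mathbf{x} = \lambda_j < 0$, so the quadratic form takes both signs and $A$ is indefinite.

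There is no genuine obstacle here: the entire lemma is a routine consequence of the spectral theorem, and the only point that warrants a moment's care is to note that the congruence $\mathbf{x}\mapsto Q^{\top}\mathbf{x}$ preserves the relevant quantifiers (``for all $\mathbf{x}$'' and ``for all $\mathbf{x}\neq 0$'') precisely because $Q$ is a bijection fixing the origin, so nothing is lost in passing between the forms $\mathbf{x}^{\top}A\mathbf{x}$ and $\sum_{i=1}^{C-1}\lambda_i y_i^2$.
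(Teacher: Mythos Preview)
Your proof is correct and is the standard argument via the spectral theorem. The paper does not actually supply a proof of this lemma: it is listed in the appendix among background results from linear algebra and convex analysis and is simply stated without proof, so there is nothing to compare your argument against beyond noting that your approach is exactly the canonical one any textbook would give.
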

\begin{definition}[{\citep[Definition 12.1]{rockafellar2009variational}}]
\label{definition:monotone}
A function $f:\R^{C-1}\to \R^{C-1}$ is monotone if $\langle f(\mathbf{x}) - f(\mathbf{z}), \mathbf{x} - \mathbf{z}\rangle \geq 0$ for all $\mathbf{x},\mathbf{z} \in \R^{C-1}$. Moreover, it is strictly monotone when the inequality is strict whenever $\mathbf{x} \neq \mathbf{z}$.
\end{definition}
\begin{corollary}
\label{corollary:strictly_monotone_invertible}
Let $f:\R^{C-1}\to \R^{C-1}$ be a strictly monotone function. Then $f$ is invertible.
\end{corollary}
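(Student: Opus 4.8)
The plan is to show that strict monotonicity forces $f$ to be injective, which is precisely the content of ``invertible'' in this setting: injectivity gives a well-defined inverse $f^{-1}$ on the image $f(\R^{C-1})$, exactly as the inverse maps elsewhere in the paper are understood on their appropriate domains (e.g. $\Tilde{\psi}^{-1}$ on $\Tilde{\Delta}^{C-1}$, or $(\nabla f)^{-1}$ on $S=\{\nabla f(\mathbf{x}):\mathbf{x}\in\R^{C-1}\}$).

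First I would argue by contradiction. Suppose $\mathbf{x},\mathbf{z}\in\R^{C-1}$ satisfy $\mathbf{x}\neq\mathbf{z}$ and $f(\mathbf{x})=f(\mathbf{z})$. Then $f(\mathbf{x})-f(\mathbf{z})=\mathbf{0}$, so $\langle f(\mathbf{x})-f(\mathbf{z}),\mathbf{x}-\mathbf{z}\rangle = 0$. But by Definition~\ref{definition:monotone}, strict monotonicity of $f$ requires $\langle f(\mathbf{x})-f(\mathbf{z}),\mathbf{x}-\mathbf{z}\rangle > 0$ whenever $\mathbf{x}\neq\mathbf{z}$, a contradiction. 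Hence $f$ is injective, and therefore invertible onto its range.

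There is essentially no obstacle here; the only point worth stating explicitly is the reading of ``invertible'' as injectivity, since surjectivity onto all of $\R^{C-1}$ is not claimed and would need extra hypotheses (for instance continuity plus coercivity, via the fact that a continuous monotone map on $\R^{C-1}$ is maximal monotone and a coercive maximal monotone operator is surjective). As the statement is phrased, the two-line argument above is all that is needed, and it is exactly the place in the development where strict monotonicity of $u\circ v^{-1}$ (guaranteed by Theorem~\ref{theorem:maps_with_pd_jacobians_closed_under_composition}) will be invoked to legitimise passing from the canonical link to its inverse.
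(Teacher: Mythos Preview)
Your argument is correct and is essentially identical to the paper's own proof: both proceed by contradiction, taking $\mathbf{x}\neq\mathbf{z}$ with $f(\mathbf{x})=f(\mathbf{z})$, computing $\langle f(\mathbf{x})-f(\mathbf{z}),\mathbf{x}-\mathbf{z}\rangle=0$, and contradicting the strict inequality in Definition~\ref{definition:monotone}. Your added remark clarifying that ``invertible'' here means injective (i.e.\ invertible onto the range) is a useful gloss but does not change the approach.
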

\begin{proof}
Suppose $f$ is strictly monotone and assume for a proof by contradiction that $f$ is not invertible. Then there exists $\mathbf{x},\mathbf{z} \in \R^{C-1}$ such that $f(\mathbf{x})=f(\mathbf{z})$. That is, we have $\mathbf{x}-\mathbf{z} \neq \mathbf{0}$ and $f(\mathbf{x})-f(\mathbf{z}) =\mathbf{0}$. This implies that 
\begin{align*}
    \langle f(\mathbf{x}) - f(\mathbf{z}), \mathbf{x} - \mathbf{z}\rangle = 0.
\end{align*}
This is a contradiction since $f$ is strictly monotone. Thus, $f$ must be invertible.
\end{proof}

The following two definitions require the notion of the graph of a function $f:\R^{C-1}\to \R^{C-1}$ which is defined as $\text{gph}(f) = \{(\mathbf{x},\mathbf{y}): \mathbf{x} \in \R^{C-1}, \mathbf{y} \in f(\mathbf{x})\}$.
\begin{definition}[{\citep[Definition 20.20]{bauschke2011convex}}]
\label{definition:maximal_monotone}
Let $f:\R^{C-1}\to \R^{C-1}$ be a monotone function. Then $f$ is maximally monotone if there exists no monotone function $g:\R^{C-1}\to \R^{C-1}$ such that $\text{gph}(f) \subsetneq \text{gph}(g)$.
\end{definition}

\begin{definition}[{\citep[Definition 22.10]{bauschke2011convex}}]
\label{definition:maximal_cyclical_monotone}
Let $f:\R^{C-1} \to \R^{C-1}$. For an arbitrary integer $n\geq 2$, $f$ is $n$-cyclically monotone if for any $\{(\mathbf{x}_i,\mathbf{y}_i)\}_{i=1,\dots,n} \subset \text{gph}(f)$ it follows that
\begin{align*}
    \sum_{i=1}^n\langle \mathbf{y}_i, \mathbf{x}_{i+1} - \mathbf{x}_i \rangle \leq 0 \text{ where $\mathbf{x}_{n+1}= \mathbf{x}_1$}.
\end{align*}
$f$ is cyclically monotone if it is $n$-cyclically monotone for any integer $n\geq 2$. In addition, if $\text{gph}(f) \not\subset \text{gph}(g)$ for any cyclically monotone function $g\neq f$ then $f$ is maximal cyclically monotone.
\end{definition}

\begin{theorem}[{\citep[Theorems 12.17 \& 12.25]{rockafellar2009variational}}]
\label{lemma:maximal_cyclic_monotone_iff_gradient_convex}
Let $f:\R^{C-1} \to \R^{C-1}$. Then $f = \nabla h$ for a differentiable convex function $h:\R^{C-1} \to \R$ if and only if $f$ is maximal cyclically monotone. That is, $f$ is maximally monotone and cyclically monotone.
\end{theorem}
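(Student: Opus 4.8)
The plan is to treat this as the classical characterization of gradients of convex functions due to Rockafellar --- the statement is in fact quoted verbatim from \citet[Theorems 12.17 \& 12.25]{rockafellar2009variational}, so in the paper one simply cites it, but a self-contained argument would split the ``if and only if'' into two implications, with essentially all the work living in the converse.

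For the forward direction I would assume $f=\nabla h$ with $h:\R^{C-1}\to\R$ convex and differentiable. To get cyclic monotonicity, take any $\{(\mathbf{x}_i,\mathbf{y}_i)\}_{i=1}^n\subset\text{gph}(f)$, so $\mathbf{y}_i=\nabla h(\mathbf{x}_i)$, and apply the first-order convexity inequality $h(\mathbf{x}_{i+1})-h(\mathbf{x}_i)\ge\langle\nabla h(\mathbf{x}_i),\mathbf{x}_{i+1}-\mathbf{x}_i\rangle$ at each index; summing around the cycle $\mathbf{x}_{n+1}=\mathbf{x}_1$ telescopes the left-hand side to zero and yields $\sum_i\langle\mathbf{y}_i,\mathbf{x}_{i+1}-\mathbf{x}_i\rangle\le0$, which is exactly Definition~\ref{definition:maximal_cyclical_monotone}. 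For maximality, I would use that a finite convex function on $\R^{C-1}$ that is differentiable has $\partial h(\mathbf{x})=\{\nabla h(\mathbf{x})\}$ everywhere and that $\partial h$ is maximal monotone; hence $\nabla h$ is a maximal monotone operator, and since cyclically monotone operators are in particular monotone, maximality within the larger class forces maximality within the smaller one, so $f$ is maximal cyclically monotone.

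For the converse I would run Rockafellar's construction: fix a base pair $(\mathbf{x}_0,\mathbf{y}_0)\in\text{gph}(f)$ and define, for $\mathbf{x}\in\R^{C-1}$,
\begin{align*}
h(\mathbf{x}) = \sup\Bigg\{\langle\mathbf{y}_m,\mathbf{x}-\mathbf{x}_m\rangle + \sum_{i=0}^{m-1}\langle\mathbf{y}_i,\mathbf{x}_{i+1}-\mathbf{x}_i\rangle\Bigg\},
\end{align*}
the supremum ranging over all finite chains $(\mathbf{x}_1,\mathbf{y}_1),\dots,(\mathbf{x}_m,\mathbf{y}_m)\in\text{gph}(f)$. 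Then I would check, in order: (i) $h$ is convex and lower semicontinuous as a pointwise supremum of affine functions of $\mathbf{x}$; (ii) $h$ is proper --- cyclic monotonicity of $f$ is precisely what gives $h(\mathbf{x}_0)\le0$, while the empty chain gives $h(\mathbf{x}_0)\ge0$, so $h(\mathbf{x}_0)=0$ and $h\not\equiv+\infty$; (iii) $\text{gph}(f)\subseteq\text{gph}(\partial h)$ by a direct estimate from the definition of $h$; (iv) $\partial h$ is itself cyclically monotone (the telescoping argument again, with subgradients), so maximality of $f$ upgrades the inclusion to $f=\partial h$; (v) since $f$ is single-valued with domain all of $\R^{C-1}$, the subdifferential of $h$ is single-valued everywhere, which for a convex function makes $h$ differentiable on $\R^{C-1}$ with $\nabla h=f$, and $\text{dom}(h)\supseteq\text{dom}(\partial h)=\R^{C-1}$ makes $h$ finite-valued. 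This delivers $f=\nabla h$ for a differentiable convex $h:\R^{C-1}\to\R$.

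The main obstacle is step (iii) together with the properness in step (ii): both the well-definedness of the supremum formula and the inclusion $\text{gph}(f)\subseteq\text{gph}(\partial h)$ rely on $n$-cyclic monotonicity for \emph{every} $n$, not merely ordinary monotonicity, which is exactly why the cyclic hypothesis cannot be weakened. A secondary subtlety is promoting $f\subseteq\partial h$ to the equality $f=\partial h$ via maximality, and then invoking the standard but non-trivial convex-analysis fact that a convex function whose subdifferential is everywhere single-valued on an open domain is continuously differentiable there. As noted, all of this is the content of \citet[Theorems 12.17 \& 12.25]{rockafellar2009variational}, so the paper's proof reduces to that reference.
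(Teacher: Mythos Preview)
Your proposal is correct. The paper does not prove this statement at all --- it appears in Appendix~\ref{appendix:convex_analysis} as a cited background result from \citet[Theorems 12.17 \& 12.25]{rockafellar2009variational}, exactly as you anticipated in your opening sentence, and is invoked only to motivate the role of maximal cyclic monotonicity before the paper moves on to the Asplund decomposition route (Theorem~\ref{theorem:asplund_composition_maximal_monotone}) in the proof of Theorem~\ref{theorem:composition_gradient_iff_psd}. Your self-contained sketch is the standard Rockafellar argument and is sound; the only point worth flagging is that in step~(iv) you appeal to ``maximality of $f$'' to upgrade $\text{gph}(f)\subseteq\text{gph}(\partial h)$ to equality, and since the paper's ``That is'' clause equates maximal cyclic monotonicity with maximal monotonicity plus cyclic monotonicity, you can equally well use maximal monotonicity of $f$ together with the (ordinary) monotonicity of $\partial h$ there --- either reading works.
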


\begin{theorem}[{\citep[Proposition 12.3]{rockafellar2009variational}}]
\label{theorem:monotone_iff_jacobian_psd}
Let $f:\R^{C-1} \to \R^{C-1}$ be a differentiable function. Then $f$ is monotone if and only if $\nabla f(\mathbf{x})$ is positive semi-definite for all $\mathbf{x} \in \R^{C-1}$. Moreover, $f$ is strictly monotone if $\nabla f(\mathbf{x})$ is positive definite for all $\mathbf{x} \in \R^{C-1}$.
\end{theorem}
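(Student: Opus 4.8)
The plan is to prove both directions of the equivalence directly from the quadratic-form characterisations of positive (semi-)definiteness, using differentiability of $f$ to pass between the secant quantity $\langle f(\mathbf{x})-f(\mathbf{z}),\mathbf{x}-\mathbf{z}\rangle$ appearing in Definition \ref{definition:monotone} and the Jacobian $\nabla f$. No machinery beyond the chain rule and the (one-variable) mean value theorem is required.

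For the forward implication ($f$ monotone $\Rightarrow \nabla f(\mathbf{x})$ positive semi-definite), I would fix an arbitrary base point $\mathbf{x}$ and direction $\mathbf{d}\in\R^{C-1}$, and apply monotonicity to the pair $\mathbf{x}$ and $\mathbf{x}+t\mathbf{d}$ for $t>0$. This yields $\langle f(\mathbf{x}+t\mathbf{d})-f(\mathbf{x}),\,t\mathbf{d}\rangle\ge 0$; dividing by $t^{2}>0$ and letting $t\downarrow 0$, differentiability replaces the difference quotient $\tfrac{1}{t}\bigl(f(\mathbf{x}+t\mathbf{d})-f(\mathbf{x})\bigr)$ by $\nabla f(\mathbf{x})\mathbf{d}$, giving $\langle \nabla f(\mathbf{x})\mathbf{d},\mathbf{d}\rangle\ge 0$. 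Since $\mathbf{d}$ is arbitrary this is exactly positive semi-definiteness at $\mathbf{x}$, and $\mathbf{x}$ was arbitrary.

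For the reverse implication (positive semi-definite Jacobian $\Rightarrow$ monotone), I would connect two arbitrary points $\mathbf{x},\mathbf{z}$ by the segment $\gamma(t)=\mathbf{z}+t(\mathbf{x}-\mathbf{z})$, $t\in[0,1]$, and study the scalar function $\phi(t)=\langle f(\gamma(t)),\mathbf{x}-\mathbf{z}\rangle$. By the chain rule $\phi'(t)=\langle \nabla f(\gamma(t))(\mathbf{x}-\mathbf{z}),\mathbf{x}-\mathbf{z}\rangle$, which is nonnegative for every $t$ by the positive semi-definiteness hypothesis. The mean value theorem then gives $\langle f(\mathbf{x})-f(\mathbf{z}),\mathbf{x}-\mathbf{z}\rangle=\phi(1)-\phi(0)=\phi'(t^{*})\ge 0$ for some $t^{*}\in(0,1)$, which is monotonicity. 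The strict claim follows from the same computation: when $\nabla f$ is positive definite everywhere and $\mathbf{x}\neq\mathbf{z}$ (so $\mathbf{x}-\mathbf{z}\neq\mathbf{0}$), the value $\phi'(t^{*})$ is strictly positive, forcing the strict inequality in Definition \ref{definition:monotone}.

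The main obstacle I anticipate is a regularity subtlety in the reverse direction: $f$ is assumed only differentiable, not continuously differentiable, so $\nabla f$ need not be integrable along $\gamma$ and a fundamental-theorem-of-calculus argument would be unjustified. I would circumvent this by invoking the mean value theorem on the real-valued function $\phi$, which is differentiable everywhere on $[0,1]$ by the chain rule and therefore needs nothing beyond differentiability of $f$. A secondary point worth stating explicitly is that the strict converse genuinely fails — positive definiteness is sufficient but \emph{not} necessary for strict monotonicity, as the map $t\mapsto t^{3}$ shows in one dimension — which is why the theorem phrases the strict case as a one-directional implication rather than an equivalence.
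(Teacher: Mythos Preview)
Your argument is correct and standard. Note, however, that the paper does not supply its own proof of this statement: it is listed in Appendix~\ref{appendix:convex_analysis} as a background result quoted from \citet[Proposition 12.3]{rockafellar2009variational}, so there is nothing in the paper to compare your approach against. Your handling of the regularity issue (using the one-variable mean value theorem on $\phi$ rather than integrating $\nabla f$ along the segment) is the right call given that only differentiability, not $C^{1}$, is assumed, and your closing remark on why the strict case is stated as a one-way implication is accurate.
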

\begin{theorem}[{\citep[Corollary 20.25]{bauschke2011convex}}]
\label{theorem:continuous_monotone_implies_maximally_monotone}
Let $f:\R^{C-1} \to \R^{C-1}$ be a monotone and continuous function. Then $f$ is maximally monotone.
\end{theorem}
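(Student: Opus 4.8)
The plan is to prove maximality directly from the definition by the standard ``probing'' argument for single-valued continuous monotone operators defined on all of $\R^{C-1}$. Recall (Definition \ref{definition:maximal_monotone}) that $f$ fails to be maximally monotone exactly when its graph can be properly enlarged while remaining monotone, i.e.\ when there is a pair $(\mathbf{x}_0,\mathbf{y}_0)\notin\text{gph}(f)$ whose adjunction preserves monotonicity. Monotonicity of the enlarged graph against every existing point $(\mathbf{x},f(\mathbf{x}))$ is precisely the condition
\begin{align*}
\langle \mathbf{y}_0 - f(\mathbf{x}), \mathbf{x}_0 - \mathbf{x}\rangle \geq 0 \quad \text{for all } \mathbf{x}\in\R^{C-1}.
\end{align*}
Hence it suffices to show that any pair $(\mathbf{x}_0,\mathbf{y}_0)$ satisfying this inequality must already obey $\mathbf{y}_0 = f(\mathbf{x}_0)$, so that no proper monotone extension exists and $f$ is maximal.

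To this end, I would fix an arbitrary direction $\mathbf{v}\in\R^{C-1}$ and a scalar $t>0$, and substitute the test point $\mathbf{x} = \mathbf{x}_0 + t\mathbf{v}$ into the displayed inequality. This gives $\langle \mathbf{y}_0 - f(\mathbf{x}_0 + t\mathbf{v}), -t\mathbf{v}\rangle \geq 0$; dividing by $-t<0$ (which reverses the inequality) yields $\langle \mathbf{y}_0 - f(\mathbf{x}_0 + t\mathbf{v}), \mathbf{v}\rangle \leq 0$. I then let $t\to 0^+$ and invoke \textbf{continuity} of $f$, so that $f(\mathbf{x}_0+t\mathbf{v})\to f(\mathbf{x}_0)$ and therefore $\langle \mathbf{y}_0 - f(\mathbf{x}_0), \mathbf{v}\rangle \leq 0$. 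Since $\mathbf{v}$ is arbitrary, replacing $\mathbf{v}$ by $-\mathbf{v}$ gives the reverse inequality, forcing $\langle \mathbf{y}_0 - f(\mathbf{x}_0), \mathbf{v}\rangle = 0$ for every $\mathbf{v}$. Taking $\mathbf{v} = \mathbf{y}_0 - f(\mathbf{x}_0)$ then shows $\|\mathbf{y}_0 - f(\mathbf{x}_0)\|^2 = 0$, hence $\mathbf{y}_0 = f(\mathbf{x}_0)$, as required.

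I expect the only delicate point to be the justification of the limit exchange: passing $t\to 0^+$ inside the inner product relies essentially on the continuity hypothesis, and this is exactly where the assumption is used (a merely monotone but discontinuous operator can fail to be maximal, so continuity cannot be dropped). Everything else --- translating Definition \ref{definition:maximal_monotone} into the pointwise extension inequality, and the elementary sign manipulations --- is routine. I would also remark that the full-domain, finite-dimensional setting (the domain being all of $\R^{C-1}$) is what lets the probing points $\mathbf{x}_0+t\mathbf{v}$ always lie in the domain, so the monotonicity inequality is available for every $\mathbf{v}$ and every $t>0$; this is the feature that makes the direct argument succeed without appeal to the heavier machinery (such as Minty's surjectivity theorem) needed in general Hilbert-space settings.
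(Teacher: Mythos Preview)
Your argument is correct: the probing-by-lines approach is the standard elementary proof that a continuous single-valued monotone operator with full domain is maximally monotone, and every step (the substitution $\mathbf{x}=\mathbf{x}_0+t\mathbf{v}$, the division by $-t$, the continuity limit, and the sign-flip to conclude $\mathbf{y}_0=f(\mathbf{x}_0)$) is valid.

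Note, however, that the paper does not supply its own proof of this statement. It is listed in Appendix~\ref{appendix:convex_analysis} among the background results imported from the convex-analysis literature, with the citation \citep[Corollary 20.25]{bauschke2011convex}, and is used only as a black box in the proof of Theorem~\ref{theorem:composition_gradient_iff_psd}. So there is no ``paper's proof'' to compare against; you have simply filled in a proof that the authors chose to outsource. Your closing remarks about why the full-domain finite-dimensional setting makes the direct argument work (so that Minty surjectivity is unnecessary) are accurate and worth keeping.
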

\begin{theorem}[{\citep[Theorem 3]{borwein2007decomposition}}]
\label{theorem:asplund_composition_maximal_monotone}
Let $f:\R^{C-1} \to \R^{C-1}$ be maximally monotone and continuously differentiable. Then $f(\mathbf{x}) = \nabla F(\mathbf{x}) + L\mathbf{x}$ where $F$ is a differentiable convex function, and $L$ is a skew symmetric matrix. 
\end{theorem}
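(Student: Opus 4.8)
}
The plan is to pass to the Jacobian of $f$, split it pointwise into its symmetric and skew-symmetric parts, and argue that the skew part is a fixed skew-symmetric matrix $L$ while the remaining (symmetric, positive semi-definite) part integrates to the Hessian of the required convex function. Concretely, for every $\mathbf{x}$ write
\begin{align*}
    J_f(\mathbf{x}) &= S(\mathbf{x}) + A(\mathbf{x}),\\
    S(\mathbf{x}) &= \tfrac{1}{2}\big(J_f(\mathbf{x}) + J_f(\mathbf{x})^{\top}\big), \quad A(\mathbf{x}) = \tfrac{1}{2}\big(J_f(\mathbf{x}) - J_f(\mathbf{x})^{\top}\big),
\end{align*}
so that $S(\mathbf{x})$ is symmetric and $A(\mathbf{x})$ is skew-symmetric. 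Since $f$ is maximally monotone it is in particular monotone, so Theorem \ref{theorem:monotone_iff_jacobian_psd} gives that $J_f(\mathbf{x})$ is positive semi-definite for all $\mathbf{x}$; as the skew part contributes nothing to the associated quadratic form, this is the same as $S(\mathbf{x}) \succeq 0$ for all $\mathbf{x}$.

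The heart of the argument is to show that $A(\cdot)$ is \emph{constant}, say $A(\mathbf{x}) \equiv L$. Granting this, set $g(\mathbf{x}) := f(\mathbf{x}) - L\mathbf{x}$; then $J_g(\mathbf{x}) = J_f(\mathbf{x}) - L = S(\mathbf{x})$ is symmetric and positive semi-definite for every $\mathbf{x}$, so Theorem \ref{theorem:composition_gradient_iff_psd} (applied with the outer map equal to the identity, so that the composite is $g$ itself) forces $g = \nabla F$ for a twice-differentiable convex $F$, and hence $f(\mathbf{x}) = \nabla F(\mathbf{x}) + L\mathbf{x}$, which is the claim. To obtain constancy of $A$ I would combine continuous differentiability with maximal monotonicity: continuity and monotonicity already yield maximal monotonicity by Theorem \ref{theorem:continuous_monotone_implies_maximally_monotone}, and the characterisation of gradients of convex functions as exactly the maximal cyclically monotone maps (Theorem \ref{lemma:maximal_cyclic_monotone_iff_gradient_convex}) isolates the cyclically monotone (gradient) component of $f$; the complementary component must then be a $C^{1}$ map whose Jacobian is skew-symmetric at every point, and a short mixed-partials computation shows that any such map on $\R^{C-1}$ is affine with constant linear part, namely $L\mathbf{x}$ (any additive constant being absorbed into $F$).

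The step I expect to be the main obstacle is precisely the constancy of $A(\mathbf{x})$: the naive symmetric/skew split of the Jacobian does not on its own integrate to a decomposition of $f$, since neither $S(\cdot)$ nor $A(\cdot)$ need a priori be the Jacobian of a map, and it is exactly here that one must invoke the decomposition theory for maximal monotone operators, i.e.\ \citet[Theorem 3]{borwein2007decomposition}. As a sanity check, in the special case where $f(\mathbf{x}) = M\mathbf{x} + \mathbf{b}$ is affine, monotonicity reduces to $M + M^{\top} \succeq 0$ and the decomposition is the transparent identity $M = \tfrac{1}{2}(M + M^{\top}) + \tfrac{1}{2}(M - M^{\top})$, giving $F(\mathbf{x}) = \tfrac{1}{4}\mathbf{x}^{\top}(M + M^{\top})\mathbf{x} + \mathbf{b}^{\top}\mathbf{x}$ and $L = \tfrac{1}{2}(M - M^{\top})$.
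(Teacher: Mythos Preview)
The paper does not prove Theorem~\ref{theorem:asplund_composition_maximal_monotone}: it is listed in Appendix~\ref{appendix:convex_analysis} as a cited background result from \citet[Theorem~3]{borwein2007decomposition} (a refinement of Asplund's decomposition \citep{asplund1970decomposition}) and is then used as a black box in the proof of Theorem~\ref{theorem:composition_gradient_iff_psd}. There is therefore no in-paper argument to compare your sketch against.

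Assessing the sketch on its own terms, there are two issues. First, your appeal to Theorem~\ref{theorem:composition_gradient_iff_psd} to pass from ``$J_g$ symmetric and positive semi-definite'' to ``$g=\nabla F$ with $F$ convex'' is circular in this paper's logical order: the implication $(4)\Rightarrow(1)$ in Theorem~\ref{theorem:composition_gradient_iff_psd} (Appendix~\ref{appendix:composition_iff_psd}) is established precisely by invoking Theorem~\ref{theorem:asplund_composition_maximal_monotone}. If you want that step independently, use the Poincar\'e lemma directly: a $C^1$ vector field on the simply connected domain $\R^{C-1}$ with symmetric Jacobian is a gradient, and positive semi-definiteness of the Jacobian then yields convexity of the potential. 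That bypasses the circularity cleanly.

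Second, your argument for the constancy of $A(\cdot)$ is not really an argument. Saying that Theorem~\ref{lemma:maximal_cyclic_monotone_iff_gradient_convex} ``isolates the cyclically monotone (gradient) component of $f$'' presupposes exactly the decomposition you are trying to establish, and you acknowledge as much by deferring at that point to \citet[Theorem~3]{borwein2007decomposition}. The mixed-partials route you allude to (a $C^1$ map with everywhere skew-symmetric Jacobian on $\R^{C-1}$ is affine) is correct and would give constancy of $A$, but it is a self-contained calculation that needs to be written out, not a consequence of the monotone-operator theorems you cite. In short: your reduction becomes a valid proof once the circular citation of Theorem~\ref{theorem:composition_gradient_iff_psd} is replaced by the Poincar\'e lemma and the skew-Jacobian-implies-affine step is actually carried out; as written, the sketch hands the substantive content back to the external reference rather than proving it.
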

\begin{theorem}[{\citep[Theorem 3]{meenakshi1999product}}]
\label{theorem:matrix_products_psd_iff_symmetric}
Let $A,B \in \R^{(C-1) \times (C-1)}$ be symmetric and positive semi-definite matrices. Then $AB$ is positive semi-definite if and only if it is symmetric.
\end{theorem}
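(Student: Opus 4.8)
The plan is to read ``positive semi-definite'' in the sense fixed by Lemma \ref{lemma:definiteness_eigenvalues}: a \emph{symmetric} matrix all of whose eigenvalues are nonnegative (equivalently, $M = M^\top$ together with $\mathbf{x}^\top M \mathbf{x} \ge 0$ for all $\mathbf{x}$). Adopting this convention is not a matter of taste but of correctness, and I would flag this at the outset: if symmetry were dropped from the definition the statement would be false, since for $A = \mathrm{diag}(2,1)$ and $B = \bigl(\begin{smallmatrix}1&1\\1&2\end{smallmatrix}\bigr)$ both factors are positive definite yet $AB = \bigl(\begin{smallmatrix}2&2\\1&2\end{smallmatrix}\bigr)$ has positive eigenvalues and a positive-definite symmetric part while failing to be symmetric. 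Under the symmetric-PSD reading the equivalence splits into a trivial necessity and a substantive sufficiency: if $AB$ is positive semi-definite then it is symmetric by the meaning of the term, so the real content is that symmetry of the product is also \emph{sufficient} for positive semi-definiteness.

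For the sufficiency direction, suppose $AB$ is symmetric. Since $A$ and $B$ are symmetric, $(AB)^\top = B^\top A^\top = BA$, so $AB = (AB)^\top$ forces $AB = BA$; that is, $A$ and $B$ commute. I would then invoke the spectral theorem for a commuting family of real symmetric matrices: commuting symmetric matrices admit a common orthonormal eigenbasis, so there is an orthogonal $Q$ and diagonal matrices $\Lambda_A, \Lambda_B$ with $A = Q \Lambda_A Q^\top$ and $B = Q \Lambda_B Q^\top$. Because $A$ and $B$ are positive semi-definite, Lemma \ref{lemma:definiteness_eigenvalues} gives that the diagonal entries of $\Lambda_A$ and $\Lambda_B$ are nonnegative.

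Multiplying, $AB = Q \Lambda_A \Lambda_B Q^\top$, where $\Lambda_A \Lambda_B$ is diagonal with entries $(\Lambda_A)_{ii}(\Lambda_B)_{ii} \ge 0$, being products of nonnegative reals. Hence $AB$ is orthogonally similar to a diagonal matrix with nonnegative entries, so its eigenvalues are exactly these nonnegative numbers; together with the assumed symmetry, Lemma \ref{lemma:definiteness_eigenvalues} yields that $AB$ is positive semi-definite, completing the argument.

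The only genuine obstacle is the simultaneous diagonalisation step, and it is a standard structural fact rather than a computation: once commutativity $AB = BA$ is in hand the shared orthonormal eigenbasis exists regardless of whether $A$ or $B$ is singular, which is precisely why I prefer this route to passing through a square root and the similarity $AB = A^{1/2}(A^{1/2} B A^{1/2}) A^{-1/2}$. The latter would additionally require $A$ to be invertible and a limiting argument $A \mapsto A + \epsilon I$ to cover the (possibly singular) positive semi-definite case. I would therefore keep the commutativity-plus-spectral-theorem line as the main argument, noting only in passing that the similarity to the manifestly positive semi-definite $A^{1/2} B A^{1/2}$ gives an alternative derivation of the nonnegativity of the eigenvalues when $A$ is positive definite.
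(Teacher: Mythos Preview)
The paper does not supply its own proof of this statement: Theorem~\ref{theorem:matrix_products_psd_iff_symmetric} is listed in Appendix~\ref{appendix:convex_analysis} as a background result cited from \citet{meenakshi1999product}, so there is no in-paper argument to compare against. Your proof is correct under the convention you identify, and your opening caveat is well taken---Lemma~\ref{lemma:definiteness_eigenvalues} is stated only for symmetric matrices, and the paper's use of the result in Appendix~\ref{appendix:composition_iff_psd} (the implication $(2)\Rightarrow(3)$ in Theorem~\ref{theorem:composition_gradient_iff_psd}) is consistent with reading ``positive semi-definite'' as entailing symmetry. Your commutativity-plus-simultaneous-diagonalisation route is the clean standard argument and handles singular $A,B$ without a limiting step; the alternative via $A^{1/2} B A^{1/2}$ that you mention in passing is closer in spirit to how the paper manipulates products of symmetric positive definite Jacobians in Appendix~\ref{appendix:maps_with_pd_jacobians_closed_under_composition}, but that is for a different theorem.
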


\begin{theorem}[{\citep[Theorem VIII.4.6]{bhatia2013matrix}}]
\label{theorem:eigenvalues_matrix_sum}
Let $V \subset \R^{(C-1)\times (C-1)}$ be a real vector space whose elements are matrices with real eigenvalues. Denote $\lambda_i(M)$ as the $i$-th smallest eigenvalue for any matrix $M\in V$. Let $A,B \in V$ then
\begin{align*}
    \lambda_i(A) + \lambda_1(B) \leq \lambda_i(A+B) \leq \lambda_i(A) + \lambda_{C-1}(B).
\end{align*}
\end{theorem}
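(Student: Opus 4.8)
The plan is to reduce Theorem~\ref{theorem:eigenvalues_matrix_sum} to the classical Weyl perturbation inequalities for real symmetric matrices and to prove the latter via the Courant--Fischer variational principle. The matrices arising when this result is invoked are Hessians of convex functions, hence symmetric, or belong to a space that is simultaneously similar to a space of symmetric matrices; so I would first reduce to the case where $A,B\in\R^{(C-1)\times(C-1)}$ are symmetric. Then $A+B$ is symmetric as well, and by Lemma~\ref{lemma:definiteness_eigenvalues} each of $A$, $B$, $A+B$ has $C-1$ real eigenvalues, which I order increasingly as $\lambda_1(\cdot)\le\cdots\le\lambda_{C-1}(\cdot)$.

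For symmetric $M$, I would invoke the Courant--Fischer characterisation of the $i$-th smallest eigenvalue (a standard consequence of the spectral theorem),
\begin{align*}
    \lambda_i(M) \;=\; \min_{\substack{S\subseteq\R^{C-1}\\ \dim S = i}}\; \max_{\substack{\mathbf{v}\in S\\ \mathbf{v}\neq\mathbf{0}}}\; \frac{\langle M\mathbf{v},\mathbf{v}\rangle}{\langle\mathbf{v},\mathbf{v}\rangle},
\end{align*}
together with the elementary two-sided Rayleigh bound $\lambda_1(B)\le\langle B\mathbf{v},\mathbf{v}\rangle/\langle\mathbf{v},\mathbf{v}\rangle\le\lambda_{C-1}(B)$ for all $\mathbf{v}\neq\mathbf{0}$. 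For the upper bound, take $S^\star$ to be the span of eigenvectors of $A$ belonging to its $i$ smallest eigenvalues, so that $\dim S^\star=i$ and the maximum of the Rayleigh quotient of $A$ over $S^\star\setminus\{\mathbf{0}\}$ equals $\lambda_i(A)$. Adding the Rayleigh quotients of $A$ and $B$ and applying the bound on $B$ gives $\langle(A+B)\mathbf{v},\mathbf{v}\rangle/\langle\mathbf{v},\mathbf{v}\rangle\le\lambda_i(A)+\lambda_{C-1}(B)$ for every $\mathbf{v}\in S^\star\setminus\{\mathbf{0}\}$; since $S^\star$ is one competitor in the outer minimisation defining $\lambda_i(A+B)$, this yields $\lambda_i(A+B)\le\lambda_i(A)+\lambda_{C-1}(B)$. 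The lower bound then follows by applying this inequality with the pair $(A+B,\,-B)$ in place of $(A,B)$ and using $\lambda_{C-1}(-B)=-\lambda_1(B)$, which rearranges to $\lambda_i(A)+\lambda_1(B)\le\lambda_i(A+B)$.

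The step I expect to demand the most care is the initial reduction to the symmetric case. The Courant--Fischer argument uses symmetry of $A$, $B$, and $A+B$ essentially --- the Rayleigh quotient only controls the spectrum of symmetric (Hermitian) matrices --- whereas Theorem~\ref{theorem:eigenvalues_matrix_sum} is phrased for an arbitrary real vector space $V$ whose members are merely assumed to have real eigenvalues, a class for which no such variational formula holds in general (e.g.\ non-symmetric matrices with real spectrum can be far from symmetric). I would handle this either by restricting to the situation actually needed downstream --- where $V$ is a space of symmetric matrices, or $V=\{P^{-1}MP:\,M\text{ symmetric}\}$ for a fixed invertible $P$, in which case one conjugates by $P$ and argues with the symmetric representatives $M$ --- or, if the stated generality is genuinely required, by appealing to the Hermitian perturbation bound of \citet[Theorem VIII.4.6]{bhatia2013matrix} directly, which is exactly the inequality above transported from Hermitian operators to real matrices with real spectra.
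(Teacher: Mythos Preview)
The paper does not prove this statement; it is listed among the cited background results in Appendix~\ref{appendix:convex_analysis} and quoted from \citet{bhatia2013matrix} without argument. So there is no in-paper proof to compare against, and your final fallback---invoking Bhatia directly---is exactly what the paper does.

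Your Courant--Fischer derivation for the symmetric case is the standard proof of Weyl's inequalities and is correct. The step you flag as delicate---the reduction to symmetry---is genuinely the crux, and neither of your two proposed reductions covers the way the result is actually used downstream. In the proof of Theorem~\ref{theorem:maps_with_pd_jacobians_closed_under_composition} (Appendix~\ref{appendix:maps_with_pd_jacobians_closed_under_composition}) the theorem is invoked with $A=\tfrac{1}{2}J_{f\circ g}(\mathbf{x})$ and $B=\tfrac{1}{2}(J_{f\circ g}(\mathbf{x}))^{\top}$, neither of which is symmetric; and since $J_{f\circ g}=J_f(g(\mathbf{x}))\,J_g(\mathbf{x})$ is symmetrised by conjugation with $J_g(\mathbf{x})^{1/2}$ while its transpose $J_g(\mathbf{x})\,J_f(g(\mathbf{x}))$ is symmetrised by $J_g(\mathbf{x})^{-1/2}$, there is no single $P$ placing both in a space of the form $\{P^{-1}MP:M\text{ symmetric}\}$. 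More seriously, any real vector space containing both $J$ and $J^{\top}$ also contains the skew-symmetric matrix $J-J^{\top}$, whose nonzero eigenvalues are purely imaginary; so for $C\ge 3$ with $J\neq J^{\top}$ there is \emph{no} admissible $V$ at all, and the hypothesis of Theorem~\ref{theorem:eigenvalues_matrix_sum} appears not to be met at its point of use. Your instinct that this reduction demands the most care is therefore well founded; a cleaner route for the downstream argument would be to establish positive definiteness of the symmetric part of $J_{f\circ g}(\mathbf{x})$ directly rather than through the general-$V$ statement.
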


\subsection{Remarks}
Theorem \ref{lemma:maximal_cyclic_monotone_iff_gradient_convex} serves as a criterion and characterisation of differentiable convex functions through their gradients. Theorems \ref{theorem:continuous_monotone_implies_maximally_monotone} to \ref{theorem:matrix_products_psd_iff_symmetric} are hallmark results from the rich literature of convex analysis and monotone operators that tie together conditions under which a differentiable composite function is the gradient of a convex function. Notably, Theorem \ref{theorem:asplund_composition_maximal_monotone} is a rewritten version of the Asplund decomposition of maximal monotone operators \citep{asplund1970decomposition} which tells us it suffices to focus on maximal monotonicity. We refer the reader to Appendix \ref{appendix:composition_iff_psd} for the usage of Theorems \ref{theorem:monotone_iff_jacobian_psd} to \ref{theorem:matrix_products_psd_iff_symmetric} in the proof of Theorem \ref{theorem:composition_gradient_iff_psd}.

Theorem \ref{theorem:eigenvalues_matrix_sum} allows us to obtain a lower bound on the smallest eigenvalue of the sum of two real-valued matrices with real eigenvalues. This is particularly useful to prove positive definiteness in Theorem \ref{theorem:maps_with_pd_jacobians_closed_under_composition}. We refer the reader to Appendix \ref{appendix:maps_with_pd_jacobians_closed_under_composition} for its usage in the proof of Theorem \ref{theorem:maps_with_pd_jacobians_closed_under_composition}.

\section{Examples of Binary Proper Losses}
Denote $y\in \{-1,1\}$ a label and $p=\Pr(Y=1|\mathbf{x})$ be the true probability that $Y=1|\mathbf{x}$. Let $\hat{y}$ and $\hat{p}$ be the predicted class and probability estimate of $Y=1$ given input $\mathbf{x}$. Table \ref{table:proper_loss_examples} and Figure \ref{figure:proper_loss_examples} show the formulas and plots of the partial losses that correspond to various examples of proper losses.

\begin{table*}
\caption{Analytical formulas of partial losses for various examples of binary proper losses where it is assumed that $p=1$.}
\label{table:proper_loss_examples}
\vskip 0.15in
\begin{center}
\begin{small}
\begin{tabular}{lc}
\toprule
 & $\ell_1(q)$\\
\midrule
$0-1$ & $[\![\hat{y}(q)=1]\!]$\\
square & $(1-q)^2$\\
log & $-\log(q)$ \\
Matsushita & $\frac{1}{2}\sqrt{\frac{1-q}{q}}$ \\
\bottomrule
\end{tabular}
\end{small}
\end{center}
\vskip -0.1in
\end{table*}

\begin{figure}
  \centering
  \includegraphics{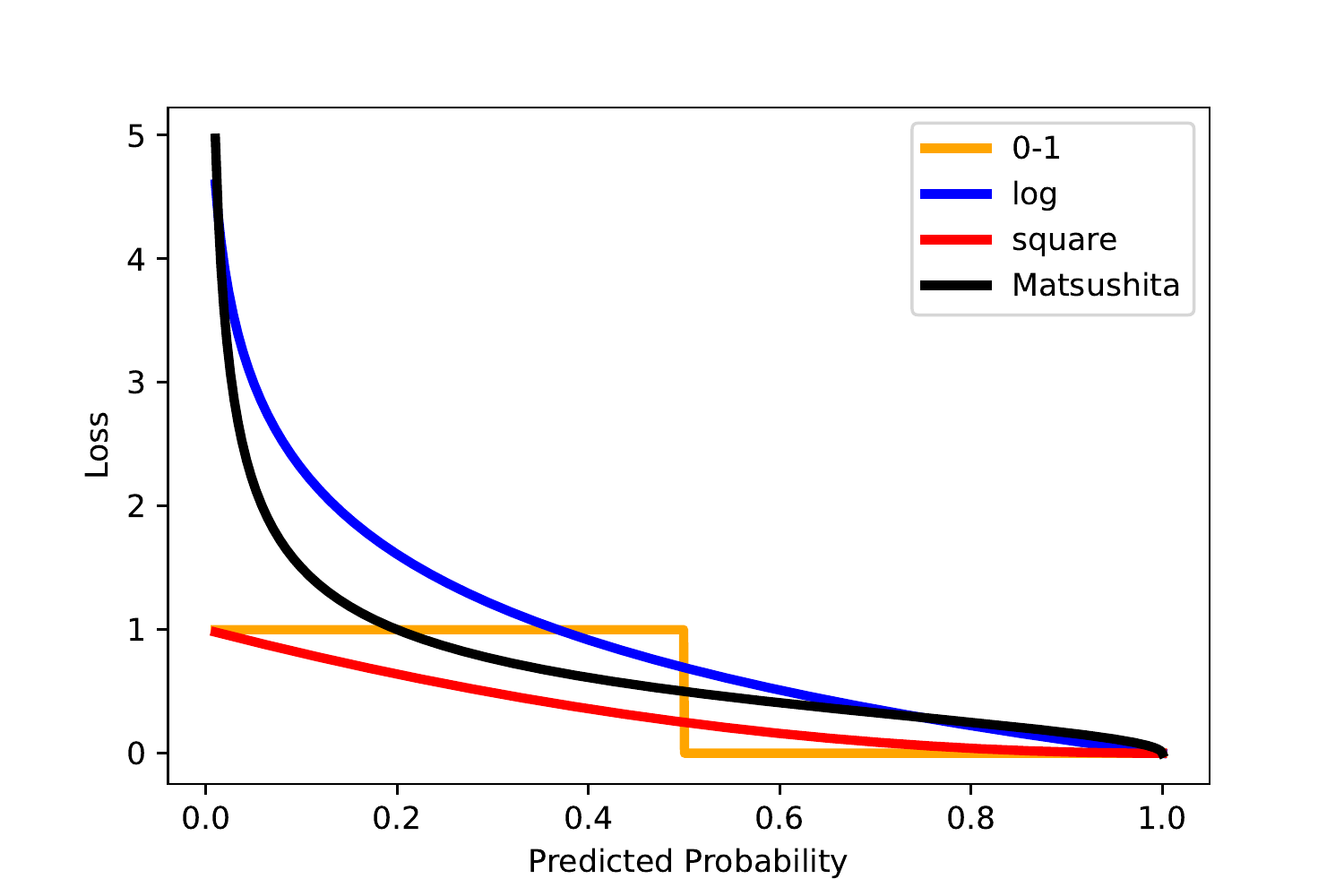}
  \caption{Plots of partial losses corresponding various proper losses when true probability is 1.}
  \label{figure:proper_loss_examples}
\end{figure}

\section{Proof of equivalent conditions on subdifferentials for strictly convex functions}
($\Rightarrow$) Suppose $f$ is strictly convex and assume for a proof by contradiction that there exists some $\mathbf{x}, \mathbf{y} \in \text{dom} f$ such that $\mathbf{x} \neq \mathbf{y}$ with $f(\mathbf{x})+\langle\phi, \mathbf{y}-\mathbf{x}\rangle \geq f(\mathbf{y})$ for some $\phi \in \partial f(\mathbf{x})$. 

Fix $\lambda \in (0,1)$. Then we have
\begin{align*}
    f(\mathbf{x}) + \langle\phi, (\lambda \mathbf{x} +(1-\lambda)\mathbf{y})-\mathbf{x}\rangle &=f(\mathbf{x}) + (1-\lambda)\langle\phi, \mathbf{y}-\mathbf{x}\rangle\\
    &\leq f(\lambda \mathbf{x} +(1-\lambda)\mathbf{y}) \text{ by definition of a subgradient}\\
    &< \lambda f(\mathbf{x}) + (1-\lambda)f(\mathbf{y}) \text{ by strict convexity of $f$}\\
    &\leq f(\mathbf{x}) + (1-\lambda)\langle\phi, \mathbf{y}-\mathbf{x}\rangle \text{ by the above assumption}.
\end{align*}
Thus, we have a contradiction so we must have the subdifferential of $f$ for all $\mathbf{x} \in \text{dom} f$ is given by
\begin{align*}
    \partial f(\mathbf{x})=\left\{\phi \in \mathbb{R}^{n}:\langle\phi, \mathbf{y}-\mathbf{x}\rangle < f(\mathbf{y})-f(\mathbf{x}),\forall \mathbf{y} \in \mathbb{R}^{n}\right\}.
\end{align*}

($\Leftarrow$) Suppose the subdifferential of $f$ for any $\mathbf{x} \in \text{dom} f$ is given by
\begin{align*}
    \partial f(\mathbf{x})=\left\{\phi \in \mathbb{R}^{n}:\langle\phi, \mathbf{y}-\mathbf{x}\rangle < f(\mathbf{y})-f(\mathbf{x}), \forall \mathbf{y} \in \mathbb{R}^{n}\right\}.
\end{align*}

Fix $\mathbf{x},\mathbf{y}\in \text{dom} f$ and $\lambda \in (0,1)$. Consider $\phi \in \partial f(\lambda \mathbf{x} + (1-\lambda)\mathbf{y})$. Then we have 
\begin{align*}
    f(\lambda \mathbf{x} + (1-\lambda)\mathbf{y})+(1-\lambda)\langle\phi, \mathbf{x}-\mathbf{y}\rangle < f(\mathbf{x}),\\
    f(\lambda \mathbf{x} + (1-\lambda)\mathbf{y})+\lambda\langle\phi, \mathbf{y}-\mathbf{x}\rangle < f(\mathbf{y}).
\end{align*}
Multiplying the first inequality by $\lambda$ and the second by $(1-\lambda)$, summing them gives us $f(\lambda \mathbf{x} + (1-\lambda)\mathbf{y}) < \lambda f(\mathbf{x}) + (1-\lambda)f(\mathbf{y})$. This holds for arbitrary $\mathbf{x},\mathbf{y}\in \text{dom} f$ and $\lambda \in (0,1)$ so it follows that $f$ is strictly convex.

\section{Proof of Proposition \ref{proposition:proper_loss_bregman}}
\label{appendix:proper_loss_bregman}
($\Rightarrow$) Fix $q\in \Delta^{C-1}$. Suppose $\ell$ is proper. Then we have
\begin{align*}
    L(p,q)=p^{\top}\ell(q) =  q^{\top}\ell(q) + (p-q)^{\top}\ell(q)= \underline{L}(q) + (p-q)^{\top}\ell(q)
\end{align*}
and also,
\begin{align*}
    0\leq L(p,q) - L(p,p) &= \underline{L}(q) + (p-q)^{\top}\ell(q) - \underline{L}(p)\\
    \implies -(p-q)^{\top}\ell(q) &\leq - \underline{L}(p) - (-\underline{L}(q)).
\end{align*}
Recall that $\underline{L}$ is concave so it follows that $-\underline{L}$ is convex. Hence, $-\ell(q) \in \partial (-\underline{L})(q)$ which means $-\ell(q)$ is a subgradient of $-\underline{L}$ at $q$ and $L(p,q)= -(-\underline{L}(q)) - (p-q)^{\top}(-\ell(q))$.

($\Leftarrow$) Suppose there exists a convex function $f:\Delta^{C-1} \to \R$ such that for all $q\in \Delta^{C-1}$, there exists a subgradient $\phi \in \partial f(q)$ and $L(p,q)= -f(q) - (p-q)^{\top}\phi$.

For all $p\in \Delta^{C-1}$, we have
\begin{align*}
    L(p,q)-L(p,p) &=  f(p)-f(q) - (p-q)^{\top}\phi\\
    &\geq 0 \text{ since $\phi$ is a subgradient of $f$ at $q$}\\
    \implies L(p,p) &\leq L(p,q).
\end{align*}
Hence, $\ell$ is a proper loss.

To prove that $\ell$ is strictly proper if and only if there exists a strictly convex function $f:\Delta^{C-1} \to \R$ such that for all $q\in \Delta^{C-1}$, there exists a subgradient $\phi \in \partial f(q)$ such that $L(p,q) = -(p-q)^{\top} \phi - f(q) \text{ for all $p \in \Delta^{C-1}$}$. This follows immediately by definitions of strictly proper losses and subdifferentials from which the above inequalities become strict.

We are left to prove that $L(p,q) = (p-q)^{\top} \ell(q) + \underline{L}(q)$ when $\underline{L}$ is differentiable. We first note that $\underline{L}$ is concave so $-\underline{L}$ is convex. Recall that $L(p,q)=p^{\top}\ell(q) = \underline{L}(q) + (p-q)^{\top}\ell(q)$ from the workings within Appendix \ref{appendix:proper_loss_bregman}. Setting $f=-\underline{L}$ for Proposition \ref{proposition:proper_loss_bregman}, we can deduce $-\ell(q) = -\nabla \underline{L}(q) \forall q \in \text{ri}(\Delta^{C-1})$ for a proper loss $\ell$ which follows by the uniqueness of subgradients for differentiable functions. That is, $\nabla \underline{L}(q) = \ell(q), \forall q \in \text{ri}(\Delta^{C-1})$.

\section{Convexity and Expression of Proper Canonical Losses}
\label{appendix:convexity_proper_canonical_losses}
In this section, we first show that any differentiable proper loss can be transformed such that its partial losses are convex functions. This is done by equipping a proper loss $\ell$ with its corresponding canonical link $\Tilde{\psi}$ to form $\Tilde{\ell}(\mathbf{x})= \bigl(\ell \circ \Pi^{-1} \circ \Tilde{\psi}^{-1}\bigr)(\mathbf{x})$. The proof of this result has been collated from \citet{erven2012mix,reid2012design,williamson2016multiclass} and is included for completeness. We then conclude this section by extending this result by providing an analytical expression for a proper canonical loss.

We write $\ell(p) = (\ell_{-C}(p), \ell_C(p))$ where $\ell_{-C}(p) \in \R^{C-1}$ and $\ell_{C}(p) \in \R$ denote the first $C-1$ components of $\ell(p)$ and the last component of $\ell(p)$ respectively.
\begin{proposition}
\label{proposition:proper_loss_stationarity}
Let $\ell:\Delta^{C-1} \to \R^C$ be a differentiable proper loss. Then 
\begin{align*}
    J_{\ell_{C} \circ \Pi^{-1}}(\Tilde{p}) = -\frac{\Tilde{p}^{\top}}{p_C}J_{\ell_{-C}\circ \Pi^{-1}}(\Tilde{p})
\end{align*}
where $J_{\ell_{C} \circ \Pi^{-1}}$ and $J_{\ell_{-C} \circ \Pi^{-1}}$ are the Jacobians of $\ell_{C} \circ \Pi^{-1}$ and $\ell_{-C} \circ \Pi^{-1}$ respectively.
\end{proposition}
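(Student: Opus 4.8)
The plan is to derive the identity from the first-order stationarity condition that properness forces in the interior of the projected simplex, and then (if needed) extend to the boundary by continuity. First I would rewrite the conditional risk in projected coordinates. By Definition \ref{definition:bayes_risk}, $L(p,q)=p^{\top}\ell(q)$, so setting $\Tilde{L}(\Tilde{p},\Tilde{q}):=L(\Pi^{-1}(\Tilde{p}),\Pi^{-1}(\Tilde{q}))$ and writing $p_C=1-\sum_{i=1}^{C-1}\Tilde{p}_i$ for the last coordinate of $\Pi^{-1}(\Tilde{p})$, one has
\begin{align*}
    \Tilde{L}(\Tilde{p},\Tilde{q}) = \Tilde{p}^{\top}(\ell_{-C}\circ\Pi^{-1})(\Tilde{q}) + p_C\,(\ell_{C}\circ\Pi^{-1})(\Tilde{q}).
\end{align*}
Since $\Pi^{-1}$ is a bijection of $\Tilde{\Delta}^{C-1}$ onto $\Delta^{C-1}$, Definition \ref{definition:proper_losses} says precisely that for each fixed $\Tilde{p}$ the map $\Tilde{q}\mapsto\Tilde{L}(\Tilde{p},\Tilde{q})$ attains its minimum over $\Tilde{\Delta}^{C-1}$ at $\Tilde{q}=\Tilde{p}$.

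Next I would differentiate this expression in $\Tilde{q}$ and evaluate at the minimiser. Fix $\Tilde{p}$ in the interior of $\Tilde{\Delta}^{C-1}$, which by definition of $\Pi^{-1}$ is the same as $\Pi^{-1}(\Tilde{p})\in\mathrm{ri}(\Delta^{C-1})$; then $\Tilde{p}$ is an interior minimiser of a differentiable function on an open subset of $\R^{C-1}$, so the Jacobian (a $1\times(C-1)$ row vector) of $\Tilde{q}\mapsto\Tilde{L}(\Tilde{p},\Tilde{q})$ vanishes at $\Tilde{q}=\Tilde{p}$. Because $\Tilde{p}^{\top}$ and $p_C$ are constants with respect to $\Tilde{q}$, linearity of the Jacobian turns this into
\begin{align*}
    \mathbf{0} = \Tilde{p}^{\top}J_{\ell_{-C}\circ\Pi^{-1}}(\Tilde{p}) + p_C\,J_{\ell_{C}\circ\Pi^{-1}}(\Tilde{p}),
\end{align*}
and since $p_C>0$ on the interior, dividing through and rearranging gives exactly $J_{\ell_{C}\circ\Pi^{-1}}(\Tilde{p}) = -\frac{\Tilde{p}^{\top}}{p_C}J_{\ell_{-C}\circ\Pi^{-1}}(\Tilde{p})$.

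Finally, if the statement is meant for all of $\Tilde{\Delta}^{C-1}$ rather than only its interior, I would close with a continuity remark: both sides are continuous in $\Tilde{p}$ at any point where the Jacobians exist and $p_C\neq 0$, so the identity, already verified on the dense interior, persists to the relevant boundary points.

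The main obstacle I anticipate is simply the justification of the first-order condition: one must notice that $\Tilde{\Delta}^{C-1}$ is full-dimensional in $\R^{C-1}$, so that an interior minimiser genuinely has vanishing gradient — this is exactly why the argument is carried out in projected coordinates, whereas the affine constraint $\sum_i q_i=1$ on $\Delta^{C-1}$ would otherwise require a Lagrange-multiplier argument. Past that point, everything is routine bookkeeping with Jacobian shapes and the transpose conventions.
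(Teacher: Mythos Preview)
Your proposal is correct and follows essentially the same approach as the paper: rewrite $L(p,q)$ in projected coordinates, invoke properness to obtain the first-order stationarity condition at $\Tilde{q}=\Tilde{p}$, and rearrange. Your version is in fact slightly more careful than the paper's, which divides by $p_C$ without restricting to the interior or commenting on the boundary.
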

\begin{proof}
Fix $p \in \Delta^{C-1}$ and consider $q \in \Delta^{C-1}$. Let $\Tilde{p} = \Pi(p)$ and $\Tilde{q} = \Pi(q)$. We have
\begin{align*}
    L(p,q) &= p^{\top}\ell(q) \\
    &= \Tilde{p}^{\top}\ell_{-C}(q) + p_C \ell_{C}(q)\\
    &= \Tilde{p}^{\top}(\ell_{-C} \circ \Pi^{-1})(\Tilde{q}) + p_C (\ell_{C} \circ \Pi^{-1})(\Tilde{q}).
\end{align*}
We can define the above as a function of $\Tilde{q}$. That is, 
\begin{align*}
    L_p(\Tilde{q}) = \Tilde{p}^{\top}(\ell_{-C} \circ \Pi^{-1})(\Tilde{q}) + p_C (\ell_{C} \circ \Pi^{-1})(\Tilde{q}).
\end{align*}
$L_p(\Tilde{q})$ is differentiable with its Jacobian given by
\begin{align*}
    J_{L_p}(\Tilde{q}) = \Tilde{p}^{\top}J_{\ell_{-C} \circ \Pi^{-1}}(\Tilde{q}) + p_C J_{\ell_{C} \circ \Pi^{-1}}(\Tilde{q}).
\end{align*}
Since $\ell$ is proper, $L_p(\Tilde{q})$ has an minimum at $\Tilde{q} = \Tilde{p}$ with its Jacobian satisfying the stationarity condition 
\begin{align*}
    \Tilde{p}^{\top}J_{\ell_{-C} \circ \Pi^{-1}}(\Tilde{p}) + p_C J_{\ell_{C} \circ \Pi^{-1}}(\Tilde{p}) = \mathbf{0}_{C-1}^{\top}.
\end{align*}
Rearranging the above equality completes the proof.
\end{proof}

Proposition \ref{proposition:proper_loss_stationarity} allows to express the Jacobian of the $C$-th partial loss in terms of the Jacobian of the first $C-1$ partial losses. This result allows us to express the Jacobian and Hessian of the projected conditional Bayes risk in the following proposition.

\begin{proposition}
\label{proposition:bayes_risk_jacobian_hessian}
Let $\ell:\Delta^{C-1} \to \R^C$ be a differentiable proper loss and $\Tilde{\underline{L}}: \Tilde{\Delta}^{C-1} \to \R$ be its associated projected conditional Bayes risk. Then the Jacobian and Hessian of $\Tilde{\underline{L}}$ are given by
\begin{align*}
    J_{\Tilde{\underline{L}}} = \bigl((\ell_{-C} \circ \Pi^{-1})(\Tilde{p}) \bigr)^{\top} - \bigl((\ell_{C} \circ \Pi^{-1})(\Tilde{p})\bigr)\mathbbm{1}_{C-1}^{\top}
\end{align*}
and 
\begin{align*}
    H_{\Tilde{\underline{L}}} = \left(I_{C-1} + \mathbbm{1}_{C-1}\frac{\Tilde{p}^{\top}}{p_C}\right)J_{\ell_{-C} \circ \Pi^{-1}}(\Tilde{p}).
\end{align*}
\end{proposition}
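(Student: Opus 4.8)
The plan is to obtain both derivatives directly from the decomposition of $\Tilde{\underline{L}}$ into its projected partial-loss pieces and then invoke the stationarity identity of Proposition \ref{proposition:proper_loss_stationarity} to collapse the resulting expressions. First I would fix $\Tilde{p}\in\Tilde{\Delta}^{C-1}$, set $p=\Pi^{-1}(\Tilde{p})$, and note that its last coordinate is $p_C = 1-\mathbbm{1}_{C-1}^{\top}\Tilde{p}$, so that $\nabla_{\Tilde{p}}\,p_C = -\mathbbm{1}_{C-1}$. Using $\underline{L}(p)=L(p,p)=p^{\top}\ell(p)$ (which holds since $\ell$ is proper) and splitting off the last coordinate,
\begin{align*}
    \Tilde{\underline{L}}(\Tilde{p}) = \underline{L}\bigl(\Pi^{-1}(\Tilde{p})\bigr) = \Tilde{p}^{\top}(\ell_{-C}\circ\Pi^{-1})(\Tilde{p}) + p_C\,(\ell_{C}\circ\Pi^{-1})(\Tilde{p}).
\end{align*}
Since $\Pi^{-1}$ is affine and $\ell$ is differentiable, everything on the right is differentiable in $\Tilde{p}$.

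Next I would apply the product rule term by term. Differentiating the first summand gives $\bigl((\ell_{-C}\circ\Pi^{-1})(\Tilde{p})\bigr)^{\top} + \Tilde{p}^{\top}J_{\ell_{-C}\circ\Pi^{-1}}(\Tilde{p})$, and differentiating the second gives $-(\ell_{C}\circ\Pi^{-1})(\Tilde{p})\,\mathbbm{1}_{C-1}^{\top} + p_C\,J_{\ell_{C}\circ\Pi^{-1}}(\Tilde{p})$. Adding these, the two Jacobian-valued terms combine into $\Tilde{p}^{\top}J_{\ell_{-C}\circ\Pi^{-1}}(\Tilde{p}) + p_C\,J_{\ell_{C}\circ\Pi^{-1}}(\Tilde{p})$, which equals $\mathbf{0}_{C-1}^{\top}$ by the stationarity identity $\Tilde{p}^{\top}J_{\ell_{-C}\circ\Pi^{-1}}(\Tilde{p}) = -p_C\,J_{\ell_{C}\circ\Pi^{-1}}(\Tilde{p})$ of Proposition \ref{proposition:proper_loss_stationarity}. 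What survives is precisely the claimed formula for $J_{\Tilde{\underline{L}}}$. (An even quicker route would be the chain rule $J_{\Tilde{\underline{L}}}(\Tilde{p}) = J_{\underline{L}}(\Pi^{-1}(\Tilde{p}))\,J_{\Pi^{-1}}(\Tilde{p})$ together with $\nabla\underline{L}=\ell$ from Proposition \ref{proposition:proper_loss_bregman}, but I prefer the direct computation since it avoids treating $\underline{L}$, which is defined only on the simplex, as having a genuine $\R^{C}$-valued gradient.)

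For the Hessian I would differentiate the gradient $\nabla\Tilde{\underline{L}}(\Tilde{p}) = (\ell_{-C}\circ\Pi^{-1})(\Tilde{p}) - (\ell_{C}\circ\Pi^{-1})(\Tilde{p})\,\mathbbm{1}_{C-1}$ once more; this requires only that $\ell$ is differentiable, so that $J_{\ell_{-C}\circ\Pi^{-1}}$ and $J_{\ell_{C}\circ\Pi^{-1}}$ exist, and it yields $H_{\Tilde{\underline{L}}}(\Tilde{p}) = J_{\ell_{-C}\circ\Pi^{-1}}(\Tilde{p}) - \mathbbm{1}_{C-1}\,J_{\ell_{C}\circ\Pi^{-1}}(\Tilde{p})$, where I must remember that $\ell_{C}\circ\Pi^{-1}$ is scalar-valued, so its ``Jacobian'' is the row vector given by transposing its gradient. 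Substituting $J_{\ell_{C}\circ\Pi^{-1}}(\Tilde{p}) = -\tfrac{\Tilde{p}^{\top}}{p_C}J_{\ell_{-C}\circ\Pi^{-1}}(\Tilde{p})$ from Proposition \ref{proposition:proper_loss_stationarity} and factoring $J_{\ell_{-C}\circ\Pi^{-1}}(\Tilde{p})$ out on the right gives $H_{\Tilde{\underline{L}}}(\Tilde{p}) = \bigl(I_{C-1} + \mathbbm{1}_{C-1}\tfrac{\Tilde{p}^{\top}}{p_C}\bigr)J_{\ell_{-C}\circ\Pi^{-1}}(\Tilde{p})$, as required. I do not anticipate a real obstacle here: the substantive content is entirely Proposition \ref{proposition:proper_loss_stationarity}, and the only thing that needs care is the row-versus-column bookkeeping together with the chain rule through the affine map $\Pi^{-1}$ (whose Jacobian is the fixed $C\times(C-1)$ matrix obtained by stacking $I_{C-1}$ over $-\mathbbm{1}_{C-1}^{\top}$), which is exactly what makes both final expressions collapse to depend only on $J_{\ell_{-C}\circ\Pi^{-1}}$.
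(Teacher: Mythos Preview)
Your proposal is correct and follows essentially the same approach as the paper: decompose $\Tilde{\underline{L}}(\Tilde{p})$ into the $\Tilde{p}^{\top}(\ell_{-C}\circ\Pi^{-1})+p_C(\ell_{C}\circ\Pi^{-1})$ form, differentiate with the product rule, and cancel the two Jacobian terms via Proposition~\ref{proposition:proper_loss_stationarity}; then differentiate the resulting gradient and substitute the same proposition once more to obtain the Hessian. The paper's argument is line-for-line the same, including the observation $J_{p_C}(\Tilde{p})=-\mathbbm{1}_{C-1}^{\top}$.
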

\begin{proof}
Consider $p \in \Delta^{C-1}$ and let $\Tilde{p} = \Pi(p)$. Following the workings of Proposition \ref{proposition:proper_loss_stationarity}, we have
\begin{align*}
    \Tilde{\underline{L}}(\Tilde{p}) &= L(p,p) \\
    &= \Tilde{p}^{\top}(\ell_{-C} \circ \Pi^{-1})(\Tilde{p}) + p_C(\Tilde{p}) (\ell_{C} \circ \Pi^{-1})(\Tilde{p}).
\end{align*}
Using the product rule and noting $J_{p_C}(\Tilde{p}) = -\mathbbm{1}_{C-1}^{\top}$, the Jacobian of $\Tilde{\underline{L}}$ is given by
\begin{align*}
    J_{\Tilde{\underline{L}}}(\Tilde{p}) &= \Tilde{p}^{\top} J_{\ell_{-C} \circ \Pi^{-1}}(\Tilde{p}) + \bigl((\ell_{-C} \circ \Pi^{-1})(\Tilde{p}) \bigr)^{\top} \\
    & \quad + (\ell_{C} \circ \Pi^{-1})(\Tilde{p})J_{p_C}(\Tilde{p}) + p_C(\Tilde{p}) J_{\ell_{C} \circ \Pi^{-1}}(\Tilde{p})\\
    &= \Tilde{p}^{\top} J_{\ell_{-C} \circ \Pi^{-1}}(\Tilde{p}) + \bigl((\ell_{-C} \circ \Pi^{-1})(\Tilde{p}) \bigr)^{\top} \\
    & \quad - \bigl((\ell_{C} \circ \Pi^{-1})(\Tilde{p})\bigr)\mathbbm{1}_{C-1}^{\top} + p_C(\Tilde{p}) J_{\ell_{C} \circ \Pi^{-1}}(\Tilde{p})\\
    &=\bigl((\ell_{-C} \circ \Pi^{-1})(\Tilde{p}) \bigr)^{\top} - \bigl((\ell_{C} \circ \Pi^{-1})(\Tilde{p})\bigr)\mathbbm{1}_{C-1}^{\top} \text{ using Proposition \ref{proposition:proper_loss_stationarity}}.
\end{align*}
Differentiating $\bigl(J_{\Tilde{\underline{L}}}(\Tilde{p})\bigr)^{\top} = (\ell_{-C} \circ \Pi^{-1})(\Tilde{p}) - \bigl((\ell_{C} \circ \Pi^{-1})(\Tilde{p})\bigr)\mathbbm{1}_{C-1}$, the Hessian of $\Tilde{\underline{L}}$ is given by
\begin{align*}
    H_{\Tilde{\underline{L}}}(\Tilde{p}) &= J_{\ell_{-C} \circ \Pi^{-1}}(\Tilde{p}) - \mathbbm{1}_{C-1}J_{\ell_{C} \circ \Pi^{-1}}(\Tilde{p})\\
    &= J_{\ell_{-C} \circ \Pi^{-1}}(\Tilde{p}) + \mathbbm{1}_{C-1}\frac{\Tilde{p}^{\top}}{p_C}J_{\ell_{-C} \circ \Pi^{-1}}(\Tilde{p}) \text{ using Proposition \ref{proposition:proper_loss_stationarity}}\\
    &= \left(I_{C-1} + \mathbbm{1}_{C-1}\frac{\Tilde{p}^{\top}}{p_C}\right)J_{\ell_{-C} \circ \Pi^{-1}}(\Tilde{p}).
\end{align*}
\end{proof}

Proposition \ref{proposition:bayes_risk_jacobian_hessian} provides expressions for us to establish the connection between differentiable proper losses and their corresponding canonical links.
\begin{corollary}
\label{corollary:connection_links_losses}
Let $\ell:\Delta^{C-1} \to \R^C$ be a differentiable proper loss and $\Tilde{\underline{L}}: \Tilde{\Delta}^{C-1} \to \R$ be its associated projected conditional Bayes risk. Then we have 
\begin{align*}
    \Tilde{\psi}(\Tilde{p}) = \bigl((\ell_{C} \circ \Pi^{-1})(\Tilde{p})\bigr)\mathbbm{1}_{C-1} - (\ell_{-C} \circ \Pi^{-1})(\Tilde{p}).
\end{align*}
\end{corollary}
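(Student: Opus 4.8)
The plan is to obtain the claimed identity directly from Definition \ref{definition:canonical_link} together with the Jacobian formula established in Proposition \ref{proposition:bayes_risk_jacobian_hessian}; the argument is essentially a single transposition, so the work has already been done in the two preceding propositions. First I would record that the hypotheses are consistent: since $\ell$ is a differentiable proper loss, its projected conditional Bayes risk $\Tilde{\underline{L}} = \underline{L}\circ\Pi^{-1}$ is differentiable ($\Pi^{-1}$ is affine and $\underline{L}$ is differentiable on the relevant domain by Proposition \ref{proposition:proper_loss_bregman}), so the canonical link $\Tilde{\psi}(\Tilde{p}) = -\nabla\Tilde{\underline{L}}(\Tilde{p})$ is well-defined and Definition \ref{definition:canonical_link} applies.

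Next I would invoke Proposition \ref{proposition:bayes_risk_jacobian_hessian}, which gives
\[
    J_{\Tilde{\underline{L}}}(\Tilde{p}) = \bigl((\ell_{-C}\circ\Pi^{-1})(\Tilde{p})\bigr)^{\top} - \bigl((\ell_{C}\circ\Pi^{-1})(\Tilde{p})\bigr)\mathbbm{1}_{C-1}^{\top}.
\]
Since the gradient of a scalar-valued function is the transpose of its Jacobian, taking transposes yields
\[
    \nabla\Tilde{\underline{L}}(\Tilde{p}) = (\ell_{-C}\circ\Pi^{-1})(\Tilde{p}) - \bigl((\ell_{C}\circ\Pi^{-1})(\Tilde{p})\bigr)\mathbbm{1}_{C-1}.
\]
Substituting this into $\Tilde{\psi}(\Tilde{p}) = -\nabla\Tilde{\underline{L}}(\Tilde{p})$ and distributing the minus sign gives exactly
\[
    \Tilde{\psi}(\Tilde{p}) = \bigl((\ell_{C}\circ\Pi^{-1})(\Tilde{p})\bigr)\mathbbm{1}_{C-1} - (\ell_{-C}\circ\Pi^{-1})(\Tilde{p}),
\]
which is the desired conclusion.

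There is no real obstacle in this corollary: the substantive steps were carried out in Proposition \ref{proposition:proper_loss_stationarity} (the stationarity identity for proper losses) and Proposition \ref{proposition:bayes_risk_jacobian_hessian} (the Jacobian of $\Tilde{\underline{L}}$). The only points requiring any care are bookkeeping ones — keeping the row-versus-column-vector conventions for Jacobians and gradients consistent throughout, and confirming the differentiability hypothesis so that the canonical link is defined. Accordingly, I expect the proof to be three short lines once Proposition \ref{proposition:bayes_risk_jacobian_hessian} is in hand.
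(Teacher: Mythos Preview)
Your proposal is correct and follows exactly the paper's approach: invoke Definition \ref{definition:canonical_link}, use the Jacobian expression from Proposition \ref{proposition:bayes_risk_jacobian_hessian}, transpose, and negate. The paper's proof is indeed the three short lines you anticipated.
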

\begin{proof}
    From the definition of canonical links, we have
    \begin{align*}
        \Tilde{\psi}(\Tilde{p}) &= -\nabla\Tilde{\underline{L}}(\Tilde{p})\\
        &=-\bigl(J_{\Tilde{\underline{L}}}(\Tilde{p})\bigr)^{\top}\\
        &= \bigl((\ell_{C} \circ \Pi^{-1})(\Tilde{p})\bigr)\mathbbm{1}_{C-1} - (\ell_{-C} \circ \Pi^{-1})(\Tilde{p}).
    \end{align*}
\end{proof}

Proposition \ref{proposition:bayes_risk_jacobian_hessian} also allows us to formulate the Jacobian of $\ell \circ \Pi^{-1}$ in the following corollary.
\begin{corollary}
\label{corollary:complete_loss_jacobian}
Let $\ell:\Delta^{C-1} \to \R^C$ be a differentiable proper loss and $\Tilde{\underline{L}}: \Tilde{\Delta}^{C-1} \to \R$ be its associated projected conditional Bayes risk. Then we have 
\begin{align*}
    J_{\ell \circ \Pi^{-1}}(\Tilde{p}) = 
    \begin{bmatrix}
    I_{C-1} - \mathbbm{1}_{C-1}\Tilde{p}^{\top}\\
    - \frac{\Tilde{p}^{\top}}{p_C} (I_{C-1} - \mathbbm{1}_{C-1}\Tilde{p}^{\top})
    \end{bmatrix}
    H_{\Tilde{\underline{L}}}(\Tilde{p}).
\end{align*}
\end{corollary}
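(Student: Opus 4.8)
The plan is to read off $J_{\ell \circ \Pi^{-1}}$ by stacking the Jacobians of its two blocks and then to recognise the displayed matrix as an inverse arising from Proposition~\ref{proposition:bayes_risk_jacobian_hessian}. Writing $\ell \circ \Pi^{-1} = (\ell_{-C}\circ \Pi^{-1}, \ell_C \circ \Pi^{-1})$ as before, the Jacobian is the block column
\begin{align*}
    J_{\ell \circ \Pi^{-1}}(\Tilde{p}) =
    \begin{bmatrix}
    J_{\ell_{-C} \circ \Pi^{-1}}(\Tilde{p})\\
    J_{\ell_{C} \circ \Pi^{-1}}(\Tilde{p})
    \end{bmatrix},
\end{align*}
and by Proposition~\ref{proposition:proper_loss_stationarity} the bottom block is $J_{\ell_C\circ\Pi^{-1}}(\Tilde{p}) = -\tfrac{\Tilde{p}^\top}{p_C} J_{\ell_{-C}\circ\Pi^{-1}}(\Tilde{p})$. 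So it suffices to show that $J_{\ell_{-C}\circ \Pi^{-1}}(\Tilde{p}) = (I_{C-1} - \mathbbm{1}_{C-1}\Tilde{p}^\top)\, H_{\Tilde{\underline{L}}}(\Tilde{p})$, since substituting this into both blocks produces exactly the claimed expression.

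The key step is therefore to invert the relation $H_{\Tilde{\underline{L}}}(\Tilde{p}) = \bigl(I_{C-1} + \mathbbm{1}_{C-1}\tfrac{\Tilde{p}^\top}{p_C}\bigr) J_{\ell_{-C}\circ\Pi^{-1}}(\Tilde{p})$ from Proposition~\ref{proposition:bayes_risk_jacobian_hessian}. I would verify directly that $I_{C-1} - \mathbbm{1}_{C-1}\Tilde{p}^\top$ is the (two-sided) inverse of $I_{C-1} + \mathbbm{1}_{C-1}\tfrac{\Tilde{p}^\top}{p_C}$. Both are rank-one updates of the identity, so the product is computed by expanding and collecting the $\mathbbm{1}_{C-1}\Tilde{p}^\top$ term; the scalar coefficient involves $\Tilde{p}^\top\mathbbm{1}_{C-1}$, which equals $1 - p_C$ by the definition $\Pi^{-1}(\Tilde{p}) = (\Tilde{p}_1,\dots,\Tilde{p}_{C-1}, 1 - \sum_i \Tilde{p}_i)$. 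A short computation shows that coefficient vanishes, giving $(I_{C-1} - \mathbbm{1}_{C-1}\Tilde{p}^\top)(I_{C-1} + \mathbbm{1}_{C-1}\tfrac{\Tilde{p}^\top}{p_C}) = I_{C-1}$, hence $J_{\ell_{-C}\circ\Pi^{-1}}(\Tilde{p}) = (I_{C-1} - \mathbbm{1}_{C-1}\Tilde{p}^\top) H_{\Tilde{\underline{L}}}(\Tilde{p})$ as required.

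Finally I would assemble the two blocks: the top block is $(I_{C-1} - \mathbbm{1}_{C-1}\Tilde{p}^\top) H_{\Tilde{\underline{L}}}(\Tilde{p})$ and the bottom block, via Proposition~\ref{proposition:proper_loss_stationarity}, is $-\tfrac{\Tilde{p}^\top}{p_C}(I_{C-1} - \mathbbm{1}_{C-1}\Tilde{p}^\top) H_{\Tilde{\underline{L}}}(\Tilde{p})$, which is precisely the stated matrix factorisation. I expect the only point requiring care is the rank-one inverse verification — getting the scalar coefficient to cancel using $\Tilde{p}^\top\mathbbm{1}_{C-1} = 1 - p_C$ — but this is a routine Sherman–Morrison-style calculation rather than a genuine obstacle; everything else is bookkeeping with the block structure.
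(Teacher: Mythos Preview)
Your proposal is correct and mirrors the paper's own proof essentially step for step: both verify the rank-one inverse identity $(I_{C-1} - \mathbbm{1}_{C-1}\Tilde{p}^{\top})(I_{C-1} + \mathbbm{1}_{C-1}\tfrac{\Tilde{p}^{\top}}{p_C}) = I_{C-1}$, apply it to the Hessian formula of Proposition~\ref{proposition:bayes_risk_jacobian_hessian} to extract $J_{\ell_{-C}\circ\Pi^{-1}}$, and then use Proposition~\ref{proposition:proper_loss_stationarity} for the bottom block before stacking. The only cosmetic difference is ordering (you set up the block structure first, the paper states the inverse identity first), not substance.
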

\begin{proof}
For any $\Tilde{p} \in \Tilde{\Delta}^{C-1}$ and $p_C = 1- \mathbbm{1}_{C-1}^{\top}\Tilde{p}$, we first note that 
\begin{align*}
    (I_{C-1} - \mathbbm{1}_{C-1}\Tilde{p}^{\top})\left(I_{C-1} + \mathbbm{1}_{C-1}\frac{\Tilde{p}^{\top}}{p_C}\right) = I_{C-1}.
\end{align*}
Using the above with Proposition \ref{proposition:bayes_risk_jacobian_hessian}, gives us
\begin{align*}
    J_{\ell_{-C} \circ \Pi^{-1}}(\Tilde{p}) &= (I_{C-1} - \mathbbm{1}_{C-1}\Tilde{p}^{\top}) H_{\Tilde{\underline{L}}}(\Tilde{p}).
\end{align*}
Proposition \ref{proposition:proper_loss_stationarity} then gives us
\begin{align*}
    J_{\ell_{C} \circ \Pi^{-1}}(\Tilde{p}) &= -\frac{\Tilde{p}^{\top}}{p_C}J_{\ell_{-C}\circ \Pi^{-1}}(\Tilde{p})\\
    &=-\frac{\Tilde{p}^{\top}}{p_C}(I_{C-1} - \mathbbm{1}_{C-1}\Tilde{p}^{\top}) H_{\Tilde{\underline{L}}}(\Tilde{p}).
\end{align*}
$J_{\ell_{-C} \circ \Pi^{-1}}(\Tilde{p})$ and $J_{\ell_{C} \circ \Pi^{-1}}(\Tilde{p})$ form the upper $(C-1) \times (C-1)$ matrix and the lower $1 \times (C-1)$ matrix of $J_{\ell \circ \Pi^{-1}}(\Tilde{p})$ respectively. This completes the proof.
\end{proof}

With the expression of the Jacobian $J_{\ell \circ \Pi^{-1}}(\Tilde{p})$ in hand, we can deduce that $\Tilde{\ell}_i(\mathbf{x})= \bigl(\ell_i \circ \Pi^{-1} \circ \Tilde{\psi}^{-1}\bigr)(\mathbf{x})$ is convex for each $i=1,\dots,C$. In other words, each component of $\Tilde{\ell}(\mathbf{x})$ is convex.

\begin{theorem}
\label{theorem:proper_canonical_losses_convex}
Let $\Tilde{\underline{L}}: \Tilde{\Delta}^{C-1} \to \R$ be a twice-differentiable projected conditional Bayes risk, $\ell:\Delta^{C-1} \to \R^C$ be its associated proper loss and $\Tilde{\psi}: \Tilde{\Delta}^{C-1} \to \R^{C-1}$ be its associated canonical link. Suppose $H_{-\Tilde{\underline{L}}}(\Tilde{p})$ is positive definite for all $\Tilde{p} \in \Tilde{\Delta}^{C-1}$. Then 
\begin{align*}
    \Tilde{\ell}_i(\mathbf{x})= \bigl(\ell_i \circ \Pi^{-1} \circ \Tilde{\psi}^{-1}\bigr)(\mathbf{x})
\end{align*}
is convex for each $i=1,\dots,C$.
\end{theorem}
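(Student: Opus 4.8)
The plan is to compute the gradient and then the Hessian of each $\Tilde{\ell}_i = \ell_i \circ \Pi^{-1} \circ \Tilde{\psi}^{-1}$ in closed form from the Jacobian formula of Corollary \ref{corollary:complete_loss_jacobian} together with the chain rule, and to read convexity off the Hessian. Fix $\mathbf{x}$ in the domain of $\Tilde{\psi}^{-1}$ and set $\Tilde{q} = \Tilde{\psi}^{-1}(\mathbf{x})$ and $q_C := 1 - \mathbbm{1}_{C-1}^{\top}\Tilde{q}$. Since $\Tilde{\psi} = -\nabla\Tilde{\underline{L}}$, we have $J_{\Tilde{\psi}}(\Tilde{q}) = -H_{\Tilde{\underline{L}}}(\Tilde{q}) = H_{-\Tilde{\underline{L}}}(\Tilde{q})$, which is positive definite by hypothesis; by the properties of Legendre functions (the inverse function theorem applied to $-\Tilde{\underline{L}}$) the map $\Tilde{\psi}^{-1} = \nabla\bigl((-\Tilde{\underline{L}})^{*}\bigr)$ is continuously differentiable with $J_{\Tilde{\psi}^{-1}}(\mathbf{x}) = \bigl(H_{-\Tilde{\underline{L}}}(\Tilde{q})\bigr)^{-1}$, a symmetric positive definite matrix.

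First I would apply the chain rule, $J_{\Tilde{\ell}}(\mathbf{x}) = J_{\ell \circ \Pi^{-1}}(\Tilde{q})\, J_{\Tilde{\psi}^{-1}}(\mathbf{x})$, and substitute $J_{\ell \circ \Pi^{-1}}(\Tilde{q})$ from Corollary \ref{corollary:complete_loss_jacobian}. Because $H_{\Tilde{\underline{L}}} = -H_{-\Tilde{\underline{L}}}$, the Hessian factors telescope, $H_{\Tilde{\underline{L}}}(\Tilde{q})\bigl(H_{-\Tilde{\underline{L}}}(\Tilde{q})\bigr)^{-1} = -I_{C-1}$, leaving
\begin{align*}
    J_{\Tilde{\ell}}(\mathbf{x}) = -\begin{bmatrix} I_{C-1} - \mathbbm{1}_{C-1}\Tilde{q}^{\top} \\ -\frac{\Tilde{q}^{\top}}{q_C}\bigl(I_{C-1} - \mathbbm{1}_{C-1}\Tilde{q}^{\top}\bigr) \end{bmatrix}.
\end{align*}
Reading off rows: for $i \leq C-1$ the $i$-th row is $\Tilde{q}^{\top} - e_i^{\top}$ with $e_i$ the $i$-th standard basis vector of $\R^{C-1}$, so $\nabla\Tilde{\ell}_i(\mathbf{x}) = \Tilde{q} - e_i$; and for $i = C$ the bottom row transposes, using $q_C = 1 - \mathbbm{1}_{C-1}^{\top}\Tilde{q}$, to $\frac{1}{q_C}(I_{C-1} - \Tilde{q}\,\mathbbm{1}_{C-1}^{\top})\Tilde{q} = \Tilde{q}$. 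Hence, writing $\mathbf{e}_1,\dots,\mathbf{e}_C$ for the vertices of $\Delta^{C-1}$ (so $\Pi(\mathbf{e}_C) = \mathbf{0}$), we obtain uniformly $\nabla\Tilde{\ell}_i(\mathbf{x}) = \Tilde{\psi}^{-1}(\mathbf{x}) - \Pi(\mathbf{e}_i)$.

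Differentiating once more, $H_{\Tilde{\ell}_i}(\mathbf{x}) = J_{\Tilde{\psi}^{-1}}(\mathbf{x}) = \bigl(H_{-\Tilde{\underline{L}}}(\Tilde{\psi}^{-1}(\mathbf{x}))\bigr)^{-1}$, which is positive definite since the inverse of a positive definite matrix is positive definite. As the domain of $\Tilde{\psi}^{-1}$, namely $\{-\nabla\Tilde{\underline{L}}(\Tilde{p}) : \Tilde{p} \in \Tilde{\Delta}^{C-1}\}$, is (the closure of) a convex set by the properties of Legendre functions, the positive-definite-Hessian criterion shows that each $\Tilde{\ell}_i$ is strictly convex, in particular convex. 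Equivalently and more conceptually, $\nabla\Tilde{\ell}_i = \nabla\bigl((-\Tilde{\underline{L}})^{*}\bigr) - \Pi(\mathbf{e}_i)$ shows that $\Tilde{\ell}_i$ coincides with the Legendre--Fenchel conjugate $(-\Tilde{\underline{L}})^{*}$ up to an affine term, and conjugates are convex.

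The step I expect to require the most care is not conceptual but bureaucratic: correctly carrying the projection $\Pi$ through the matrix algebra --- especially the $i = C$ partial loss, where one must use the simplex identity $q_C = 1 - \mathbbm{1}_{C-1}^{\top}\Tilde{q}$ to collapse the bottom row of $J_{\Tilde{\ell}}$ --- and legitimately invoking the inverse function theorem for $\Tilde{\psi}^{-1}$, which is precisely where the hypothesis that $H_{-\Tilde{\underline{L}}}$ is positive definite everywhere enters, both to guarantee that $\Tilde{\psi}^{-1}$ exists and is $C^{1}$ and to make $J_{\Tilde{\psi}^{-1}}$ positive definite.
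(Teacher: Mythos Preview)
Your proof is correct and follows essentially the same route as the paper's: chain rule plus Corollary~\ref{corollary:complete_loss_jacobian} to obtain $J_{\Tilde{\ell}}$, then reading off the row gradients in the cases $i<C$ and $i=C$, and differentiating once more to identify each Hessian with $\bigl(H_{-\Tilde{\underline{L}}}\bigr)^{-1}$. Your closing remark that $\Tilde{\ell}_i$ equals $(-\Tilde{\underline{L}})^{*}$ up to an affine term is a nice conceptual addition that the paper establishes separately (Theorem~\ref{theorem:proper_loss_expression}).
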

\begin{proof}
Recall that $\Tilde{\underline{L}}$ is concave. It follows that $-\Tilde{\underline{L}}$ is a twice-differentiable convex function.

Consider $\mathbf{x} \in \R^{C-1}$ and denote $\Tilde{p} = \Tilde{\psi}^{-1}(\mathbf{x})$. Using the chain rule, we have
\begin{align*}
    J_{\ell \circ \Pi^{-1} \circ \Tilde{\psi}^{-1}}(\mathbf{x}) &= J_{\ell \circ \Pi^{-1}}(\Tilde{\psi}^{-1}(\mathbf{x})) J_{\Tilde{\psi}^{-1}}(\mathbf{x})\\
    &= \begin{bmatrix}
    I_{C-1} - \mathbbm{1}_{C-1}\Tilde{p}^{\top}\\
    - \frac{\Tilde{p}^{\top}}{p_C} (I_{C-1} - \mathbbm{1}_{C-1}\Tilde{p}^{\top})
    \end{bmatrix}
    H_{\Tilde{\underline{L}}}(\Tilde{p})
    \bigl(H_{-\Tilde{\underline{L}}}(\Tilde{p})\bigr)^{-1}\\
    &=-\begin{bmatrix}
    I_{C-1} - \mathbbm{1}_{C-1}\Tilde{p}^{\top}\\
    - \frac{\Tilde{p}^{\top}}{p_C} (I_{C-1} - \mathbbm{1}_{C-1}\Tilde{p}^{\top})
    \end{bmatrix}
\end{align*}
where the second equality comes as a result of Proposition \ref{proposition:bayes_risk_jacobian_hessian} and the inverse function theorem yielding 
\begin{align*}
    J_{\Tilde{\psi}^{-1}}(\mathbf{x}) &= \bigl(J_{\Tilde{\psi}}(\Tilde{p})\bigr)^{-1}\\
    &= \bigl(H_{-\Tilde{\underline{L}}}(\Tilde{p})\bigr)^{-1}.
\end{align*}

To prove our claim for all $i\in \{1,\dots,C\}$, we proceed by considering the following two cases.
\paragraph{Case 1: $i<C$.}
Fix $i<C$. Denote $e_i \in \R^{C-1}$ as $i$-th standard basis vector of $\R^{C-1}$. Then we have
\begin{align*}
    J_{\ell_i \circ \Pi^{-1} \circ \Tilde{\psi}^{-1}}(\mathbf{x}) &= (e_i, 0)^{\top} J_{\ell \circ \Pi^{-1} \circ \Tilde{\psi}^{-1}}(\mathbf{x})\\
    &= -(e_i^{\top} - \Tilde{p}^{\top})\\
    &= -(e_i - \Tilde{\psi}^{-1}(\mathbf{x}))^{\top}.
\end{align*}
Differentiating $\bigl(J_{\ell_i \circ \Pi^{-1} \circ \Tilde{\psi}^{-1}}(\mathbf{x})\bigr)^{\top}$ gives us
\begin{align*}
    H_{\ell_i \circ \Pi^{-1} \circ \Tilde{\psi}^{-1}}(\mathbf{x}) &= J_{\Tilde{\psi}^{-1}}(\mathbf{x})\\
    &=\bigl(H_{-\Tilde{\underline{L}}}(\Tilde{p})\bigr)^{-1}
\end{align*}
Recall that $-\Tilde{\underline{L}}$ is twice-differentiable and convex, and $H_{-\Tilde{\underline{L}}}(\Tilde{p})$ is positive definite for all $\Tilde{p} \in \Tilde{\Delta}^{C-1}$. It follows that $H_{\ell_i \circ \Pi^{-1} \circ \Tilde{\psi}^{-1}}(\mathbf{x})=\bigl(H_{-\Tilde{\underline{L}}}(\Tilde{p})\bigr)^{-1}$ exists and is positive definite for all $\Tilde{p} \in \Tilde{\Delta}^{C-1}$. Thus, $\ell_i \circ \Pi^{-1} \circ \Tilde{\psi}^{-1}(\mathbf{x})$ is a strictly convex function. This holds for arbitrary $i<C$.

\paragraph{Case 2: $i=C$.}
We have 
\begin{align*}
    J_{\ell_C \circ \Pi^{-1} \circ \Tilde{\psi}^{-1}}(\mathbf{x}) &= \frac{\Tilde{p}^{\top}}{p_C} (I_{C-1} - \mathbbm{1}_{C-1}\Tilde{p}^{\top})\\
    &= \frac{1}{p_C} (\Tilde{p}^{\top} - (1-p_C)\Tilde{p}^{\top})\\
    &=\Tilde{p}^{\top}\\
    &=\bigl(\Tilde{\psi}^{-1}(\mathbf{x})\bigr)^{\top}
\end{align*}
Differentiating $\bigl(J_{\ell_C \circ \Pi^{-1} \circ \Tilde{\psi}^{-1}}(\mathbf{x})\bigr)^{\top}$, we have 
\begin{align*}
    H_{\ell_C \circ \Pi^{-1} \circ \Tilde{\psi}^{-1}}(\mathbf{x}) &= J_{\Tilde{\psi}^{-1}}(\mathbf{x})\\
    &=\bigl(H_{-\Tilde{\underline{L}}}(\Tilde{p})\bigr)^{-1}
\end{align*}
Following the same argument as Case 1 shows that $\ell_C \circ \Pi^{-1} \circ \Tilde{\psi}^{-1}$ is a strictly convex function.
\end{proof}

Theorem \ref{theorem:proper_canonical_losses_convex} shows that any differentiable proper loss can be transformed such that its partial losses are convex functions, by equipping the loss with its associated canonical link. This illustrates the importance of pairing a proper loss with its corresponding canonical link. To complete a pairing of proper loss and canonical link, it is sufficient to learn one of these functions.

While the machinery of behind proper losses and canonical links have demonstrated the attractiveness of working with proper canonical losses, we have yet to define an analytical form of the loss we aim to learn. We conclude this chapter by providing the expression of a proper canonical loss and the proper loss (up to a projection), given a known canonical link. 
\begin{theorem}
\label{theorem:proper_loss_expression}
Let $\Tilde{\underline{L}}: \Tilde{\Delta}^{C-1} \to \R$ be a twice-differentiable projected conditional Bayes risk with $H_{-\Tilde{\underline{L}}}(\Tilde{p})$ being positive definite for all $\Tilde{p} \in \Tilde{\Delta}^{C-1}$ and $\Tilde{\psi}: \Tilde{\Delta}^{C-1} \to \R^{C-1}$ be its associated canonical link. Then we have
\begin{align*}
    (\ell \circ \Pi^{-1} \circ \Tilde{\psi}^{-1})(\mathbf{x}) &=
    \begin{bmatrix}
    \bigl((-\Tilde{\underline{L}})^{*}(\mathbf{x})\bigr)\mathbbm{1}_{C-1} - \mathbf{x}\\
    (-\Tilde{\underline{L}})^{*}(\mathbf{x})
    \end{bmatrix}
\end{align*}
where $(-\Tilde{\underline{L}})^{*}$ is the Legendre-Fenchel conjugate of $-\Tilde{\underline{L}}$. Moreover, we also have
\begin{align*}
    (\ell \circ \Pi^{-1})(\Tilde{p}) &=
    \begin{bmatrix}
    \left(\bigl((-\Tilde{\underline{L}})^{*} \circ \Tilde{\psi}\bigr) (\Tilde{p})\right)\mathbbm{1}_{C-1} - \Tilde{\psi}(\Tilde{p})\\
    \bigl((-\Tilde{\underline{L}})^{*} \circ \Tilde{\psi}\bigr) (\Tilde{p})
    \end{bmatrix}
\end{align*}
\end{theorem}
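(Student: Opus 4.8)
}
The plan is to reduce both displays to a single identity for the last partial loss, namely $(\ell_C \circ \Pi^{-1})(\Tilde{p}) = \langle \Tilde{p}, \Tilde{\psi}(\Tilde{p})\rangle + \Tilde{\underline{L}}(\Tilde{p})$, after which Corollary~\ref{corollary:connection_links_losses} supplies the remaining components for free. First I would invoke the properties of Legendre functions recorded earlier: $-\Tilde{\underline{L}}$ is twice-differentiable and, since $H_{-\Tilde{\underline{L}}}(\Tilde{p})$ is positive definite by hypothesis, strictly convex, hence Legendre; therefore $\Tilde{\psi} = \nabla(-\Tilde{\underline{L}})$ is invertible with $\Tilde{\psi}^{-1} = \nabla(-\Tilde{\underline{L}})^{*}$, and unfolding the definition of the Legendre-Fenchel conjugate gives $(-\Tilde{\underline{L}})^{*}(\mathbf{x}) = \langle \Tilde{\psi}^{-1}(\mathbf{x}), \mathbf{x}\rangle + \Tilde{\underline{L}}(\Tilde{\psi}^{-1}(\mathbf{x}))$.

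Next I would prove the key identity. Fix $\Tilde{p} \in \Tilde{\Delta}^{C-1}$ and write $p = \Pi^{-1}(\Tilde{p})$, so that $\mathbbm{1}_{C-1}^{\top}\Tilde{p} = 1 - p_C$. Expanding $\Tilde{\underline{L}}(\Tilde{p}) = \underline{L}(p) = L(p,p)$ exactly as at the start of the proof of Proposition~\ref{proposition:bayes_risk_jacobian_hessian} yields $\Tilde{\underline{L}}(\Tilde{p}) = \Tilde{p}^{\top}(\ell_{-C}\circ\Pi^{-1})(\Tilde{p}) + p_C\,(\ell_C\circ\Pi^{-1})(\Tilde{p})$. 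Substituting the rearrangement $(\ell_{-C}\circ\Pi^{-1})(\Tilde{p}) = \big((\ell_C\circ\Pi^{-1})(\Tilde{p})\big)\mathbbm{1}_{C-1} - \Tilde{\psi}(\Tilde{p})$ of Corollary~\ref{corollary:connection_links_losses} and using $\mathbbm{1}_{C-1}^{\top}\Tilde{p} = 1 - p_C$, the two $(\ell_C\circ\Pi^{-1})(\Tilde{p})$ terms combine with coefficient $(1-p_C)+p_C = 1$, leaving $\Tilde{\underline{L}}(\Tilde{p}) = (\ell_C\circ\Pi^{-1})(\Tilde{p}) - \langle \Tilde{p}, \Tilde{\psi}(\Tilde{p})\rangle$, which is the claimed identity. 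Evaluating the conjugate formula from the first paragraph at $\mathbf{x} = \Tilde{\psi}(\Tilde{p})$ and using $\Tilde{\psi}^{-1}\circ\Tilde{\psi} = \mathrm{id}$ then gives $(-\Tilde{\underline{L}})^{*}(\Tilde{\psi}(\Tilde{p})) = \langle \Tilde{p}, \Tilde{\psi}(\Tilde{p})\rangle + \Tilde{\underline{L}}(\Tilde{p}) = (\ell_C\circ\Pi^{-1})(\Tilde{p})$, i.e. $\ell_C\circ\Pi^{-1} = (-\Tilde{\underline{L}})^{*}\circ\Tilde{\psi}$, equivalently $\ell_C\circ\Pi^{-1}\circ\Tilde{\psi}^{-1} = (-\Tilde{\underline{L}})^{*}$.

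Finally I would assemble the two displays. Stacking $(\ell_{-C}\circ\Pi^{-1})(\Tilde{p}) = \big((-\Tilde{\underline{L}})^{*}\circ\Tilde{\psi}\big)(\Tilde{p})\,\mathbbm{1}_{C-1} - \Tilde{\psi}(\Tilde{p})$ (from Corollary~\ref{corollary:connection_links_losses} together with the step above) on top of $(\ell_C\circ\Pi^{-1})(\Tilde{p}) = \big((-\Tilde{\underline{L}})^{*}\circ\Tilde{\psi}\big)(\Tilde{p})$ gives the second formula, and composing on the right with $\Tilde{\psi}^{-1}$, so that $\Tilde{\psi}\circ\Tilde{\psi}^{-1}$ is the identity, gives the first. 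The only delicate point is the bookkeeping in the key identity --- tracking the split $\ell = (\ell_{-C}, \ell_C)$ and inserting $p_C = 1 - \mathbbm{1}_{C-1}^{\top}\Tilde{p}$ at exactly the right moment so that the last-component terms cancel cleanly; everything else is a direct application of Proposition~\ref{proposition:bayes_risk_jacobian_hessian}, Corollary~\ref{corollary:connection_links_losses}, and the Legendre-function machinery already stated in the excerpt.
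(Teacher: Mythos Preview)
Your proof is correct, and it takes a genuinely different route from the paper's. The paper argues via differentiation: it reuses the Jacobian computations from Theorem~\ref{theorem:proper_canonical_losses_convex} to obtain $J_{\ell_i \circ \Pi^{-1} \circ \Tilde{\psi}^{-1}}(\mathbf{x}) = -(e_i - \Tilde{\psi}^{-1}(\mathbf{x}))^{\top}$ for $i<C$ and $J_{\ell_C \circ \Pi^{-1} \circ \Tilde{\psi}^{-1}}(\mathbf{x}) = (\Tilde{\psi}^{-1}(\mathbf{x}))^{\top}$, recognises $\Tilde{\psi}^{-1} = \nabla(-\Tilde{\underline{L}})^{*}$, and then \emph{integrates} to recover the partial losses --- explicitly stating that the result holds only ``up to an additive constant.'' You instead work directly at the level of the functions: you expand $\Tilde{\underline{L}}(\Tilde{p}) = \Tilde{p}^{\top}(\ell_{-C}\circ\Pi^{-1})(\Tilde{p}) + p_C\,(\ell_C\circ\Pi^{-1})(\Tilde{p})$, substitute Corollary~\ref{corollary:connection_links_losses}, and match the result against the closed-form expression $(-\Tilde{\underline{L}})^{*}(\Tilde{\psi}(\Tilde{p})) = \langle\Tilde{p},\Tilde{\psi}(\Tilde{p})\rangle + \Tilde{\underline{L}}(\Tilde{p})$ for the conjugate. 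Your approach has two advantages: it pins down the additive constant exactly (so the displayed equalities hold on the nose, not merely up to a constant), and it does not rely on the Jacobian machinery of Corollary~\ref{corollary:complete_loss_jacobian} or Theorem~\ref{theorem:proper_canonical_losses_convex}, only on Corollary~\ref{corollary:connection_links_losses} and the definition of the Legendre--Fenchel conjugate. The paper's approach, on the other hand, makes the convexity of each partial loss (Theorem~\ref{theorem:proper_canonical_losses_convex}) and the present formula fall out of a single Jacobian calculation.
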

\begin{proof}
Following the workings of Theorem \ref{theorem:proper_canonical_losses_convex}, we have 
\begin{align*}
    J_{\ell_i \circ \Pi^{-1} \circ \Tilde{\psi}^{-1}}(\mathbf{x})&= -(e_i - \Tilde{\psi}^{-1}(\mathbf{x}))^{\top} \text{ for $i<C$}
\end{align*}
and 
\begin{align*}
    J_{\ell_C \circ \Pi^{-1} \circ \Tilde{\psi}^{-1}}(\mathbf{x}) 
    &=\bigl(\Tilde{\psi}^{-1}(\mathbf{x})\bigr)^{\top}.
\end{align*}
The properties of Legendre functions give us $\Tilde{\psi}^{-1} = \nabla (-\Tilde{\underline{L}})^{*}$. This allows us to deduce that, up to an additive constant, we have
\begin{align*}
    (\ell_i \circ \Pi^{-1} \circ \Tilde{\psi}^{-1})(\mathbf{x}) = (-\Tilde{\underline{L}})^{*}(\mathbf{x}) - e_i^{\top}\mathbf{x} \text{ for $i<C$}
\end{align*}
and
\begin{align*}
    (\ell_C \circ \Pi^{-1} \circ \Tilde{\psi}^{-1})(\mathbf{x}) 
    &=(-\Tilde{\underline{L}})^{*}(\mathbf{x}).
\end{align*}
We can rewrite this in matrix form as
\begin{align*}
    (\ell \circ \Pi^{-1} \circ \Tilde{\psi}^{-1})(\mathbf{x}) &=
    \begin{bmatrix}
    \bigl((-\Tilde{\underline{L}})^{*}(\mathbf{x})\bigr)\mathbbm{1}_{C-1} - \mathbf{x}\\
    (-\Tilde{\underline{L}})^{*}(\mathbf{x})
    \end{bmatrix}.
\end{align*}
Substituting $\Tilde{\psi}(\Tilde{p}) = \mathbf{x}$ in the above gives us the expression for $(\ell \circ \Pi^{-1})(\Tilde{p})$. This completes the proof.
\end{proof}
Theorem \ref{theorem:proper_loss_expression} provides us with an analytical expression for the proper canonical loss. It also shows that proper canonical losses are formulated by using the Legendre-Fenchel conjugate $(-\Tilde{\underline{L}})^{*} (\mathbf{x})$ as a baseline for the $C$-th partial loss at a point $\mathbf{x}$ and that all other partial losses are calculated by using $\mathbf{x} = \Tilde{\psi}(\Tilde{p})$ as an offset. This is intuitively sensible for two reasons. First, the partial loss for one class is only significant when compared against the partial losses of other classes. Second, we generally desire partial losses that are convex and proper canonical losses inherit this property from the convexity of $(-\Tilde{\underline{L}})^{*}$.

\section{Redefining Multinomial Logistic Regression}
\label{appendix:redefining_multinomial_logistic}
In this section, we first refine the definition of the categorical distribution by introducing the \emph{projected categorical distribution} as the natural multiclass analogue of the Bernoulli distribution. We then provide a principled reformulation of multinomial logistic regression in the framework of generalised linear models; by providing a canonical link function.

\subsection{Projected Categorical Distribution}
To present the definition of the projected categorical distribution, we first revisit the definition of the Exponential Family of probability distributions.
\begin{definition}[Exponential Family]
\label{definition:exponential_family}
A probability distribution belongs to an exponential family of distributions if its probability density has the form
\begin{align*}
    f(x) = h(x)\exp\bigl(\pmb{\theta}^{\top}\phi(x) - A(\pmb{\theta}) \bigr)
\end{align*}
where $\pmb{\theta} \in \R^n$ are the natural parameters, $\phi(x) \in \R^n$ is the vector of sufficient statistics, $A(\pmb{\theta}) \in \R$ is the log-partition function and $h(x) \in \R$ is the base measure. Members of the exponential family where there are no linear constraints on $\pmb{\theta}$ nor $\phi(x)$ are termed minimal or to have minimal form.
\end{definition}

With Definition \ref{definition:exponential_family} in hand, we can now formulate the categorical distribution in minimal form.
\begin{proposition}[Categorical Distribution]
\label{proposition:categorical}
A random vector $\mathbf{x}\in \{0,1\}^C$ with $\sum_{k=1}^C x_k = 1$, has a categorical distribution with $C$ categories if it has probability density in minimal form given by
\begin{align*}
    f(\mathbf{x}) = \exp\left(\sum_{k=1}^{C-1}[\![x_k = 1]\!]\theta_k - \log\left(1 + \sum_{k=1}^{C-1} \exp(\theta_k)\right)\right)
\end{align*}
where $\pmb{\theta} \in \R^{C-1}$, $\phi(\mathbf{x})= ([\![x_1 = 1]\!], \dots , [\![x_{C-1} = 1]\!])^{\top}$, $A(\pmb{\theta})=\log\left(1 + \sum_{k=1}^{C-1} \exp(\theta_k)\right)$ and $h(\mathbf{x}) = 1$. We denote the distribution of $\mathbf{x}$ as $\mathbf{x} \sim \text{Categorical}(p)$ with probability parameters $p = (\Pi^{-1} \circ \nabla A)(\pmb{\theta}) \in \Delta^{C-1}$.
\end{proposition}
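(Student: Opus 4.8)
The plan is to verify the three assertions the statement bundles together: that the displayed formula is a valid pmf on the one-hot vectors, that it coincides with the usual categorical law under the identification $p=(\Pi^{-1}\circ\nabla A)(\pmb\theta)$, and that the representation is minimal in the sense of Definition~\ref{definition:exponential_family}.

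First I would start from the classical form of a $C$-category distribution: for $\mathbf{x}\in\{0,1\}^C$ with $\sum_{k=1}^C x_k=1$ and $p\in\mathrm{ri}(\Delta^{C-1})$, the pmf is $\prod_{k=1}^C p_k^{x_k}$. Using the one-hot constraint to substitute $x_C=1-\sum_{k=1}^{C-1}x_k$, taking logarithms, and replacing each $x_k\in\{0,1\}$ by $[\![x_k=1]\!]$ yields $\log\prod_{k=1}^C p_k^{x_k}=\sum_{k=1}^{C-1}[\![x_k=1]\!]\log(p_k/p_C)+\log p_C$. The crucial step is to read off the natural parameter $\theta_k:=\log(p_k/p_C)$; then $\sum_{k=1}^{C-1}\exp(\theta_k)=(1-p_C)/p_C$, so $1+\sum_{k=1}^{C-1}\exp(\theta_k)=1/p_C$ and $\log p_C=-A(\pmb\theta)$. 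Substituting back reproduces exactly the displayed density with $h(\mathbf{x})=1$, $\phi(\mathbf{x})=([\![x_1=1]\!],\dots,[\![x_{C-1}=1]\!])^\top$, and $A(\pmb\theta)=\log(1+\sum_{k=1}^{C-1}\exp(\theta_k))$. Conversely, the correspondence $p\leftrightarrow\pmb\theta$ just used shows the induced map $\pmb\theta\mapsto p$ is precisely $p=\Pi^{-1}(\nabla A(\pmb\theta))$, since $\nabla A=\text{softmax}^+$ has $i$-th entry $\exp(\theta_i)/(1+\sum_k\exp(\theta_k))$ and $\Pi^{-1}$ restores $p_C=1/(1+\sum_k\exp(\theta_k))$. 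Evaluating the displayed $f$ at each one-hot $e_j$ returns $p_j$, which simultaneously confirms normalisation (the $p_j$ sum to $1$ by construction of $\Pi^{-1}$) and shows $f(\mathbf{x})=\prod_k p_k^{x_k}$, so it really is $\text{Categorical}(p)$.

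That $p\in\Delta^{C-1}$, in fact $p\in\mathrm{ri}(\Delta^{C-1})$, is where I would invoke Theorem~\ref{theorem:legendretron_is_legendre}: there $A=\text{LogSumExp}^+$ is Legendre with $\nabla A=\text{softmax}^+$ mapping $\R^{C-1}$ into $\Tilde{\Delta}^{C-1}$, so $\sum_{i=1}^{C-1}(\nabla A(\pmb\theta))_i<1$ with every entry in $(0,1)$; composing with $\Pi^{-1}$ appends the strictly positive slack $p_C$ and produces a point of $\Delta^{C-1}$. Since $\pmb\theta\mapsto p$ and $p\mapsto(\log(p_k/p_C))_{k<C}$ are mutually inverse, the parameterisation is unambiguous and the notation $\mathbf{x}\sim\text{Categorical}(p)$ is well defined.

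For minimality I would observe that the support is the set of $C$ one-hot vectors, with $\phi(e_k)=e_k$ for $k<C$ and $\phi(e_C)=\mathbf{0}$; these $C$ points affinely span $\R^{C-1}$, so there is no nontrivial affine relation $a^\top\phi(\mathbf{x})=b$ valid on the support, nor any linear constraint on $\pmb\theta\in\R^{C-1}$, which is exactly Definition~\ref{definition:exponential_family}'s minimality. Nothing here is deep; the only place requiring care — the mild ``hard part'' — is bookkeeping the one-hot constraint, which is used both to collapse $\prod_k p_k^{x_k}$ to a $(C-1)$-dimensional sufficient statistic and to recognise $\nabla A$ as $\text{softmax}^+$. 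Once that is handled consistently, what remains is routine algebra together with an appeal to Theorem~\ref{theorem:legendretron_is_legendre}.
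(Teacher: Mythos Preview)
Your proof is correct and follows essentially the same route as the paper: both start from the standard categorical pmf $\prod_{k=1}^C p_k^{x_k}$, use the one-hot constraint to eliminate the $C$-th coordinate, identify $\theta_k=\log(p_k/p_C)$, and verify that $\log p_C=-A(\pmb\theta)$ via $1+\sum_k\exp(\theta_k)=1/p_C$. You are in fact more thorough than the paper's proof, which stops at the algebraic rewrite and does not explicitly verify minimality or the claim $p\in\Delta^{C-1}$; your additional checks (affine span of the $\phi(e_k)$, appeal to Theorem~\ref{theorem:legendretron_is_legendre} for $\nabla A=\text{softmax}^+$) are correct and appropriate.
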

\begin{proof}
Let $\mathbf{x}\in \{0,1\}^C$ with $\sum_{k=1}^C x_k = 1$ be a random vector that has a categorical distribution with probabilities $p \in \Delta^{C-1}$. We can rewrite the probability density function of the categorical distribution as
\begin{align*}
    f(\mathbf{x}) &= \exp\left(\sum_{i=1}^C [\![x_i = 1]\!] \log(p_i)\right)\\
    &= \exp\left(\sum_{i=1}^{C-1} [\![x_i = 1]\!] \log(p_i) + [\![x_C = 1]\!] \log(p_C)\right)\\
    &= \exp\left(\sum_{i=1}^{C-1} [\![x_i = 1]\!] \log(p_i) + \left(1-\sum_{i=1}^{C-1} [\![x_i = 1]\!]\right) \log\left(1-\sum_{i=1}^{C-1}p_i\right)\right)\\
    &= \exp\left(\sum_{i=1}^{C-1} [\![x_i = 1]\!] \log\left(\frac{p_i}{1-\sum_{j=1}^{C-1}p_j}\right) + \log\left(1-\sum_{i=1}^{C-1}p_i\right)\right)\\
    &= \exp\left(\sum_{i=1}^{C-1} [\![x_i = 1]\!] \theta_i - \log\left(1 + \sum_{k=1}^{C-1} \exp(\theta_k)\right)\right)
\end{align*}
where we let $\theta_i = \log\left(\frac{p_i}{1-\sum_{j=1}^{C-1}p_j}\right)$ for each $i=1,\dots,C-1$. We note the third equality results from the constraints that $\sum_{i=1}^{C} [\![x_i = 1]\!] =1$ and $\sum_{i=1}^{C} p_i = 1$, and the last equality comes from the observation that 
\begin{align*}
    \frac{1}{1-\sum_{i=1}^{C-1}p_i} = 1 + \frac{\sum_{i=1}^{C-1}p_i}{1-\sum_{i=1}^{C-1}p_i}.
\end{align*}
\end{proof}

While the formulation of categorical distribution in Proposition \ref{proposition:categorical} is not conventional, we note that its minimal form is reasonable as we only require sufficient statistics and probabilities of the first $C-1$ classes to specify the probability density. However, the specification of a categorically distributed variable can be further simplified due to the constraint that $\sum_{k=1}^C x_k = 1$. Concretely, all the randomness of a categorical random variable is fully captured in the first $C-1$ components. We now present the projected categorical distribution by exploiting a simplified specification of categorical distribution from Proposition \ref{proposition:categorical} and show that it reduces to the Bernoulli distribution when $C=2$.
\begin{definition}[Projected Categorical Distribution]
\label{definition:projected_categorical}
A random vector $\Tilde{\mathbf{x}}\in \{0,1\}^{C-1}$ with $\sum_{k=1}^C \Tilde{x}_k \leq 1$, has a projected categorical distribution with $C-1$ categories if it has probability density in minimal form given by
\begin{align*}
    f(\Tilde{\mathbf{x}}) = \exp\left(\sum_{k=1}^{C-1}[\![\Tilde{x}_k = 1]\!]\theta_k - \log\left(1 + \sum_{k=1}^{C-1} \exp(\theta_k)\right)\right)
\end{align*}
where $\pmb{\theta} \in \R^{C-1}$, $\phi(\Tilde{\mathbf{x}})= ([\![\Tilde{x}_1 = 1]\!], \dots , [\![\Tilde{x}_{C-1} = 1]\!])^{\top}$, $A(\pmb{\theta})=\log\left(1 + \sum_{k=1}^{C-1} \exp(\theta_k)\right)$ and $h(\mathbf{x}) = 1$.  We denote the distribution of $\Tilde{\mathbf{x}}$ as $\mathbf{x} \sim \text{projCategorical}(\Tilde{p})$ with probability parameters $\Tilde{p} = \nabla A(\pmb{\theta}) \in \Tilde{\Delta}^{C-1}$.
\end{definition}
\begin{corollary}
\label{corollary:projected_categorical_simplifies_to_bernoulli}
Suppose $\Tilde{x}\in \{0,1\}$ has a projected categorical distribution. Then $\Tilde{x}$ has a Bernoulli distribution.
\end{corollary}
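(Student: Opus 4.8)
The plan is to specialise Definition~\ref{definition:projected_categorical} to the case $C=2$ (equivalently $C-1=1$) and to recognise the resulting one-parameter exponential family as the Bernoulli family under the logistic reparameterisation. When $C-1=1$ the random vector $\Tilde{\mathbf{x}}$ collapses to a single coordinate $\Tilde{x}\in\{0,1\}$, the natural parameter $\pmb{\theta}$ becomes a scalar $\theta\in\R$, the sufficient statistic reduces to $\phi(\Tilde{x})=[\![\Tilde{x}=1]\!]=\Tilde{x}$, the base measure is $h\equiv 1$, and the log-partition function is $A(\theta)=\log(1+e^{\theta})$. Substituting these into the density of Definition~\ref{definition:projected_categorical} gives
\[
    f(\Tilde{x}) = \exp\bigl(\Tilde{x}\,\theta - \log(1+e^{\theta})\bigr) = \frac{e^{\Tilde{x}\theta}}{1+e^{\theta}}, \qquad \Tilde{x}\in\{0,1\}.
\]

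Next I would set $\pi := \nabla A(\theta) = \frac{e^{\theta}}{1+e^{\theta}}$, which is precisely the probability parameter $\Tilde{p}$ prescribed by Definition~\ref{definition:projected_categorical} and which lies in $(0,1)$; correspondingly $1-\pi = \frac{1}{1+e^{\theta}}$. Evaluating $f$ at its two possible arguments then yields $f(1)=\pi$ and $f(0)=1-\pi$, so that $f(\Tilde{x}) = \pi^{\Tilde{x}}(1-\pi)^{1-\Tilde{x}}$ for $\Tilde{x}\in\{0,1\}$. This is exactly the probability mass function of a Bernoulli random variable with success probability $\pi$, and hence $\Tilde{x}\sim\text{Bernoulli}(\pi)$ with $\pi=\Tilde{p}$; moreover $\Tilde{\Delta}^{C-1}$ collapses to $[0,1]$ when $C=2$, so the admissible range of the probability parameter is consistent between the two distributions.

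There is essentially no obstacle here: once the dimension is pinned to one, the statement is a direct bookkeeping exercise. The only points worth stating carefully are that the inequality constraint $\sum_k \Tilde{x}_k\le 1$ of Definition~\ref{definition:projected_categorical} is vacuous when $\Tilde{x}$ is a single $\{0,1\}$-valued coordinate, and that the logistic map $\theta\mapsto e^{\theta}/(1+e^{\theta})$ used here is exactly the sigmoid that $\text{softmax}^+$ reduces to at $C=2$, so that this corollary is consistent with the binary-squashing remarks made elsewhere in the paper.
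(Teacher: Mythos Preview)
Your proof is correct and follows essentially the same approach as the paper: both specialise the projected categorical density at $C=2$ and reparameterise via the logistic/sigmoid map to recover the Bernoulli pmf $\pi^{\Tilde{x}}(1-\pi)^{1-\Tilde{x}}$. The only cosmetic difference is that the paper substitutes $e^{\theta}=p/(1-p)$ directly, whereas you introduce $\pi=e^{\theta}/(1+e^{\theta})$ and evaluate at $\Tilde{x}\in\{0,1\}$; these are the same computation.
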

\begin{proof}
We can write the probability density of $\Tilde{x}$ as
\begin{align*}
    f(\Tilde{x}) &= \exp\left([\![\Tilde{x} = 1]\!]\theta_k - \log\bigl(1 + \exp(\theta_k)\bigr)\right)\\
    &= \frac{\left(\frac{p}{1-p}\right)^{[\![\Tilde{x} = 1]\!]}}{1 + \frac{p}{1-p}}\\
    &= p^{[\![\Tilde{x} = 1]\!]} (1-p)^{1-[\![\Tilde{x} = 1]\!]}.
\end{align*}
Hence, $\Tilde{x}$ has support on $\{0,1\}$ with density $f(\Tilde{x})$ which matches the density of a Bernoulli distribution. It follows that $\Tilde{x}$ has a Bernoulli distribution.
\end{proof}
We note that although the probability density of the categorical distribution in Proposition \ref{proposition:categorical} can simplify to the probability density of the Bernoulli distribution when $C=2$, the support of the resultant categorical distribution is $\{0,1\}^2$ which differs from the support of the Bernoulli distribution given by $\{0,1\}$. Corollary \ref{corollary:projected_categorical_simplifies_to_bernoulli} illustrates that the projected categorical distribution is the natural multiclass analogue of the Bernoulli distribution.

We conclude this section by noting that we can sample from the categorical distribution by transforming a sample from the projected categorical distribution, and vice versa.
\begin{corollary}
If $\Tilde{\mathbf{x}}\in \{0,1\}^{C-1}$ has a projected categorical distribution with parameters $\Tilde{p} \in \Tilde{\Delta}^{C-1}$, then
\begin{align*}
    \mathbf{x} &= \Pi^{-1}(\Tilde{\mathbf{x}})\\
    &=
    \begin{bmatrix}
    I_{C-1}\\
    - \mathbbm{1}_{C-1}^{\top}
    \end{bmatrix}
    \Tilde{\mathbf{x}} + 
    \begin{bmatrix}
    \mathbf{0}_{C-1}\\
    1
    \end{bmatrix}
\end{align*}
has a categorical distribution with parameters $p = \Pi^{-1}(\Tilde{p}) \in \Delta^{C-1}$, where $\mathbf{0}_{C-1} \in \R^{C-1}$ denotes a vector of zeroes. Similarly if $\mathbf{x}\in \{0,1\}^{C-1}$ has a categorical distribution with parameters $p \in \Delta^{C-1}$, then 
\begin{align*}
    \Tilde{\mathbf{x}} &= \Pi(\mathbf{x})\\
    &=
    \begin{bmatrix}
    I_{C-1} \quad \mathbf{0}_{C-1}
    \end{bmatrix}
    \mathbf{x}
\end{align*}
has a projected categorical distribution with parameters $\Tilde{p} = \Pi(p) \in \Tilde{\Delta}^{C-1}$.
\end{corollary}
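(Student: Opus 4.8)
The plan is to reduce both directions to an elementary change-of-variables computation for discrete random vectors, using that the categorical and projected categorical densities in Proposition~\ref{proposition:categorical} and Definition~\ref{definition:projected_categorical} are built from the \emph{same} natural parameter $\pmb{\theta}\in\R^{C-1}$ and log-partition $A$. First I would fix this $\pmb{\theta}$: by the two definitions the parameters of the two laws are $\Tilde{p}=\nabla A(\pmb{\theta})$ and $p=(\Pi^{-1}\circ\nabla A)(\pmb{\theta})$, so for a common $\pmb{\theta}$ one has $p=\Pi^{-1}(\Tilde{p})$, which is exactly the parameter correspondence claimed. (Equivalently, $\nabla A$ is the $\text{softmax}^+$ map of Theorem~\ref{theorem:legendretron_is_legendre}.)

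Next I would record that $\Pi^{-1}$ restricts to a bijection from the support $\Tilde{S}=\{\Tilde{\mathbf{x}}\in\{0,1\}^{C-1}:\sum_{k}\Tilde{x}_k\leq1\}$ of the projected categorical onto the support $S=\{\mathbf{x}\in\{0,1\}^{C}:\sum_{k}x_k=1\}$ of the categorical, with inverse $\Pi$. The only subtlety is checking that $\Pi^{-1}(\Tilde{\mathbf{x}})\in\{0,1\}^{C}$: its final coordinate $1-\sum_{k}\Tilde{x}_k$ equals $0$ or $1$ because at most one entry of $\Tilde{\mathbf{x}}$ can equal $1$ under $\sum_{k}\Tilde{x}_k\leq1$, and then the coordinates sum to $1$ by construction; conversely $\Pi$ maps $S$ into $\Tilde{S}$ since $\sum_{k=1}^{C-1}x_k=1-x_C\leq1$, and $\Pi\circ\Pi^{-1}=\mathrm{id}$, $\Pi^{-1}\circ\Pi=\mathrm{id}$ on these finite sets directly from the defining formulas. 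Explicitly, $\Pi^{-1}$ sends the standard basis vector $e_i\in\R^{C-1}$ to $e_i\in\R^{C}$ for $i<C$ and $\mathbf{0}_{C-1}$ to $e_C$, so it lists exactly the $C$ one-hot vectors of $\R^{C}$.

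Then I would transport the probability mass function across this bijection. For the forward claim, with $\mathbf{x}=\Pi^{-1}(\Tilde{\mathbf{x}})$ and any $\mathbf{y}\in S$ we have $\PP(\mathbf{x}=\mathbf{y})=\PP(\Tilde{\mathbf{x}}=\Pi(\mathbf{y}))$, and since $\Pi(\mathbf{y})=(y_1,\dots,y_{C-1})$ Definition~\ref{definition:projected_categorical} gives
\begin{align*}
\PP(\mathbf{x}=\mathbf{y})=\exp\!\left(\sum_{k=1}^{C-1}[\![y_k=1]\!]\theta_k-\log\!\left(1+\sum_{k=1}^{C-1}\exp(\theta_k)\right)\right),
\end{align*}
which is precisely the categorical density of Proposition~\ref{proposition:categorical} at $\mathbf{y}$ with natural parameter $\pmb{\theta}$; hence $\mathbf{x}\sim\text{Categorical}\big((\Pi^{-1}\circ\nabla A)(\pmb{\theta})\big)=\text{Categorical}(\Pi^{-1}(\Tilde{p}))$. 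The reverse claim is the mirror image: for $\mathbf{x}\sim\text{Categorical}(p)$ with natural parameter $\pmb{\theta}$ and $\Tilde{\mathbf{x}}=\Pi(\mathbf{x})$, one has $\PP(\Tilde{\mathbf{x}}=\Tilde{\mathbf{y}})=\PP(\mathbf{x}=\Pi^{-1}(\Tilde{\mathbf{y}}))$ for $\Tilde{\mathbf{y}}\in\Tilde{S}$, and the indicators $[\![x_k=1]\!]$ for $k<C$ agree with $[\![\Tilde{y}_k=1]\!]$, so the density collapses to the projected categorical density with the same $\pmb{\theta}$; since $\nabla A(\pmb{\theta})=\Pi\bigl(\Pi^{-1}(\nabla A(\pmb{\theta}))\bigr)=\Pi(p)$ by Proposition~\ref{proposition:categorical}, we conclude $\Tilde{\mathbf{x}}\sim\text{projCategorical}(\Pi(p))$.

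I do not expect a genuine obstacle: once the support bijection and the shared natural parameter are in place, the rest is bookkeeping. The only mildly delicate point is $\Tilde{p}$ (or $p$) on the boundary of the simplex, where no finite $\pmb{\theta}$ exists; there I would either argue by continuity in $\pmb{\theta}$, or dispense with the exponential-family form altogether and observe that on the boundary the two laws are \emph{defined} by their mass at the $C$ support points ($\Tilde{p}_i$ at $e_i$ and $1-\sum_i\Tilde{p}_i$ at $\mathbf{0}_{C-1}$; $p_i$ at $e_i$ and $p_C$ at $e_C$), whence the push-forward identity follows at once from $\Pi^{-1}(e_i)=e_i$, $\Pi^{-1}(\mathbf{0}_{C-1})=e_C$ and $p=\Pi^{-1}(\Tilde{p})$.
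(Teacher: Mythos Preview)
Your proof is correct and follows the same approach as the paper's own proof, which simply states that the result follows from the definitions of $\Pi$ and $\Pi^{-1}$ together with the constraints on the elements of $\Tilde{\mathbf{x}}$ and $\mathbf{x}$. Your version is considerably more detailed---explicitly verifying the support bijection, carrying out the pushforward of the mass function via the shared natural parameter, and handling the boundary case---but the underlying idea is identical.
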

\begin{proof}
This follows from the definitions of the projection map $\Pi$ and its inverse $\Pi^{-1}$, and the constraints on the elements of $\Tilde{\mathbf{x}}$ and $\mathbf{x}$.
\end{proof}

\subsection{Multinomial Logistic Regression as a Generalised Linear Model}
\label{section:multinomial_logistic_as_glm}
To facilitate our discussion in this section on generalised linear models, we first present a refined formulation of generalised linear models pioneered by \citet{nelder1972generalized}.

\begin{definition}[Generalised Linear Model]
\label{definition:glm}
Let $\mathbf{x} \in \R^p$ be a set of independent variables and $y \in \R$ be a dependent variable. A generalised linear model of $(\mathbf{x}, y)$ consists of the following assumptions:
\begin{itemize}
    \item the probability distribution of $y$, denoted $p_y$, belongs to the Exponential family with natural parameters $\pmb{\theta} \in \R^{C-1}$ and log-partition function $A(\pmb{\theta})$

    \item $\pmb{\theta} =\mathbf{W}^{\top} \mathbf{x} + \mathbf{b}$ where $\mathbf{W}\in \R^{(C-1)\times p}$ and $\mathbf{b}\in \R^{C-1}$;

    \item There exists a smooth and strictly monotone canonical link function $\Tilde{\psi}$ such that $\Tilde{\psi}^{-1}(\mathbf{W}^{\top} \mathbf{x} + \mathbf{b}) = \pmb{\mu}$ where $\pmb{\mu}$ is the mean parameter of $p_y$.
\end{itemize}
\end{definition}
\begin{lemma}
\label{lemma:log_partition_derivatives}
Suppose $(\mathbf{x}, y)$ follow a generalised linear model where $p_y$ has natural parameters $\pmb{\theta} \in \R^{C-1}$ and log-partition function $A(\pmb{\theta})$.
Then the natural parameters $\pmb{\theta}$ relate to the mean parameters $\pmb{\mu}$ and variance parameters $\pmb{\Sigma}$ of $p_y$ through the following equalities
\begin{align*}
    \pmb{\mu} &= \nabla A(\pmb{\theta}),\\
    \pmb{\Sigma} &= \nabla^2 A(\pmb{\theta}).
\end{align*}
\end{lemma}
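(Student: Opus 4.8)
\textbf{Proof proposal for Lemma \ref{lemma:log_partition_derivatives}.}

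The plan is to derive both identities directly from the defining property of the log-partition function, namely that it normalises the exponential family density. Writing the density as $f(y) = h(y)\exp\bigl(\pmb{\theta}^{\top}\phi(y) - A(\pmb{\theta})\bigr)$ as in Definition \ref{definition:exponential_family}, the normalisation constraint reads $\int h(y)\exp\bigl(\pmb{\theta}^{\top}\phi(y)\bigr)\,dy = \exp\bigl(A(\pmb{\theta})\bigr)$ (with the integral replaced by a sum over $\{0,1\}^{C-1}$ in the projected-categorical case). First I would differentiate both sides of this identity with respect to $\pmb{\theta}$, interchanging differentiation and integration (justified by the smoothness and regularity of exponential families in minimal form), to obtain $\int \phi(y)\, f(y)\,dy = \nabla A(\pmb{\theta})$, i.e.\ $\E[\phi(y)] = \nabla A(\pmb{\theta})$. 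Since the mean parameter $\pmb{\mu}$ is by definition $\E[\phi(y)]$, this gives the first identity $\pmb{\mu} = \nabla A(\pmb{\theta})$.

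Next I would differentiate a second time. Differentiating the identity $\nabla A(\pmb{\theta}) = \int \phi(y)\, f(y)\,dy$ with respect to $\pmb{\theta}$, and using $\nabla_{\pmb{\theta}} f(y) = \bigl(\phi(y) - \nabla A(\pmb{\theta})\bigr) f(y)$, yields
\begin{align*}
    \nabla^2 A(\pmb{\theta}) = \int \phi(y)\bigl(\phi(y) - \nabla A(\pmb{\theta})\bigr)^{\top} f(y)\,dy = \E\bigl[\phi(y)\phi(y)^{\top}\bigr] - \pmb{\mu}\pmb{\mu}^{\top} = \mathrm{Cov}\bigl(\phi(y)\bigr).
\end{align*}
Identifying $\mathrm{Cov}\bigl(\phi(y)\bigr)$ with the variance parameter $\pmb{\Sigma}$ of $p_y$ gives the second identity $\pmb{\Sigma} = \nabla^2 A(\pmb{\theta})$.

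The only genuine subtlety is the interchange of differentiation and integration; this is a standard fact for exponential families written in minimal form (the natural parameter space is open and the moment generating function is analytic there), so I would simply invoke that standard regularity rather than prove it from scratch. In the projected-categorical and categorical settings of Proposition \ref{proposition:categorical} and Definition \ref{definition:projected_categorical} the ``integral'' is a finite sum, so the interchange is immediate and no analytic subtlety arises at all. Everything else is routine calculus with the exponential-family density.
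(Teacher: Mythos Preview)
Your argument is the standard and correct derivation of these identities. The paper itself does not supply a proof of Lemma \ref{lemma:log_partition_derivatives}; it is stated without proof as a well-known fact about exponential families and then used downstream (e.g.\ in Theorem \ref{theorem:categorical_canonical_link}). So there is nothing in the paper to compare against beyond noting that your differentiation-under-the-integral approach is exactly the textbook route the authors are implicitly relying on, and your remark that in the categorical/projected-categorical cases the integral is a finite sum (so the interchange is trivial) is apt for the paper's setting.
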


Recall that \emph{univariate} generalised linear models assume the following model for the natural parameters of a probability distribution that belongs to the Exponential family as
\begin{align*}
    \theta &= \mathbf{w}^{\top} \mathbf{x} + b\\
    &= \Tilde{\psi}(\mu)
\end{align*}
where $\theta \in \R$ is the natural parameter, $\mu \in \R$ is the mean parameter, $\Tilde{\psi}$ is the canonical link, $\mathbf{w} \in \R^p$ is the vector of coefficients and $b\in \R$ is the intercept. The usage of generalised linear models to model \emph{scalar-valued} responses is well-known. Examples include Poisson regression for count data and logistic regression for binary outcomes. As responses are univariate, the requirements of the link function $\psi$ reduce to being invertible and strictly increasing. The latter property serves as the foundation of interpretability of the effects of covariates on the response by noting the sign of coefficients. The above formulation suffices for the modelling of responses as they are often univariate and can be accordingly described by an appropriate univariate probability distribution.

Extending generalised linear models to multiclass problems is not straightforward as the response is now multivariate as each label $y_n \in \{1,\dots,C\}$ is often represented as a standard basis vector $e_{y_n} \in \R^{C}$. Multiclass probability estimates $p \in \Delta^{C-1} \subset \R^{C}$ are then formed to approximate $e_{y_n}$. To pose multinomial logistic regression as a generalised linear model, we require a canonical link. That is, we must define an invertible multivariate function with a multivariate image, and equipped with a property analogous to the \emph{strictly increasing} property for the univariate case. The latter refers to an order-preserving property of the link which can be done in $\R$ but not in $\R^{C}$ for general $C>1$. The theory of monotone operators overcomes this difficulty. Specifically, strict monotonicity from Definition \ref{definition:monotone} subsumes the idea of strictly increasing maps and is equipped with a more general definition. A strictly monotone map is also invertible by Corollary \ref{corollary:strictly_monotone_invertible}. This makes strict monotonicity more readily applicable to multivariate functions with a multivariate image and justifying our refined definition of generalised linear models in Definition \ref{definition:glm}.

We note that the conventional formulation of multinomial logistic regression utilises the softmax function as the \emph{inverse} link that maps to probabilities. The softmax function is defined as 
\begin{align*}
    u:\R^{C} \to \Delta^{C-1},\quad
    u(\mathbf{x}) = \left(\frac{\exp(x_i)}{\sum_{k=1}^{C} \exp(x_k)}\right)_{1\leq i \leq C}.
\end{align*}
However, the softmax function does not correspond to a valid canonical link function as it is not invertible. To observe this, we note that $\mathbf{x} = (x_1,\dots,x_C)$ and $\mathbf{z} = (x_1 +z,\dots,x_C+z)$ would yield the same set of probabilities $p=\left(\frac{\exp(x_i)}{\sum_{k=1}^{C} \exp(x_k)}\right)_{1\leq i \leq C}$. In other words, the pre-image of $p$ is not unique. This implies that the conventional formulation of multinomial logistic regression is not a generalised linear model as the last assumption of Definition \ref{definition:glm} is not satisfied. 

In the remainder of this section, we seek to formalise multinomial logistic regression as a generalised linear model by stating a valid canonical link function. To determine a valid canonical link for multinomial logistic regression, we now present a function that is equipped with invertibility, and later show it is the \emph{inverse} canonical link for the projected categorical distribution. 

\begin{corollary}
\label{corollary:softmax+_inverse}
Let $u$ be the $\text{softmax}^+$ function defined as
\begin{align*}
    u:\R^{C-1} \to \Tilde{\Delta}^{C-1},\quad
    u(\mathbf{x}) = \left(\frac{\exp(x_i)}{1+ \sum_{k=1}^{C-1} \exp(x_k)}\right)_{1\leq i \leq C-1},
\end{align*}
and $g$ be defined as 
\begin{align*}
    g:\Tilde{\Delta}^{C-1} \to \R^{C-1},\quad
    g(\Tilde{p}) = \left(\log\left(\frac{\Tilde{p}_i}{1-\sum_{k=1}^{C-1}\Tilde{p}_k}\right)\right)_{1\leq i \leq C-1}.
\end{align*}
Then $g$ is the inverse function of $u$.
\end{corollary}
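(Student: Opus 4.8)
The plan is to verify directly that the two compositions $g \circ u$ and $u \circ g$ are both the identity, after first pinning down the set on which $g$ is meaningful. First I would note that $u$ is well-defined as a map into $\Tilde{\Delta}^{C-1}$: each coordinate $u_i(\mathbf{x}) = \exp(x_i)/(1 + \sum_{k=1}^{C-1} \exp(x_k))$ is strictly positive, and $\sum_{i=1}^{C-1} u_i(\mathbf{x}) = (\sum_{k=1}^{C-1} \exp(x_k))/(1 + \sum_{k=1}^{C-1} \exp(x_k)) < 1$, so the image of $u$ in fact lies in the relative interior of $\Tilde{\Delta}^{C-1}$, i.e. the set where $\Tilde{p}_i > 0$ for all $i$ and $1 - \sum_{k=1}^{C-1}\Tilde{p}_k > 0$. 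This is exactly the set on which each $g_i(\Tilde{p}) = \log\bigl(\Tilde{p}_i/(1 - \sum_{k=1}^{C-1}\Tilde{p}_k)\bigr)$ is finite, so $g$ is well-defined there and the composite $g \circ u$ makes sense on all of $\R^{C-1}$.

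Next I would compute $g \circ u$. The key simplification is the denominator identity $1 - \sum_{k=1}^{C-1} u_k(\mathbf{x}) = 1/(1 + \sum_{k=1}^{C-1} \exp(x_k))$, which is immediate from the displayed sum above. Substituting into the $i$-th coordinate of $g$ gives $u_i(\mathbf{x})/(1 - \sum_{k=1}^{C-1} u_k(\mathbf{x})) = \exp(x_i)$, and taking logarithms yields $(g \circ u)(\mathbf{x}) = \mathbf{x}$ for every $\mathbf{x} \in \R^{C-1}$. For the reverse composition I would use the dual identity $1 + \sum_{k=1}^{C-1} \exp(g_k(\Tilde{p})) = 1 + (\sum_{k=1}^{C-1}\Tilde{p}_k)/(1 - \sum_{k=1}^{C-1}\Tilde{p}_k) = 1/(1 - \sum_{k=1}^{C-1}\Tilde{p}_k)$, already recorded in the proof of Proposition \ref{proposition:categorical}. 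Then the $i$-th coordinate of $u(g(\Tilde{p}))$ is $\exp(g_i(\Tilde{p}))/\bigl(1 + \sum_{k=1}^{C-1}\exp(g_k(\Tilde{p}))\bigr) = \Tilde{p}_i$, so $(u \circ g)(\Tilde{p}) = \Tilde{p}$ on the relative interior of $\Tilde{\Delta}^{C-1}$. Having both compositions equal to the identity identifies $g$ as the two-sided inverse of $u$.

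There is no substantive obstacle here: every step is a one-line algebraic manipulation, and the softplus/softmax normalisation does all the work. The only point that warrants care is the bookkeeping with domains — as written, $g$ takes finite values only on the relative interior of $\Tilde{\Delta}^{C-1}$, so the clean reading of the statement is that $u$ is a bijection from $\R^{C-1}$ onto that relative interior with inverse $g$.
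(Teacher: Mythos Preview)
Your proposal is correct and follows essentially the same approach as the paper: both verify directly that $g\circ u$ and $u\circ g$ are the identity via the same normalisation identities. Your extra bookkeeping about domains (that $u$ lands in the relative interior of $\Tilde{\Delta}^{C-1}$, where $g$ is finite) is a welcome clarification the paper glosses over, but otherwise the arguments coincide.
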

\begin{proof}
We must show that $g \circ u$ and $u\circ g$ are both identity functions. We have
\begin{align*}
    (g \circ u)(\mathbf{x}) &= \left(\log\left(\frac{\frac{\exp(x_i)}{1+ \sum_{k=1}^{C-1} \exp(x_k)}}{1 - \sum_{j=1}^{C-1}\frac{\exp(x_j)}{1+ \sum_{k=1}^{C-1} \exp(x_k)}}\right)\right)_{1\leq i \leq C-1}\\
    &= \left(\log\bigl(\exp(x_i)\bigr)\right)_{1\leq i \leq C-1}\\
    &= (x_i)_{1\leq i \leq C-1},
\end{align*}
and 
\begin{align*}
    (u \circ g)(\Tilde{p}) &= \left(\frac{\frac{\Tilde{p}_i}{1-\sum_{i=1}^{C-1}\Tilde{p}_i}}{1+ \sum_{k=1}^{C-1} \frac{\Tilde{p}_k}{1-\sum_{i=1}^{C-1}\Tilde{p}_i}}\right)_{1\leq i \leq C-1}\\
    &= \left(\frac{\Tilde{p}_i}{1-\sum_{k=1}^{C-1}\Tilde{p}_i+\sum_{k=1}^{C-1}\Tilde{p}_i}\right)_{1\leq i \leq C-1}\\
    &= (\Tilde{p}_i)_{1\leq i \leq C-1}.
\end{align*}
Thus, $g$ is the inverse of $u$.
\end{proof}

\begin{corollary}
\label{corollary:inverse_softmax+_strictly_monotone}
Let $g$ be the inverse of the $\text{softmax}^+$ function defined as
\begin{align*}
    g:\Tilde{\Delta}^{C-1} \to \R^{C-1},\quad
    g(\Tilde{p}) = \left(\log\left(\frac{\Tilde{p}_i}{1-\sum_{k=1}^{C-1}\Tilde{p}_k}\right)\right)_{1\leq i \leq C-1}.
\end{align*}
Then $g$ is smooth and strictly monotone.
\end{corollary}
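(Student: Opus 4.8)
}
The plan is to handle the two claims separately. Smoothness of $g$ is essentially by inspection: the natural domain of $g$ is $\mathrm{ri}(\Tilde{\Delta}^{C-1})$, which is exactly the image of $\text{softmax}^+$ (each coordinate of $u(\mathbf{x})$ lies in $(0,1)$ and the coordinates sum to $\sum_k \exp(x_k)/(1+\sum_k \exp(x_k)) < 1$), and on this open set each coordinate map $\Tilde{p}\mapsto \log\bigl(\Tilde{p}_i/(1-\sum_k\Tilde{p}_k)\bigr)$ is a composition of the $C^{\infty}$ maps $\log$, division, and affine functions whose denominator never vanishes; hence $g$ is $C^{\infty}$ there. (On the boundary of $\Tilde{\Delta}^{C-1}$ the expression is not finite, so the statement is read with domain $\mathrm{ri}(\Tilde{\Delta}^{C-1})$.)

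For strict monotonicity I would route through Legendre conjugacy rather than quote Theorem \ref{theorem:monotone_iff_jacobian_psd} verbatim (it is phrased for maps on all of $\R^{C-1}$). By Theorem \ref{theorem:legendretron_is_legendre}, $f=\text{LogSumExp}^+$ is a twice-differentiable strictly convex Legendre function with $\nabla f = u = \text{softmax}^+$ and everywhere positive-definite Hessian $J_u$. By the properties of Legendre functions recalled in Section \ref{section:designing_multiclass_links}, $f^{*}$ is again Legendre and twice differentiable, with
\begin{align*}
    \nabla f^{*} = (\nabla f)^{-1} = u^{-1} = g,
\end{align*}
where the last equality is Corollary \ref{corollary:softmax+_inverse}, and with Hessian $\nabla^2 f^{*}\bigl(u(\mathbf{x})\bigr) = \bigl(J_u(\mathbf{x})\bigr)^{-1}$, which is positive definite since $J_u(\mathbf{x})$ is. Thus $g$ is the gradient of a twice-differentiable strictly convex function with positive-definite Hessian on all of $\mathrm{ri}(\Tilde{\Delta}^{C-1})$.

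It then remains to deduce strict monotonicity from this. Since $\mathrm{ri}(\Tilde{\Delta}^{C-1})$ is convex, for $\Tilde{p}\neq \Tilde{q}$ one writes $g(\Tilde{p})-g(\Tilde{q}) = \int_0^1 \nabla^2 f^{*}\bigl(\Tilde{q}+t(\Tilde{p}-\Tilde{q})\bigr)(\Tilde{p}-\Tilde{q})\,dt$, so that $\langle g(\Tilde{p})-g(\Tilde{q}),\,\Tilde{p}-\Tilde{q}\rangle$ is an integral of strictly positive quadratic forms and hence strictly positive; equivalently, one adds the strict subgradient inequalities for the strictly convex $f^{*}$ at $\Tilde{p}$ and at $\Tilde{q}$. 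This is exactly strict monotonicity in the sense of Definition \ref{definition:monotone}. As a cross-check one can compute the Jacobian directly: with $\Tilde{p}_C = 1-\sum_k\Tilde{p}_k$ one gets $J_g(\Tilde{p}) = \mathrm{diag}\bigl(\Tilde{p}_1^{-1},\dots,\Tilde{p}_{C-1}^{-1}\bigr) + \Tilde{p}_C^{-1}\,\mathbbm{1}_{C-1}\mathbbm{1}_{C-1}^{\top}$, a positive-definite diagonal matrix plus a positive-semidefinite rank-one term, hence positive definite, consistent with the Legendre argument. The only real subtlety — and the reason I prefer the conjugacy route — is bookkeeping about domains: $g$ genuinely lives only on $\mathrm{ri}(\Tilde{\Delta}^{C-1})$, so the $\R^{C-1}$-versions of Definition \ref{definition:monotone} and Theorem \ref{theorem:monotone_iff_jacobian_psd} cannot be applied literally, whereas the strict-convexity characterization of gradient maps is valid over any convex set and $f^{*}$ is handed to us with the required properties by Theorem \ref{theorem:legendretron_is_legendre} and the Legendre machinery.
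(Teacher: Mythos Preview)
Your proof is correct and takes a genuinely different route from the paper's. The paper argues by direct computation: it writes the Jacobian as $J_g(\Tilde{p}) = D + \Tilde{p}_C^{-1}\mathbbm{1}_{C-1}\mathbbm{1}_{C-1}^{\top}$ (exactly your cross-check expression), shows it is positive semi-definite via $\mathbf{z}^{\top}J_g(\Tilde{p})\mathbf{z} = \|D^{1/2}\mathbf{z}\|_2^2 + \Tilde{p}_C^{-1}\|\mathbbm{1}_{C-1}^{\top}\mathbf{z}\|_2^2\geq 0$, and then invokes the matrix determinant lemma to get $|J_g(\Tilde{p})|>0$, whence positive definiteness; strict monotonicity is then read off Theorem \ref{theorem:monotone_iff_jacobian_psd}. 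Your primary argument instead identifies $g$ as $\nabla f^{*}$ for $f=\text{LogSumExp}^+$, pulls positive-definiteness of the Hessian from the inverse function theorem, and finishes with a line-integral/strict-subgradient argument on the convex set $\mathrm{ri}(\Tilde{\Delta}^{C-1})$. This is more conceptual and dovetails nicely with the Legendre machinery the paper uses elsewhere; it also sidesteps the determinant computation. Incidentally, your cross-check observation (positive-definite diagonal plus positive-semidefinite rank-one is positive definite) is cleaner than the paper's PSD-plus-determinant step. Your remark about domains is well taken: Theorem \ref{theorem:monotone_iff_jacobian_psd} is stated for maps on all of $\R^{C-1}$, so the paper's invocation is, strictly speaking, informal; your integral argument over the convex domain $\mathrm{ri}(\Tilde{\Delta}^{C-1})$ supplies the more careful justification.
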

\begin{proof}
It is clear that $g$ is smooth so we are left to prove it is strictly monotone.

Fix $\Tilde{p} \in \Tilde{\Delta}^{C-1}$. For ease of notation, we denote $M$ as $J_g(\Tilde{p})$ where $M_{ij}$ refers to the entry within the $i$-th row and $j$-th column of $J_g(\Tilde{p})$. Consider any row $i\in \{1,\dots,C-1\}$. We have
\begin{align*}
    M_{ii} &= \frac{1}{\Tilde{p}_i} + \frac{1}{1 - \sum_{k=1}^{C-1}\Tilde{p}_k},\\
    M_{ij} &= \frac{1}{1 - \sum_{k=1}^{C-1}\Tilde{p}_k}.
\end{align*}
That is, $M=D +\frac{1}{1 - \sum_{k=1}^{C-1}\Tilde{p}_k}\mathbbm{1}_{C-1} \mathbbm{1}_{C-1}^{\top}$ where $D \in \R^{(C-1) \times (C-1)}$ is a diagonal matrix with entries $\frac{1}{\Tilde{p}_1},\dots,\frac{1}{\Tilde{p}_{C-1}}$. For any $\mathbf{z} \in \R^{C-1}$, we note that 
\begin{align*}
    \mathbf{z}^{\top} M \mathbf{z} &= \mathbf{z}^{\top}D\mathbf{z} +\frac{1}{1 - \sum_{k=1}^{C-1}\Tilde{p}_k}\mathbf{z}^{\top}\mathbbm{1}_{C-1} \mathbbm{1}_{C-1}^{\top}\mathbf{z}\\
    &= \mathbf{z}^{\top}D^{\frac{1}{2}}D^{\frac{1}{2}}\mathbf{z} +\frac{1}{1 - \sum_{k=1}^{C-1}\Tilde{p}_k}\mathbf{z}^{\top}\mathbbm{1}_{C-1} \mathbbm{1}_{C-1}^{\top}\mathbf{z}\\
    &= \|D^{\frac{1}{2}}\mathbf{z}\|_2^2 + \frac{1}{1 - \sum_{k=1}^{C-1}\Tilde{p}_k}\|\mathbbm{1}_{C-1}^{\top}\mathbf{z}\|_2^2\\
    &\geq 0
\end{align*}
where $\|\cdot\|_2$ is the Euclidean norm. Hence, $M$ is positive semi-definite so its eigenvalues are non-negative. Using the matrix determinant lemma, we have
\begin{align*}
    |M| &= \left(1 + \frac{1}{1 - \sum_{k=1}^{C-1}\Tilde{p}_k}\mathbbm{1}_{C-1}^{\top} D^{-1}\mathbbm{1}_{C-1} \right) |D|\\
    &=\left(1 + \frac{\sum_{k=1}^{C-1}\Tilde{p}_k}{1 - \sum_{k=1}^{C-1}\Tilde{p}_k} \right) \frac{1}{\prod_{k=1}^{C-1}\Tilde{p}_k}\\
    &> 0.
\end{align*}
This implies that all eigenvalues of $M$ are positive. Thus, $M$ is positive definite so it follows that $g$ is strictly monotone by Theorem \ref{theorem:monotone_iff_jacobian_psd}.
\end{proof}

Corollary \ref{corollary:inverse_softmax+_strictly_monotone} deduces that the inverse of the $\text{softmax}^+$ function meets the requirements of smoothness and strict monotonicity that we seek in a canonical link.

 With Proposition \ref{definition:projected_categorical} in hand, we can now deduce that the inverse of the $\text{softmax}^+$ function is the canonical link for the projected categorical distribution.
\begin{theorem}
\label{theorem:categorical_canonical_link}
Let $u$ be the $\text{softmax}^+$ function defined as
\begin{align*}
    u:\R^{C-1} \to \Tilde{\Delta}^{C-1},\quad
    u(\mathbf{x}) = \left(\frac{\exp(x_i)}{1+ \sum_{k=1}^{C-1} \exp(x_k)}\right)_{1\leq i \leq C-1},
\end{align*}
with inverse 
\begin{align*}
    g:\Tilde{\Delta}^{C-1} \to \R^{C-1},\quad
    g(\Tilde{p}) = \left(\log\left(\frac{\Tilde{p}_i}{1-\sum_{k=1}^{C-1}\Tilde{p}_k}\right)\right)_{1\leq i \leq C-1}.
\end{align*}
Then $g$ and $u$ are the respective canonical link and inverse canonical link corresponding to the projected categorical distribution.
\end{theorem}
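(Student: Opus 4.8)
The plan is to read the result off from the exponential-family structure of the projected categorical distribution together with the GLM framework of Definition~\ref{definition:glm}. By Definition~\ref{definition:projected_categorical}, $\text{projCategorical}(\Tilde p)$ is a minimal exponential family with natural parameter $\boldsymbol{\theta}\in\R^{C-1}$ and log-partition function $A(\boldsymbol{\theta})=\log\bigl(1+\sum_{k=1}^{C-1}\exp(\theta_k)\bigr)$, and its probability parameter satisfies $\Tilde p=\nabla A(\boldsymbol{\theta})$. By Lemma~\ref{lemma:log_partition_derivatives} this is exactly the mean parameter $\boldsymbol{\mu}$, and by Definition~\ref{definition:glm} the canonical link $\Tilde\psi$ of the corresponding GLM is characterised by $\Tilde\psi^{-1}=\nabla A$, i.e.\ it is the map returning the mean parameter to the natural parameter.

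First I would compute $\nabla A$ coordinatewise: $\partial A/\partial\theta_i = \exp(\theta_i)/\bigl(1+\sum_{k=1}^{C-1}\exp(\theta_k)\bigr)$, so $\nabla A(\boldsymbol{\theta})=u(\boldsymbol{\theta})$ with $u$ the $\text{softmax}^+$ function --- this is precisely the gradient identity recorded in Theorem~\ref{theorem:legendretron_is_legendre}. Hence the inverse canonical link is $\Tilde\psi^{-1}=\nabla A=u$. Inverting, the canonical link is $\Tilde\psi=u^{-1}$, and Corollary~\ref{corollary:softmax+_inverse} identifies $u^{-1}=g$; therefore $\Tilde\psi=g$, as claimed.

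To finish, I would check that $g$ really qualifies as a canonical link in the precise sense of Definition~\ref{definition:glm}, namely that it is smooth and strictly monotone --- this is exactly Corollary~\ref{corollary:inverse_softmax+_strictly_monotone}. For completeness I would also note that $u=\nabla A$ is continuous and invertible because $A=\text{LogSumExp}^+$ is strictly convex with everywhere positive definite Hessian (Theorem~\ref{theorem:legendretron_is_legendre}), so the pair $(g,u)$ is a bona fide link/inverse-link pair.

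I do not expect a genuine obstacle: the statement is essentially a repackaging of Theorem~\ref{theorem:legendretron_is_legendre} together with Corollaries~\ref{corollary:softmax+_inverse} and~\ref{corollary:inverse_softmax+_strictly_monotone}. The only point needing a little care is reconciling the two notions of canonical link in play --- the loss-theoretic one of Definition~\ref{definition:canonical_link}, $\Tilde\psi=-\nabla\Tilde{\underline L}$, and the GLM one of Definition~\ref{definition:glm}, $\Tilde\psi^{-1}=\nabla A$. These agree here because the proper loss attached to the projected categorical distribution is the log loss, whose (projected) negative conditional Bayes risk $-\Tilde{\underline L}$ is the negative Shannon entropy and has Legendre--Fenchel conjugate $(-\Tilde{\underline L})^{*}=A$, so $\nabla(-\Tilde{\underline L})^{*}=\nabla A=u=\Tilde\psi^{-1}$ as well.
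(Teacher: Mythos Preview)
Your proposal is correct and follows essentially the same argument as the paper: identify $\Tilde\psi^{-1}=\nabla A$ via Definition~\ref{definition:glm} and Lemma~\ref{lemma:log_partition_derivatives}, observe that $A=\text{LogSumExp}^+$ so $\nabla A=u=\text{softmax}^+$ by Theorem~\ref{theorem:legendretron_is_legendre} and Definition~\ref{definition:projected_categorical}, and conclude $\Tilde\psi=u^{-1}=g$. Your write-up is somewhat more thorough than the paper's---you explicitly invoke Corollaries~\ref{corollary:softmax+_inverse} and~\ref{corollary:inverse_softmax+_strictly_monotone} and add the reconciliation with Definition~\ref{definition:canonical_link}---but the core logic is identical.
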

\begin{proof}
We first note that $\Tilde{\psi}^{-1}(\pmb{\theta}) = \pmb{\mu} = \nabla A(\pmb{\theta})$ from Definition \ref{definition:glm} and Lemma \ref{lemma:log_partition_derivatives}. From properties of Theorem \ref{theorem:legendretron_is_legendre} and Definition \ref{definition:projected_categorical}, we note that $A(\pmb{\theta})$ is $\text{LogSumExp}^+$, so it follows that $u$, the $\text{softmax}^+$ function, is the inverse canonical link corresponding to the projected categorical distribution. Hence, $g$ is the canonical link corresponding to the projected categorical distribution.
\end{proof}
\begin{corollary}
Let $\Tilde{\mathbf{x}}$ be a projected categorical random variable with $C-1$ categories with natural parameters $\pmb{\theta}$ and log-partition function $A(\pmb{\theta})$. Then its covariance parameters $\Tilde{\pmb{\Sigma}}$ relate to the mean parameters $\Tilde{\pmb{\mu}}$ through the following equality
\begin{align*}
    \Tilde{\pmb{\Sigma}} &= \nabla^2 A(\pmb{\theta}) \\
    &= D_{\Tilde{\pmb{\mu}}} - \Tilde{\pmb{\mu}}\Tilde{\pmb{\mu}}^{\top}.
\end{align*}
where $D_{\Tilde{\pmb{\mu}}}$ denotes a $(C-1) \times (C-1)$ diagonal matrix with entries given by $\Tilde{\pmb{\mu}}$.
\end{corollary}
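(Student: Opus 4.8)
The plan is to reduce the statement to a direct computation of the Hessian of the log-partition function, since everything structural has already been set up. First I would invoke Lemma \ref{lemma:log_partition_derivatives}, which for any exponential-family (hence GLM) member gives $\Tilde{\pmb{\mu}} = \nabla A(\pmb{\theta})$ and $\Tilde{\pmb{\Sigma}} = \nabla^2 A(\pmb{\theta})$; this immediately yields the first equality $\Tilde{\pmb{\Sigma}} = \nabla^2 A(\pmb{\theta})$ and reduces the claim to establishing $\nabla^2 A(\pmb{\theta}) = D_{\Tilde{\pmb{\mu}}} - \Tilde{\pmb{\mu}}\Tilde{\pmb{\mu}}^{\top}$. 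By Definition \ref{definition:projected_categorical} the log-partition function is $A(\pmb{\theta}) = \log\bigl(1 + \sum_{k=1}^{C-1}\exp(\theta_k)\bigr) = \text{LogSumExp}^+(\pmb{\theta})$, so by Theorem \ref{theorem:legendretron_is_legendre} its gradient is $\nabla A = u = \text{softmax}^+$, i.e. $\Tilde{\mu}_i = \exp(\theta_i)/\bigl(1 + \sum_{k=1}^{C-1}\exp(\theta_k)\bigr)$.

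The remaining step is to differentiate $u$ componentwise. Writing $Z = 1 + \sum_{k=1}^{C-1}\exp(\theta_k)$ so that $\partial Z/\partial \theta_j = \exp(\theta_j)$, the quotient rule gives $\partial \Tilde{\mu}_i/\partial \theta_j = \delta_{ij}\,\exp(\theta_i)/Z - \exp(\theta_i)\exp(\theta_j)/Z^2 = \delta_{ij}\,\Tilde{\mu}_i - \Tilde{\mu}_i\Tilde{\mu}_j$. Collecting these entries, the Hessian $\nabla^2 A(\pmb{\theta}) = J_u(\pmb{\theta})$ splits into a diagonal part with entries $\Tilde{\mu}_i$ and a rank-one part $-\Tilde{\pmb{\mu}}\Tilde{\pmb{\mu}}^{\top}$, i.e. exactly $D_{\Tilde{\pmb{\mu}}} - \Tilde{\pmb{\mu}}\Tilde{\pmb{\mu}}^{\top}$, which completes the proof.

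No step here is a genuine obstacle; the only care required is the bookkeeping in the quotient rule and correctly assembling the diagonal and rank-one pieces into matrix form. As a concluding remark I would note that this recovers the familiar multinomial covariance structure, and that its positive definiteness — already recorded in Theorem \ref{theorem:legendretron_is_legendre} as positive definiteness of $J_u$ — is consistent with the strict convexity of $\text{LogSumExp}^+$.
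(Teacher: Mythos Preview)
Your proof is correct, but it takes a somewhat different route from the paper's. You invoke Lemma~\ref{lemma:log_partition_derivatives} as a black box to obtain $\Tilde{\pmb{\Sigma}} = \nabla^2 A(\pmb{\theta})$ immediately, and then compute $\nabla^2 A$ by differentiating $u=\text{softmax}^+$ explicitly. The paper instead verifies both sides of the identity independently: it first records $\nabla^2 A(\pmb{\theta}) = J_u(\pmb{\theta}) = D_{\Tilde{\pmb{\mu}}} - \Tilde{\pmb{\mu}}\Tilde{\pmb{\mu}}^{\top}$ (using the Hessian entries already computed in the proof of Theorem~\ref{theorem:legendretron_is_legendre}), and then separately computes the covariance entrywise from the distribution itself, using $\Tilde{\Sigma}_{ij} = \mathbb{E}[\Tilde{x}_i\Tilde{x}_j] - \Tilde{\mu}_i\Tilde{\mu}_j$ together with the combinatorial facts $\Tilde{x}_i \in \{0,1\}$ and $\sum_k \Tilde{x}_k \leq 1$ (which force $\Tilde{x}_i\Tilde{x}_j = 0$ for $i\neq j$ and $\Tilde{x}_i^2 = \Tilde{x}_i$). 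Matching the two yields the corollary. Your approach is more economical and leans on the general exponential-family machinery; the paper's approach is more self-contained and effectively re-derives the special case of Lemma~\ref{lemma:log_partition_derivatives} by hand, making transparent how the covariance structure arises from the one-hot constraint.
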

\begin{proof}
Let $\Tilde{p} = u(\pmb{\theta})$. Then $\Tilde{p} = \Tilde{\pmb{\mu}}$ and we can express the Hessian from Theorem \ref{theorem:legendretron_is_legendre} as 
\begin{align*}
    J_u(\pmb{\theta}) &= \nabla^2 A(\pmb{\theta})\\
    &= D_{\Tilde{p}} - \Tilde{p}\Tilde{p}^{\top}\\
    &=D_{\Tilde{\pmb{\mu}}} - \Tilde{\pmb{\mu}}\Tilde{\pmb{\mu}}^{\top}
\end{align*}
where $D_{\Tilde{p}}$ and $D_{\Tilde{\pmb{\mu}}}$ denote $(C-1) \times (C-1)$ diagonal matrices with entries given by $\Tilde{p}$ and $\Tilde{\pmb{\mu}}$ respectively. Elements of the covariance matrix $\Tilde{\pmb{\Sigma}}$ are given by
\begin{align*}
    \Tilde{\Sigma}_{ij} &= \mathbb{E}[(\Tilde{x}_i - \Tilde{\mu}_i)(\Tilde{x}_j - \Tilde{\mu}_j)]\\
    &=\mathbb{E}[\Tilde{x}_i\Tilde{x}_j] - \Tilde{\mu}_i\Tilde{\mu}_j\\
    &=
    \begin{cases}
        \Tilde{\mu}_i - \Tilde{\mu}_i\Tilde{\mu}_i  & \text{ when $i=j$} \\
        - \Tilde{\mu}_i\Tilde{\mu}_j & \text{ otherwise}
    \end{cases}
\end{align*}
where the last equality comes from the constraint $\sum_{k=1}^{C-1}\Tilde{x}_k \leq 1$ and the fact that $\Tilde{\mathbf{x}} \in \{0,1\}^{C-1}$. We can rewrite the above as $\Tilde{\pmb{\Sigma}} = D_{\Tilde{\pmb{\mu}}} - \Tilde{\pmb{\mu}}\Tilde{\pmb{\mu}}^{\top}$. Thus, we can deduce 
\begin{align*}
    \Tilde{\pmb{\Sigma}} &= \nabla^2 A(\pmb{\theta}) \\
    &= D_{\Tilde{\pmb{\mu}}} - \Tilde{\pmb{\mu}}\Tilde{\pmb{\mu}}^{\top}.
\end{align*}
\end{proof}

 With the canonical link of the projected categorical distribution known, we can now express multinomial logistic regression as a generalised linear model.
\begin{algorithm}[tb]
   \caption{Multinomial Logistic Regression}
   \label{algorithm:multinomial_logistic}
\begin{algorithmic}
   \STATE {\bfseries Input:} sample $\mathcal{S}\subset\mathcal{D}$, number of iterations $T$, function $u=\text{softmax}^+$.
   \STATE Initialise $\mathbf{W}$ and $\mathbf{b}$.
   \FOR{$i=1$ {\bfseries to} $T$}
   \FOR{each $(\mathbf{x}_n, y_n) \in \mathcal{S}$}
   \STATE Compute $\mathbf{z}_n = \mathbf{W}\mathbf{x}_n + \mathbf{b}$.
   \STATE Compute $\hat{p}(\mathbf{z}_n)=u(\mathbf{z}_n)$.
   \ENDFOR
   \STATE Compute $\mathbb{E}_{\mathcal{S}}[\mathcal{L}((\Pi^{-1} \circ \hat{p})(\mathbf{z}), y)]$ by Monte Carlo where $\mathcal{L}$ is the log-likelihood of the Categorical distribution.
   \STATE Update $\mathbf{W}$ and $\mathbf{b}$ by gradient descent.
   \ENDFOR
   \STATE {\bfseries Output:} $\mathbf{W}$ and $\mathbf{b}$.
\end{algorithmic}
\end{algorithm}

Given a dataset $\mathcal{D}=\{(\mathbf{x}_n, y_n)\}_{n=1}^N$, we have the classification model
\begin{align*}
    \Tilde{y}_n|\mathbf{x}_n &\sim \text{projCategorical}\bigl(\hat{p}(\mathbf{z}_n)\bigr) \text{ where $\mathbf{z}_n = \mathbf{W}\mathbf{x}_n + \mathbf{b}$}
\end{align*}
or equivalently,
\begin{align*}
    y_n|\mathbf{x}_n &\sim \text{Categorical}\bigl((\Pi^{-1} \circ \hat{p})(\mathbf{z}_n)\bigr) \text{ where $\mathbf{z}_n = \mathbf{W}\mathbf{x}_n + \mathbf{b}$}
\end{align*}
where $\mathbf{W}\in \R^{(C-1)\times p}$, $\mathbf{b}\in \R^{C-1}$ and $\hat{p}(\mathbf{z}_n) = u(\mathbf{z}_n)$ with $u$ being the $\text{softmax}^+$ function. Algorithm \ref{algorithm:multinomial_logistic} describes multinomial logistic regression in detail.

\section{Proof of Theorem \ref{theorem:composition_gradient_iff_psd}}
\label{appendix:composition_iff_psd}
(1) $\implies$ (2). This follows from Schwarz's theorem on the equality of mixed partial derivatives. (2) $\implies$ (3). This follows from Theorem \ref{theorem:matrix_products_psd_iff_symmetric} since the Jacobian of a composite function is a product of matrices. (3) $\implies$ (4). This follows from Theorem \ref{theorem:monotone_iff_jacobian_psd}. We are left to prove (4) $\implies$ (1).

(4) $\implies$ (1). Since $f\circ g$ is monotone and continuous, it follows from Theorem \ref{theorem:continuous_monotone_implies_maximally_monotone} that $f\circ g$ is maximally monotone. From Theorem \ref{theorem:asplund_composition_maximal_monotone}, $(f \circ g)(\mathbf{x}) = \nabla F(\mathbf{x}) + L\mathbf{x}$ for a differentiable convex function $F$ and a skew-symmetric matrix $L$. Since $f \circ g$ is differentiable then it follows that $F$ is twice-differentiable. This gives us $J_{f \circ g} = \nabla^2 F + L$ where $\nabla^2 F$ is symmetric by Schwarz's theorem on the equality of mixed partial derivatives. Theorems \ref{theorem:monotone_iff_jacobian_psd} and \ref{theorem:matrix_products_psd_iff_symmetric} tell us that $J_{f \circ g}$ is also symmetric. As $J_{f \circ g}$ and $\nabla^2 F$ are both symmetric, then we must have $L = 0 = L^{\top}$. That is, $f \circ g = \nabla F$ where $F$ is twice-differentiable and convex.

\section{Proof of Theorem \ref{theorem:maps_with_pd_jacobians_closed_under_composition}}
\label{appendix:maps_with_pd_jacobians_closed_under_composition}
Fix $\mathbf{x} \in \R^{C-1}$. The Jacobian of $f \circ g$ is given by
\begin{align*}
    J_{f\circ g}(\mathbf{x}) = J_{f}(g(\mathbf{x})) J_{g}(\mathbf{x}).
\end{align*}
Here we aim to prove that $J_{f\circ g}(\mathbf{x})$ is positive definite. We first note that $J_{g}(\mathbf{x})$ is invertible since $|J_{g}(\mathbf{x})|>0$. Now, note that $J_{f\circ g}(\mathbf{x})$ is similar to the matrix 
\begin{align*}
    (J_{g}(\mathbf{x})) ^{\frac{1}{2}}J_{f}(g(\mathbf{x})) J_{g}(\mathbf{x})(J_{g}(\mathbf{x})) ^{-\frac{1}{2}} &= (J_{g}(\mathbf{x}))^{\frac{1}{2}}J_{f}(g(\mathbf{x})) (J_{g}(\mathbf{x}))^{\frac{1}{2}}\\
    &= (J_{g}(\mathbf{x}))^{\frac{1}{2}}(J_{f}(g(\mathbf{x})))^{\frac{1}{2}}(J_{f}(g(\mathbf{x})))^{\frac{1}{2}} (J_{g}(\mathbf{x}))^{\frac{1}{2}}
\end{align*}
where the square roots of the matrices $J_{f}(g(\mathbf{x}))$ and $J_{g}(\mathbf{x})$ are respectively given by $(J_{f}(g(\mathbf{x})))^{\frac{1}{2}}$ and $(J_{g}(\mathbf{x}))^{\frac{1}{2}}$ with both known to be symmetric and positive definite since $J_{f}(g(\mathbf{x}))$ and $J_{g}(\mathbf{x})$ are symmetric and positive definite. For any $\mathbf{z} \in \R^{C-1}$, we note that 
\begin{align*}
    \mathbf{z}^{\top}(J_{g}(\mathbf{x}))^{\frac{1}{2}}J_{f}(g(\mathbf{x})) (J_{g}(\mathbf{x}))^{\frac{1}{2}} \mathbf{z} &= \mathbf{z}^{\top}(J_{g}(\mathbf{x}))^{\frac{1}{2}}(J_{f}(g(\mathbf{x})))^{\frac{1}{2}}(J_{f}(g(\mathbf{x})))^{\frac{1}{2}} (J_{g}(\mathbf{x}))^{\frac{1}{2}} \mathbf{z}\\
    &=\|(J_{f}(g(\mathbf{x})))^{\frac{1}{2}} (J_{g}(\mathbf{x}))^{\frac{1}{2}} \mathbf{z}\|_2^2\\
    &\geq 0.
\end{align*}
We note the second equality follows from the symmetry of $(J_{f}(g(\mathbf{x})))^{\frac{1}{2}}$ and $(J_{g}(\mathbf{x}))^{\frac{1}{2}}$. It follows that $(J_{g}(\mathbf{x}))^{\frac{1}{2}}J_{f}(g(\mathbf{x})) (J_{g}(\mathbf{x}))^{\frac{1}{2}}$ is positive semi-definite and has non-negative eigenvalues.  As $J_{g}(\mathbf{x})$ and $J_{f}(g(\mathbf{x}))$ are positive definite, we also have 
\begin{align*}
    \left|(J_{g}(\mathbf{x}))^{\frac{1}{2}}J_{f}(g(\mathbf{x})) (J_{g}(\mathbf{x}))^{\frac{1}{2}}\right| = \left|J_{f}(g(\mathbf{x}))\right| \left|(J_{g}(\mathbf{x}))\right| > 0.
\end{align*}

It follows that all eigenvalues of $(J_{g}(\mathbf{x}))^{\frac{1}{2}}J_{f}(g(\mathbf{x})) (J_{g}(\mathbf{x}))^{\frac{1}{2}}$ must be positive so $(J_{g}(\mathbf{x}))^{\frac{1}{2}}J_{f}(g(\mathbf{x})) (J_{g}(\mathbf{x}))^{\frac{1}{2}}$ is positive definite. We can denote the eigenvalues $\lambda_1, \dots, \lambda_{C-1} \in \R$ such that $\lambda_1 \leq \lambda_2 \leq \dots \leq \lambda_{C-1}$. Since similar matrices have the same eigenvalues, it follows that $\lambda_1, \dots, \lambda_{C-1}$ are also the eigenvalues of $J_{f\circ g}(\mathbf{x})$. We note that $J_{f\circ g}(\mathbf{x})$ is not assumed to be symmetric so we cannot utilise Lemma \ref{lemma:definiteness_eigenvalues} to deduce positive definiteness of $J_{f\circ g}(\mathbf{x})$ here.

Denote $S = \frac{1}{2}(J_{f\circ g}(\mathbf{x}) + (J_{f\circ g}(\mathbf{x}) )^{\top})$ and $A = \frac{1}{2}(J_{f\circ g}(\mathbf{x}) - (J_{f\circ g}(\mathbf{x})) ^{\top})$ as the symmetric and skew-symmetric parts of $J_{f\circ g}(\mathbf{x})$ respectively. It is well known that for any skew-symmetric matrix $A$ and any $\mathbf{z} \in \R^{C-1}$, we have $\mathbf{z}^{\top}A \mathbf{z} = 0$. To prove that $J_{f\circ g}(\mathbf{x})$ is positive definite, it suffices to prove that $\mathbf{z}^{\top} J_{f\circ g}(\mathbf{x}) \mathbf{z}=\mathbf{z}^{\top}S \mathbf{z} > 0$ for any $\mathbf{z} \in \R^{C-1} \setminus \{\mathbf{0}\}$.

Firstly, recall that all eigenvalues of $J_{f\circ g}(\mathbf{x})$ are real and positive, and the fact that the transpose of $J_{f\circ g}(\mathbf{x})$, $(J_{f\circ g}(\mathbf{x}))^{\top}$, has the same eigenvalues as $J_{f\circ g}(\mathbf{x})$. That is, all eigenvalues of $(J_{f\circ g}(\mathbf{x}))^{\top}$ are real and positive. Secondly, $S = \frac{1}{2}(J_{f\circ g}(\mathbf{x}) + (J_{f\circ g}(\mathbf{x}) )^{\top})$ is symmetric so all of its eigenvalues must be real. Hence, Theorem \ref{theorem:eigenvalues_matrix_sum} gives us the following bound for the smallest eigenvalue $\lambda_1(S)$
\begin{align*}
    \lambda_1(S) &\geq \lambda_1\left(\frac{1}{2}J_{f\circ g}(\mathbf{x})\right) + \lambda_1\left(\frac{1}{2}(J_{f\circ g}(\mathbf{x}) )^{\top}\right)\\
    &=\frac{1}{2}\bigg(\lambda_1(J_{f\circ g}(\mathbf{x})) + \lambda_1((J_{f\circ g}(\mathbf{x}) )^{\top})\bigg)\\
    &>0.
\end{align*}

The Rayleigh quotient for $S$ and any $\mathbf{z} \in \R^{C-1} \setminus \{\mathbf{0}\}$, is given by $\frac{\mathbf{z}^{\top} S \mathbf{z}}{\|\mathbf{z}\|_2^2}$, and satisfies the inequality 
\begin{align*}
    \lambda_1(S) \leq \frac{\mathbf{z}^{\top} S \mathbf{z}}{\|\mathbf{z}\|_2^2} \leq \lambda_{C-1}(S).
\end{align*} 

Hence, we have
\begin{align*}
    \frac{\mathbf{z}^{\top} S \mathbf{z}}{\|\mathbf{z}\|_2^2} \geq \lambda_1(S) > 0 \text{ for all $\mathbf{z} \in \R^{C-1} \setminus \{\mathbf{0}\}$ }.
\end{align*}
Thus, $\mathbf{z}^{\top} J_{f\circ g}(\mathbf{x}) \mathbf{z} = \mathbf{z}^{\top} S \mathbf{z} >0, \forall\mathbf{z} \in \R^{C-1} \setminus \{\mathbf{0}\}$ and so, $J_{f\circ g}(\mathbf{x})$ is positive definite. This holds for arbitrary $\mathbf{x} \in \R^{C-1}$ so it follows that $f\circ g$ is the gradient of a twice-differentiable convex function $F$ by Theorem \ref{theorem:composition_gradient_iff_psd} with $F$ being strictly convex since $J_{f\circ g}(\mathbf{x})$ is positive definite. In other words, $f\circ g$ is the gradient of a twice-differentiable Legendre function.

\section{Proof of Theorem \ref{theorem:legendretron_is_legendre}}
\paragraph{Proof of Properties of LogSumExp$^+$ and softmax$^+$}
Since positive definiteness of $J_u(\mathbf{x})$ for all $\mathbf{x} \in \R^{C-1}$ implies strict convexity of $f$ and strict convexity of $f$ implies invertibility of $u$, it suffices to prove that $J_u(\mathbf{x})$ is positive definite for all $\mathbf{x} \in \R^{C-1}$.

Fix $\mathbf{x} \in \R^{C-1}$. For ease of notation, we denote $M$ as $J_u(\mathbf{x})$ where $M_{ij}$ refers to the entry within the $i$-th row and $j$-th column of $J_u(\mathbf{x})$. Consider any row $i\in \{1,\dots,C-1\}$. We have
\begin{align*}
    M_{ii} &= \frac{\exp(x_i)}{1+\sum_{k=1}^{C-1} \exp(x_k)}\left(1 - \frac{\exp(x_i)}{1+\sum_{k=1}^{C-1} \exp(x_k)}\right),\\
    M_{ij} &= -\frac{\exp(x_i)}{1+\sum_{k=1}^{C-1} \exp(x_k)}\frac{\exp(x_j)}{1+\sum_{k=1}^{C-1} \exp(x_k)}.
\end{align*}
Observe that $M_{ii} - \sum_{j\neq i}|M_{ij}| = \frac{\exp(x_i)}{1+\sum_{k=1}^{C-1} \exp(x_k)}\left(1- \frac{\sum_{k=1}^{C-1}\exp(x_k)}{1+\sum_{k=1}^{C-1} \exp(x_k)} \right) >0$. This holds for arbitrary $i\in \{1,\dots,C-1\}$ so it follows that $J_u(\mathbf{x})$ is strictly diagonally dominant. This implies that $J_u(\mathbf{x})$ is positive definite so it follows that $f$ is strictly convex. This completes the proof of the key properties of the $\text{LogSumExp}^+$ function and its gradient $\text{softmax}^+$.

\paragraph{Proof of functions learned by \textsc{LegendreTron} are inverse canonical links}
We first note that $v^{-1} = (\nabla g_1) \circ (\nabla g_2) \circ \dots \circ (\nabla g_B)$ is indeed invertible since the RHS is invertible by the strong convexity of $g_1, g_2, \dots, g_B$. Since each $\nabla g_i$ is symmetric and positive definite, it follows that $v^{-1}$ is the gradient of a twice-differentiable Legendre function by applying Theorem \ref{theorem:maps_with_pd_jacobians_closed_under_composition} recursively. It follows from Theorem \ref{theorem:composition_gradient_iff_psd} that $J_{v^{-1}}(\mathbf{x})$ is symmetric and positive semi-definite for all $\mathbf{x} \in \R^{C-1}$. Due to the strong convexity of each $g_i$, we also have that $|J_{v^{-1}}(\mathbf{x})| > 0$ so $J_{v^{-1}}(\mathbf{x})$ is positive definite for all $\mathbf{x} \in \R^{C-1}$.

Recall that $\text{LogSumExp}^+$ is twice-differentiable with gradient  $u=\text{softmax}^+$ and Hessian $J_u(\mathbf{x})$ being strictly diagonally dominant. That is, $J_u(\mathbf{x})$ is symmetric and positive definite for all $\mathbf{x} \in \R^{C-1}$. Applying Theorem \ref{theorem:maps_with_pd_jacobians_closed_under_composition} on $u\circ v^{-1}$ allows us to deduce that $u\circ v^{-1}$ is the gradient of a twice-differentiable Legendre function that maps to $\Tilde{\Delta}^{C-1}$ so $u\circ v^{-1}$ can be set as the inverse of an implicit canonical link function.

\section{Proof of Corollary \ref{corollary:uv_approximates_arbitrary_legendre_gradients}}
\label{appendix:uv_approximates_arbitrary_legendre_gradients}
Let $u = \text{softmax}^+$ and fix $g$ to be the gradient of a twice-differentiable Legendre function with positive Hessian everywhere. Note that $\text{LogSumExp}^+$ is twice-differentiable and Legendre so $u^{-1}$ and $g$ satisfy the sufficient conditions of Theorem \ref{theorem:maps_with_pd_jacobians_closed_under_composition}. It follows that $u^{-1} \circ g$ is the gradient of a twice-differentiable Legendre function defined on a compact set $\Omega$. The result then follows from using Proposition 3 of \citet{huang2021cpflows}.

\section{Experimental Details}
\label{appendix:experimental_details}

\subsection{Network Architecture and Optimisation Details}
Experiment details on architecture and optimisation parameters for \textsc{LegendreTron} (LT) and multinomial logistic regression (MLR). Here we denote $\alpha$ as the learning rate, $\lambda$ as weight decay, $\gamma$ as the multiplicative rate of decay applied to $\alpha$ every $S$ epochs through a step-wise learning rate scheduler. We used the Adam optimiser for all experiments.

\begin{tabular}{lccccccccc}
\toprule
Dataset(s) & Model & $B$ & $H$ & $M$ & $\alpha$ & $\gamma$ & $S$ & Epochs & Batch Size\\
\midrule
MNIST/FMNIST/KMNIST & LT & 1 & 4 & 4 & 0.001 & 0.7 & 4 & 200 & 128 \\
MNIST/FMNIST/KMNIST & MLR & $\backslash$ & $\backslash$ & $\backslash$ & 0.001 & 0.7 & 4 & 200 & 128 \\
aloi & LT & 2 & 2 & 4 & 0.01 & 0.95 & 4 & 360 & 64 \\
aloi & MLR & $\backslash$ & $\backslash$ & $\backslash$ & 0.01 & 0.95 & 4 & 360 & 64 \\
LIBSVM/UCI/Statlog (other datasets) & LT & 2 & 2 & 4 & 0.01 & 0.95 & 4 & 240 & 64 \\
LIBSVM/UCI/Statlog (other datasets) & MLR & $\backslash$ & $\backslash$ & $\backslash$ & 0.01 & 0.95 & 4 & 240 & 64 \\
\bottomrule
\end{tabular}

\subsection{LogSumExp trick for softmax$^+$}
Let $u=\text{softmax}^+$ and consider $\mathbf{x} \in \R^{C-1}$. We have 
\begin{align*}
    \log(\Pi^{-1}(u(\mathbf{x}))) = \left(\log\left(\frac{\exp(x_1)}{1+\sum_{k=1}^{C-1} \exp(x_k)}\right), \dots,\log\left(\frac{\exp(x_{C-1})}{1+\sum_{k=1}^{C-1} \exp(x_k)}\right),\log\left(\frac{1}{1+\sum_{k=1}^{C-1} \exp(x_k)}\right) \right)^{\top}
\end{align*}
where $\log$ on the LHS is applied elementwise. We seek an alternate expression for $\log(\Pi^{-1}(u(\mathbf{x})))$ that is numerically stable.

Let $x^* = \max(x_1, \dots,x_{C-1})$ and $S=\exp(-x^*)+\sum_{k=1}^{C-1} \exp(x_k-x^*)$. We can write 
\begin{align*}
    \log(\Pi^{-1}(u(\mathbf{x}))) &= \left(\log\left(\frac{\exp(x_1-x^*)}{S}\right), \dots,\log\left(\frac{\exp(x_{C-1}-x^*)}{S}\right),\log\left(\frac{\exp(-x^*)}{S}\right) \right)^{\top}\\
    &= \left(x_1-x^* - \log(S), \dots,x_{C-1}-x^* - \log(S),-x^* - \log(S) \right)^{\top}.
\end{align*}
It can be observed that this expression is numerically stable for all large values of $x^*$.


\end{document}